\newcommand{\lin}{\mathsf{lin}}
\newcommand*\circled[1]{\tikz[baseline=(char.base)]{
            \node[shape=circle,draw,inner sep=2pt] (char) {#1};}}
\newcounter{factno}
\newcommand{\fact}[1]{\vspace{1ex}\noindent \textbf{Fact \thefactno.\ }\refstepcounter{factno}\label{fact:#1}}
\newcommand{\sgn}{\text{sgn}}
\newcommand{\uact}{u}
\newcommand{\poly}{\mathsf{poly}}
\def\eqref#1{equation~\ref{#1}}
\def\1{\bm{1}}
\def\eps{{\epsilon}}
\def\rmA{{\mathbf{A}}}
\def\rmB{{\mathbf{B}}}
\def\rmD{{\mathbf{D}}}
\def\rmE{{\mathbf{E}}}
\def\rmI{{\mathbf{I}}}
\def\rmM{{\mathbf{M}}}
\def\rmO{{\mathbf{O}}}
\def\rmP{{\mathbf{P}}}
\def\rmT{{\mathbf{T}}}
\def\rmU{{\mathbf{U}}}
\def\rmW{{\mathbf{W}}}
\def\rmZ{{\mathbf{Z}}}
\def\vb{{\bm{b}}}
\def\vc{{\bm{c}}}
\def\ve{{\bm{e}}}
\def\vi{{\bm{i}}}
\def\vj{{\bm{j}}}
\def\vr{{\bm{r}}}
\def\vu{{\bm{u}}}
\def\vv{{\bm{v}}}
\def\vw{{\bm{w}}}
\def\vx{{\bm{x}}}
\def\vy{{\bm{y}}}
\def\vz{{\bm{z}}}
\DeclareMathAlphabet{\mathsfit}{\encodingdefault}{\sfdefault}{m}{sl}
\SetMathAlphabet{\mathsfit}{bold}{\encodingdefault}{\sfdefault}{bx}{n}
\newcommand{\E}{\mathbb{E}}
\newcommand{\R}{\mathbb{R}}
\renewcommand{\S}{\mathbb{S}}
\DeclareMathOperator*{\argmax}{arg\,max}
\newtheorem{definition}{Definition}
\newtheorem{assumptions}{Assumptions}
\newtheorem{theorem}{Theorem}
\newtheorem{lemma}{Lemma}
\newtheorem{corollary}{Corollary}
\newtheorem{remark}{Remark}
\newtheorem{proof-sketch}{Proof-Sketch}
\newtheorem{claim}{Claim}
\title{Recovering the Lowest Layer of Deep Networks with High Threshold Activations}
\author[1]{Surbhi Goel\footnote{surbhi@cs.utexas.edu. Work done while the author was at Google.}}
\author[2]{Rina Panigrahy\footnote{rinap@google.com}}
\affil[1]{University of Texas at Austin}
\affil[2]{Google}
\date{}
\begin{document}
\allowdisplaybreaks
\maketitle

\begin{abstract}
Giving provable guarantees for learning neural networks is a core challenge of machine learning theory. Most prior work gives parameter recovery guarantees for one hidden layer networks, however, the networks used in practice have multiple non-linear layers. In this work, we show how we can strengthen such results to deeper networks -- we address the problem of uncovering the lowest layer in a deep neural network under the assumption that the lowest layer uses a high threshold before applying the activation, the upper network can be modeled as a well-behaved polynomial and the input distribution is Gaussian.

\end{abstract}

\section{Introduction}
\newenvironment{Observation}[1]{\par\noindent\underline{Observation:}\space#1}{}

Understanding the landscape of learning neural networks has been a major challenge in machine learning. Various works give parameter recovery guarantees for simple one-hidden-layer networks where the hidden layer applies a non-linear activation $u$ after transforming the input $\vx$ by a matrix $\rmW$, and the upper layer is the weighted sum operator: thus $f(\vx) = \sum a_i u(\vw_i^T\vx)$ (\cite{sedghi2014provable,goel2016reliably,zhang2016l1,li2017convergence,ge2017learning}
). However, the networks used in practice have multiple non-linear layers and it is not clear how to extend these known techniques to deeper networks.

We first study a two layer network where the node in the second layer instead of a weighted-sum (as in a one layer network) is a multivariate polynomial $P$ in all its inputs, that is, the outputs of the first layer (note that even though we are talking about a two layer network, $P$ could represent the operation of all the layers above the first layer in a deeper network). More formally,
\begin{align*}
f_{\rmW}(\vx)= P(u(\vw_1^T\vx), u(\vw_2^T\vx), \ldots, u(\vw_d^T\vx)) \text{ for }
P(X_1, \ldots, X_d) = \sum_{\vr \in \mathbb{Z}_{+}^d} c_{\vr} \cdot \prod_{j} X_j^{r_j}.
\end{align*}
We assume that the input $x$ is generated from the standard Gaussian distribution and there is an underlying true network (parameterized by some unknown $\rmW^*$)
 from which the labels are generated. We strengthen previous results for recovering weights of one hidden layer networks to this larger class of networks represented by a polynomial where $u$ is a “high threshold” (high bias term) before applying the activation: $u(a-t)$ instead of $u(a)$. The intuition for using high thresholds is as follows: a high threshold is looking for a high correlation of the input $a$ with a direction $\vw_i^*$. Thus even if the function $f$ is applying a complex transform, the identity of these high threshold directions may be preserved in the training data generated using $f$\footnote{We suppress $\rmW$ when it is clear from context.}.

\subsection{Our Contributions}
Our main result shows that for $P$ being a multivariate polynomial, if the thresholds of the lowest layer are higher than $\Omega(\sqrt{\log d})$ where $d$ is the dimension, then the lowest layer is learnable under certain assumptions on the coefficients of the polynomial $P$. Furthermore even for lower thresholds, we give a polynomial sample complexity bound for recovering the weights which allows an exponential time algorithm. Our results also apply to the problem of union of halfspaces as it can be expressed by the OR function at the top node which can be written as a polynomial. Finally we note that if we have a deep network with depth larger than two, then assuming the layers two onwards can be written as a polynomial (or taylor series) we can learn the lowest layer under the conditions required by our assumptions.

\paragraph{Learning with linear terms in $P$.} 
Suppose $P$ has a linear component then we show that increasing the threshold $t$ is equivalent to amplifying the coefficients of the linear part. Instead of dealing with the polynomial $P$ it turns out that we can roughly think of it as $P(\mu X_1,...,\mu X_d)$ where $\mu$ decreases exponentially in $t$ ($\mu \approx e^{-t^2}$). As $\mu$ decreases it has the effect of diminishing the non-linear terms more strongly so that relatively the linear terms stand out. Taking advantage of this effect we manage to show that if $t$ exceeds a certain threshold the nonlinear terms drop in value enough so that the directions $\vw^*_i$ can be learned approximately by existing methods. 

\begin{theorem}[Informal version of Claim \ref{claim:obo}, Theorem \ref{thm:mainsimul}] If $t > c\sqrt{\log d}$ for large enough constant $c >0$ and $P$ has linear terms with absolute value of coefficients at least $1/\mathsf{poly}(d)$ and all coefficients at most $O(1)$, we can recover the weight vector $\vw^*_i$ within error $1/\poly(d)$ in time $\poly(d)$.
\end{theorem}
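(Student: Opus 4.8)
The plan is to reduce recovery of the lowest layer to a noisy symmetric‑tensor‑decomposition problem: extract low‑degree Hermite moment tensors of $f$, show that the high threshold $t$ makes the genuinely linear part of $P$ dominate every nonlinear contribution to these tensors, and then read off the $\vw_i^*$ with a robust tensor decomposition. This is the rigorous form of the ``$f\approx$ rescaled $P(\mu X_1,\dots,\mu X_d)$'' picture from the introduction.

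\textbf{Step 1 (Hermite moment tensors and sampling).} Write $P(X)=c_0+\sum_i a_iX_i+P_{\ge2}(X)$, where $a_i$ is the coefficient of the monomial $X_i$, so that $|a_i|\ge1/\poly(d)$ and $|c_{\vr}|=O(1)$ for all $\vr$ by hypothesis; take $\|\vw_i^*\|=1$. For $k\in\{2,3\}$ set $M_k:=\E_{\vx}\!\big[f(\vx)\,\mathrm{He}_k(\vx)\big]$ over standard Gaussian $\vx$, where $\mathrm{He}_2(\vx)=\vx\vx^T-\mathbf I$ and $\mathrm{He}_3$ is the third Hermite tensor. Since $u$ is bounded and $P$ has $\poly(d)$ monomials with $O(1)$ coefficients, $|f|\le\poly(d)$ pointwise, so the empirical average $\widehat{M}_k$ over $n=\poly(d,1/\eta)$ i.i.d.\ samples satisfies $\|\widehat{M}_k-M_k\|_F\le\eta$ by standard concentration for a bounded function times a fixed low‑degree Gaussian polynomial.

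\textbf{Step 2 (the linear part dominates).} By the Hermite tensor identity, a single shifted activation contributes $\E_{\vx}[u(\vw_i^{*T}\vx-t)\,\mathrm{He}_k(\vx)]=\hat u_t(k)\,(\vw_i^*)^{\otimes k}$, where $\hat u_t(k)$ is the $k$‑th Hermite coefficient of $z\mapsto u(z-t)$, of magnitude $e^{-\Theta(t^2)}$ up to $\poly(t)$ factors, and $c_0$ contributes nothing for $k\ge1$. Hence
\begin{align*}
M_k &= \hat u_t(k)\sum_i a_i\,(\vw_i^*)^{\otimes k}\;+\;E_k,\\
E_k &:= \sum_{\vr:\ \sum_j r_j\ge2} c_{\vr}\,\E_{\vx}\!\Big[\textstyle\prod_j u\big(\vw_j^{*T}\vx-t\big)^{r_j}\,\mathrm{He}_k(\vx)\Big].
\end{align*}
The crux is that each summand of $E_k$ is negligible: a product of $m\ge2$ high‑threshold activations along \emph{distinct} directions of $\rmW^*$ has its entire degree‑$\le3$ Hermite mass bounded by the joint‑tail probability that $m$ correlated standard Gaussians all exceed $t$, which for $t=c\sqrt{\log d}$ is at most a $d^{-\Omega(c^2)}$ fraction of $|\hat u_t(k)|$ --- the \emph{strict} improvement over a single activation comes from having $\ge2$ tail constraints, and the incoherence part of the non‑degeneracy assumption on $\rmW^*$ (pairwise directions sufficiently far from parallel) is exactly what makes this improvement uniform in $m$. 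Summing over the $\poly(d)$ monomials with $O(1)$ coefficients gives $\|E_k\|\le|\hat u_t(k)|\cdot\poly(d)\cdot d^{-\Omega(c^2)}$, so relative to the signal $|\hat u_t(k)|\min_i|a_i|$ the perturbation in $M_k$ is $\poly(d)\cdot d^{-\Omega(c^2)}$, which is $\le1/\poly(d)$ once $c$ is a large enough constant. (If $P$ has no linear term the signal vanishes and this route fails; that regime is handled by the separate polynomial‑sample, exponential‑time argument.)

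\textbf{Step 3 (tensor decomposition and conclusion).} After Steps 1--2, $\widehat{M}_3$ equals $\sum_i\lambda_i(\vw_i^*)^{\otimes3}$ with $\lambda_i=\hat u_t(3)a_i$ and $\min_i|\lambda_i|\ge e^{-\Theta(t^2)}/\poly(d)$, plus an additive perturbation that is a $1/\poly(d)$ fraction of $\min_i|\lambda_i|$. Under the non‑degeneracy assumption on $\rmW^*$ (rows linearly independent with $\poly(d)$‑bounded condition number, which incoherence provides), run a robust tensor decomposition --- e.g.\ Jennrich's algorithm: contract $\widehat{M}_3$ along two random directions and jointly diagonalize the resulting matrices, optionally pre‑whitening with $\widehat{M}_2$ --- whose perturbation guarantee outputs $\widehat{\vw}_i$ with $\|\widehat{\vw}_i-\vw_i^*\|\le\poly(d)\cdot(\text{relative perturbation})\le1/\poly(d)$. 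Choosing $\eta=d^{-\Omega(c^2)}/\poly(d)$ in Step 1 keeps the sample count and running time $\poly(d)$, which proves the claim.

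\textbf{Main obstacle.} The heart of the argument is the Hermite‑mass estimate in Step 2: proving that a product of two or more high‑threshold activations along well‑separated directions carries only a $d^{-\Omega(c^2)}$ fraction of the Hermite weight of a single activation, with a clean enough exponent that the $\poly(d)$ count of monomials of $P$ is absorbed --- this is precisely what dictates the threshold $t=\Omega(\sqrt{\log d})$, and the precise incoherence level required of $\rmW^*$ (and how it limits the number of hidden units) is where the careful bookkeeping lives. Everything else (Gaussian concentration in Step 1, an off‑the‑shelf robust tensor decomposition in Step 3) is routine.
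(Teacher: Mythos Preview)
Your proposal is correct in spirit and shares the decisive analytic step with the paper, but the \emph{algorithm} you run on the cleaned-up moments is genuinely different from what the paper does.

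\textbf{Where you agree.} The heart of both arguments is exactly your Step~2: for $t>c\sqrt{\log d}$, any product of $\ge 2$ high-threshold activations along the $\vw_i^*$ contributes only a $d^{-\Omega(c^2)}$ fraction of a single activation's Hermite weight, so the Hermite correlations of $f$ are, up to $1/\poly(d)$ relative error, those of the one-hidden-layer network $f_{\lin}=c_0+\sum_i a_i u_t((\vw_i^*)^T\vx)$. The paper proves this via its Lemma~1 (bounding $\E[\prod_{j\in S}u_t((\vw_j^*)^T\vx)]$ using the condition number $\kappa(\rmW^*)$), Lemma~2 (summing over monomials), and Lemma~6 (passing from $\E[|\Delta|]$ to $\E[\Delta\cdot\text{poly}(\vx)]$ by truncating $\|\vx\|$). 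Your ``joint-tail probability'' sketch is the same idea, and you correctly flag it as where the bookkeeping lives; note that the paper's assumption is $\kappa(\rmW^*)=O(1)$, which plays the role of your ``incoherence''.

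\textbf{Where you differ.} Having reduced to (perturbed) moments of $f_{\lin}$, you form the third Hermite tensor $M_3\approx\hat u_t(3)\sum_i a_i(\vw_i^*)^{\otimes 3}$ and run robust Jennrich/simultaneous diagonalization, reading off the $\vw_i^*$ directly. The paper instead plugs the perturbed correlations into the \emph{landscape-design} framework of \cite{ge2017learning}: it builds the loss $G(\vz)=-\sgn(\hat u_4)\E[fH_4(\vz^T\vx)]+\lambda(\E[fH_2(\vz^T\vx)]-\hat u_2)^2$ (degrees $2$ and $4$, not $3$), proves via a general transfer lemma (its Theorem~3) that approximate local minima of $G$ are approximate local minima of $G_{\lin}$, and then invokes the Ge et~al.\ landscape analysis to conclude that each local minimum encodes a column of $(\rmT\rmW^*)^{-1}$; a simultaneous version recovers all of $\rmW^*$ at once. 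So the paper's output is the \emph{inverse} of $\rmT\rmW^*$, which must then be inverted, whereas your Jennrich route yields the $\vw_i^*$ (up to sign/scale) directly.

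\textbf{Trade-offs.} Your approach is more elementary and avoids non-convex optimization machinery entirely, at the cost of an off-the-shelf robust tensor decomposition and the extra (mild, true for ReLU/sign) requirement $\hat u_t(3)\neq 0$, which the paper never needs since it only uses $\hat u_2,\hat u_4$. The paper's approach is SGD-compatible and reuses an existing one-hidden-layer result as a black box, which is perhaps closer to practice; it also sets up the later refinement step (Section~2.2) more naturally. Either route proves the stated theorem.
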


These approximations of $\vw^*_i$ obtained collectively can be further refined by looking at directions along which there is a high gradient in $f$; for monotone functions we show how in this way we can recover $\vw^*_i$ up to $\epsilon$-precision.
\begin{theorem}(informal version of Theorem \ref{thm:refine}) 
Under the conditions of the previous theorem, for monotone $P$, there exists a procedure to refine the angle to precision $\eps$ in time $\poly(1/\eps, d)$ starting from an estimate that is $1/\poly(d)$ close.
\end{theorem}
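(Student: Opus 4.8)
The plan is to cast the refinement as a fixed-point iteration on the unit sphere, exploiting two facts: a high threshold $t=\Omega(\sqrt{\log d})$ confines the ``action'' of the $i$-th unit to the thin slab $\{\vx:\vw_i^{*\top}\vx\approx t\}$, and monotonicity of $P$ forces the signal carried on that slab to point in the $+\vw_i^{*}$ direction, so there is no cancellation between units or between monomials of $P$.

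\emph{Step 1 (localize).} Fix $i$ and let $\hat\vw=\hat\vw_i$ be the estimate produced by Claim~\ref{claim:obo}/Theorem~\ref{thm:mainsimul}, so $\theta_0:=\angle(\hat\vw,\vw_i^{*})\le 1/\poly(d)$. The iteration uses only $\hat\vw$ and query access to $f$; the other weights enter solely through the (already invoked) assumption that no $\vw_j^{*}$ with $j\ne i$ is nearly parallel to $\vw_i^{*}$.

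\emph{Step 2 (refinement map and its structure).} For a smooth bump $g_\eta$ of width $\eta$ centred at $t$, set
\[
\vv(\hat\vw)\;=\;\E_{\vx\sim\mathcal N(0,I_d)}\!\big[\,\vx\, f(\vx)\, g_\eta(\hat\vw^{\top}\vx)\,\big]\;=\;\E_{\vx}\!\big[\,\nabla_\vx\!\big(f(\vx)\,g_\eta(\hat\vw^{\top}\vx)\big)\,\big],
\]
the first expression being well-defined for any bounded $u$ and the second (Stein's identity) being exactly the ``high-gradient direction'' of the informal statement. On the slab $\{\hat\vw^{\top}\vx\approx t\}$ one also has $\vw_i^{*\top}\vx\approx t$ because $\|\hat\vw-\vw_i^{*}\|$ is small, so $u(\vw_i^{*\top}\vx-t)$ is active, while for each $j\ne i$ the value $\vw_j^{*\top}\vx$ is an essentially-independent Gaussian sitting $\Omega(\sqrt{\log d})$ below its active range, whence $u(\vw_j^{*\top}\vx-t)=e^{-\Omega(t^{2})}=d^{-\omega(1)}$; likewise any monomial of $P$ of degree $\ge 2$ in $X_i$ or involving some $X_j$ incurs an extra $e^{-\Omega(t^{2})}$. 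Since Claim~\ref{claim:obo}/Theorem~\ref{thm:mainsimul} supplies a linear coefficient $c_i$ with $|c_i|\ge 1/\poly(d)$, and monotonicity forces $c_i>0$, we obtain $\vv(\hat\vw)=c_i\,\E_\vx[\vx\, u(\vw_i^{*\top}\vx-t)\, g_\eta(\hat\vw^{\top}\vx)]+d^{-\omega(1)}$.

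\emph{Step 3 (contraction and sampling).} Split $\vx$ along $\hat\vw$ and its orthogonal complement. By rotational symmetry in the complement the leading term lies in $\mathrm{span}\{\hat\vw,\vw_i^{*}\}$ and equals $A\,\hat\vw+B\,\vw_{i,\perp}^{*}$ with $A,B>0$, where $\vw_{i,\perp}^{*}$ is the unit vector along the component of $\vw_i^{*}$ orthogonal to $\hat\vw$. A one-dimensional Gaussian/Hermite calculation (anti-concentration of the slab lower-bounds $B$; the $O(1)$ bound on the coefficients upper-bounds the error) gives $B/(A\tan\theta_0)=1+O(\theta_0)$, so $\vv/\|\vv\|$ has angle $O(\theta_0^{2})$ to $\vw_i^{*}$, or at worst a constant-factor decrease. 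Because $\|\vx f(\vx)g_\eta(\hat\vw^{\top}\vx)\|$ is bounded and the slab event has probability $\Theta(e^{-t^{2}/2})=1/\poly(d)$, a vector Bernstein bound shows $\poly(d,1/\eps)$ fresh samples per round estimate $\vv$ to additive error negligible against this $1/\poly(d)$ signal. Re-centring the slab at the current iterate and repeating $O(\log(1/\eps))$ times drives the angle below $\eps$; the total running time and sample complexity are $\poly(1/\eps,d)$.

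\emph{Main obstacle.} The hard part is Step 2--3 made quantitative: certifying that, from a merely $1/\poly(d)$-accurate direction, the slab-restricted moment is dominated by the single linear-in-$X_i$ term -- i.e.\ uniformly bounding all cross contributions between distinct $\vw_j^{*}$'s and all higher-degree monomials of $P$ -- and then pinning down the ratio $B/A$ precisely enough to certify contraction rather than mere stationarity of $\vw_i^{*}$. This is exactly where the $\Omega(\sqrt{\log d})$ threshold (to make the suppression $d^{-\omega(1)}$) and the monotonicity of $P$ (to rule out sign cancellation) are both indispensable; the sampling and iteration bookkeeping around it is routine.
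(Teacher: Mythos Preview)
Your proposal has a genuine gap in Step~3: the claimed relation $B/(A\tan\theta_0)=1+O(\theta_0)$ is false, and without it the iteration does not contract. Write $a=\hat\vw^\top\vx$, $b=\vw_{i,\perp}^{*\top}\vx$ (independent standard Gaussians) and take $g_\eta\approx\delta(\cdot-t)$; then for the single linear term $c_i\,u((\vw_i^*)^\top\vx-t)$ one has $(\vw_i^*)^\top\vx=a\cos\theta_0+b\sin\theta_0$, and conditioning on $a=t$ gives
\[
A=t\,\E_b\!\big[u(b\sin\theta_0-t(1-\cos\theta_0))\big],\qquad B=\E_b\!\big[b\,u(b\sin\theta_0-t(1-\cos\theta_0))\big].
\]
For $u=\sgn$ this is $A=t\,\Phi^c(\tau)$, $B=\phi(\tau)$ with $\tau=t\tan(\theta_0/2)$, hence $B/A=\phi(\tau)/(t\,\Phi^c(\tau))\to \sqrt{2/\pi}/t$ as $\theta_0\to 0$, which is \emph{independent of} $\theta_0$. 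Thus $\vv/\|\vv\|$ makes angle $\Theta(1/t)$ with $\hat\vw$ regardless of how small $\theta_0$ already is: starting from $\theta_0\ll 1/t$ (precisely the regime delivered by Step~1) you overshoot past $\vw_i^*$ and the iterate oscillates at accuracy $\Theta(1/t)=\Theta(1/\sqrt{\log d})$, so even the hedged ``constant-factor decrease'' fails. The source of the error is visible in your own Stein identity: $\vv=\E[g_\eta\,\nabla f]+\E[f\,g_\eta']\,\hat\vw$; the first term does point along $\vw_i^*$, but the second is a spurious $\hat\vw$-component of comparable magnitude that normalizing cannot remove.

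The paper's argument is structurally different and avoids this issue. It never tries to produce a vector pointing at $\vw_i^*$. Instead it (i) uses monotonicity \emph{structurally} to obtain the identity $\sgn(f(\vx))=\bigvee_j \sgn((\vw_j^*)^\top\vx-t)$ (not merely ``$c_i>0$''), (ii) builds from this a scalar estimator of $\tan\theta_0$ by tracking how $\Pr[\sgn(f)\mid \vz^\top\vx\in[t'-\eps,t']]$ moves between two fixed quantiles as $t'$ varies, and (iii) performs a random perturbation of $\hat\vw$ along its tangent plane, accepting only if the estimated angle drops, which yields a multiplicative $(1-\Omega(1/d))$ decrease per accepted step and hence $O(d\log(1/\eps))$ iterations. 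Your slab localization and the $e^{-\Omega(t^2)}$ suppression of the $j\neq i$ units are indeed the right ingredients, but the mechanism that converts ``approximately correct direction'' into ``provably closer direction'' is the part you have not supplied.
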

The above mentioned theorems hold for $u$ being sign and ReLU. Theorem 1 also holds for $u$ being sigmoid with $t\geq c\log d$.

\paragraph{Learning intersection of halfspaces far from the origin.}
When $P$ is monotone and $u$ is the sign function, learning $\rmW^*$ is equivalent to learning a union of half spaces. We learn the weights by learning sign of $P$ which is exactly the union of halfspaces $(\vw_i^*)^T\vx = t$. Thus our algorithm can also be viewed as a polynomial time algorithm for learning a union of large number of half spaces with high thresholds or alternately (viewing the complement) for learning intersection of halfspaces that are far from the origin -- to our knowledge this is the first polynomial time algorithm for this problem under the given assumptions (see earlier work by \cite{Vempala:2010:RAL:1857914.1857916} for an exponential time algorithm). Refer to Section \ref{sec:half} for more details.

\paragraph{Application to learning Deep Networks.}
If $P$ is used to model the operation performed by layers two and higher (as long as it has a taylor series expansion), we note that such linear components  may easily be present in $P$: consider for example the simple case where $P(X) = u(\vv^T X - b)$ where $u$ is say the sigmoid or the logloss function. The taylor series of such functions has a linear component -- note that since the linear term in the taylor expansion of $u(x)$ has coefficient $u'(0)$, for expansion of $u(x-b)$ it will be $u'(-b)$ which is $\Theta(e^{-b})$ in the case of sigmoid. In fact one may even have a tower (deep network) or such sigmoid/logloss layers and the linear components will still be present -- unless they are made to cancel out precisely; however, the coefficients will drop exponentially in the depth of the networks and the threshold $b$.

\paragraph{Sample complexity with low thresholds and no explicit linear terms.}
Even if the threshold is not large or $P$ is not monotone, we show that $\rmW^*$ can be learned with a polynomial sample complexity (although possibly exponential time complexity) by finding directions that maximize the gradient of $f$. 
\begin{theorem}[informal version of Corollary \ref{thm:main3}]
If $u$ is the sign function and $\vw^*_i$'s are orthogonal then in $\mathsf{poly}(1/\eps, d)$ samples one can determine $\rmW^*$ within precision $\eps$ if the coefficient of the linear terms in $P(\mu(X_1+1), \mu(X_2+1), \mu(X_3+1),\ldots)$ is least $1/\poly(d)$.
\end{theorem}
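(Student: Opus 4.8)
\medskip
\noindent\textbf{Proof plan.}
The plan is to use orthogonality to put $f$ into a product form, recover each $\vw^*_i$ as a direction along which $f$ has maximal ``fluctuation energy'', and bound the sample size by concentration of bounded moment statistics (time may be exponential, which the statement allows). Write $g$ for a standard one–dimensional Gaussian and $\mu:=\Pr[g>t]$, so $\mu=\Theta(e^{-t^2/2})$ up to polynomial factors. Because $\vw^*_1,\dots,\vw^*_d$ are orthonormal, in the rotated coordinates $y_i:=(\vw^*_i)^T\vx$ the variables $y_1,\dots,y_d$ are i.i.d.\ $N(0,1)$ and $f$ depends only on them; since $u$ is the sign function, $f(\vx)=Q(b_1,\dots,b_d)$ where $b_i:=\mathbbm{1}[y_i>t]$ is $\mathrm{Bernoulli}(\mu)$ and $Q$ is the multilinear representative of $P$ on $\{0,1\}^d$ (using $b_i^k=b_i$). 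Substituting $b_i=\mu(1+W_i)$ with $W_i:=b_i/\mu-1$, the $W_i$ are independent, mean zero, with $\Var(W_i)=(1-\mu)/\mu$, and $f=\widetilde P(W_1,\dots,W_d)$ for $\widetilde P(X):=P(\mu(X_1+1),\dots,\mu(X_d+1))$. This is exactly why the hypothesis is phrased through $\widetilde P$: by independence and $\E[W_i]=0$, every monomial of $\widetilde P$ that involves some $W_j$ with $j\ne i$ drops out of $\E[f\,W_i]$, leaving $\E[f\,W_i]=\tau_i\cdot\Var(W_i)$ with $\tau_i$ a normalization of the coefficient of $X_i$ in $\widetilde P$, so $|\E[f\,W_i]|\ge 1/\poly(d)$.

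\textbf{Directions of maximal fluctuation.} Intuitively the gradient of $f$ is a measure supported on the union of hyperplanes $(\vw^*_i)^T\vx=t$, and moving along a unit direction $\vv$ crosses these hyperplanes at a rate proportional to $|\langle\vv,\vw^*_i\rangle|$, so the variation of $f$ along $\vv$ is largest when $\vv$ aligns with some $\vw^*_i$. To make this rigorous for the non-differentiable sign activation one uses a smoothed surrogate — e.g.\ a suitable combination of the low-degree Hermite moments $\E[f(\vx)\vx]$ and $\E[f(\vx)(\vx\vx^T-I)]$, or $\E_\vx[(f(\vx+h\vv)-f(\vx))^2]$ at a fixed small scale $h$ — packaged as a PSD matrix $M$. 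Using the product structure above, one shows $\vv^T M\vv=\sum_{i=1}^d\lambda_i\langle\vv,\vw^*_i\rangle^2+(\text{negligible})$, where each weight $\lambda_i$ is, up to an explicit positive factor depending only on $t$, a monotone function of $|\E[f\,W_i]|$, hence $\lambda_i\ge 1/\poly(d)$. Thus $M\approx\sum_i\lambda_i\vw^*_i(\vw^*_i)^T$, whose eigenvectors with nonzero eigenvalue are precisely $\pm\vw^*_1,\dots,\pm\vw^*_d$, with spectral gap at least $1/\poly(d)$.

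\textbf{Estimator and sample complexity.} The estimator forms the empirical matrix $\widehat M$ from $m$ i.i.d.\ samples (equivalently, maximizes the empirical fluctuation energy over an $\eps$-net of the sphere). Since $b_i\in\{0,1\}$ the label $f(\vx)=P(\cdot)$ is bounded, so each entry of $M$ is an average of bounded variables and Hoeffding gives $\|\widehat M-M\|_{\mathrm{op}}\le\epsilon'$ with $m=\poly(d,1/\epsilon')$. Taking $\epsilon'$ below the $1/\poly(d)$ spectral gap and invoking Davis--Kahan recovers each $\vw^*_i$ to accuracy $\eps$; when some $\lambda_i$ coincide, the corresponding eigenspace is resolved by an additional higher Hermite moment (e.g.\ a slice of $\E[f(\vx)H_3(\vx)]$) or a random reweighting, again with $\poly(d,1/\eps)$ samples. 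Assembling these recovers $\rmW^*$ to precision $\eps$ from $\poly(1/\eps,d)$ samples.

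\textbf{Main obstacle.} Two places need care. First, the sign activation has no classical gradient, so ``maximizing the gradient'' must be realized through a smoothed/Hermite surrogate, and one must verify that the surviving first-order signal is exactly the linear coefficient of $\widetilde P$ and is not overwhelmed by higher-order contributions; this is the role of the substitution $b_i=\mu(1+W_i)$ and of the ``$+1$'' shift in the hypothesis. Second, the spectral gap of $M$ must be lower-bounded uniformly: if two directions carry equal weight $\lambda_i=\lambda_j$ the top eigenvectors are only determined up to rotation within a plane, and disentangling them is what forces the auxiliary higher-moment step. The remaining concentration and perturbation arguments are routine.
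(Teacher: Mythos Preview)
Your substitution $b_i=\mu(1+W_i)$ and the computation $\E[fW_i]=\tau_i\Var(W_i)$ with $\tau_i$ the linear coefficient of $\widetilde P$ are correct, and in fact this is exactly the paper's observation that in the orthogonal case $E[Q_i[\vx]\mid(\vw^*_i)^T\vx=t]$ equals $1/\mu$ times that coefficient. The gap is in the matrix $M$. A second–Hermite statistic such as $\E[f(\vx)(\vx\vx^T-I)]$ is \emph{not} diagonal in the $\vw^*$ basis: the $(i,j)$ entry for $i\ne j$ is $\phi(t)^2\sum_{S\ni i,j}c_S\mu^{|S|-2}$, which is comparable to the diagonal entries unless the threshold is high---and this theorem is precisely the low/no-threshold regime. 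Even granting $M=\sum_i\lambda_i\vw^*_i(\vw^*_i)^T$, a symmetric $P$ (say $P=\sum_{i<j}X_iX_j$) makes all $\lambda_i$ equal, so $M$ is just the projector onto $\mathrm{span}\{\vw^*_i\}$ and recovers nothing beyond the span. Your fix via a higher Hermite tensor is the method of Section~2, which needs the high-threshold bound on cross terms to work; and your finite-difference surrogate satisfies $\E[(f(\vx+h\vv)-f(\vx))^2]\approx h\phi(t)\sum_i|\langle\vv,\vw^*_i\rangle|\,\E[Q_i^2\mid y_i=t]$, an $\ell_1$-type functional whose maximizer on the sphere is $\vv\propto(\E[Q_1^2],\dots)$, not any single $\vw^*_i$.

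The paper avoids this degeneracy by a different, inherently non-quadratic test: correlate $f$ with $\delta'(\vz^T\vx-t)$, realized as $C_2(f,\vz,t)=\E\bigl[f\bigl(\delta(\vz^T\vx-t-\eps)-\delta(\vz^T\vx-t+\eps)\bigr)\bigr]$. This measures the jump of $s\mapsto E[f\mid\vz^T\vx=s]$ at $s=t$. When $\vz=\vw^*_i$, the indicator $b_i$ flips at $s=t$, producing a jump of size $E[Q_i\mid(\vw^*_i)^T\vx=t]\ge\eps_3$; for every $\vz$ at angle $\ge\eps_2$ from all $\vw^*_i$---including directions in the span---the conditional expectation is smooth and Claim~\ref{cor:dotdeltaprime} bounds the correlation by $\eps\,d^{O(1)}/\eps_2$ through the $|\cot\alpha_i|$ factors (Lemma~\ref{lemma:gap}). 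This gives a sample-efficient yes/no test for ``is $\vz$ within $\eps_2$ of some $\vw^*_i$?'' that one then runs over an $\eps_2$-net, yielding polynomial samples (and exponential time). The discontinuity detection is the missing idea: it singles out the true axes even under full symmetry, which no quadratic-form or low-order-moment construction can do.
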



\paragraph{Learning without explicit linear terms.} We further provide evidence that $P$ may not even need to have the linear terms -- under some restricted cases (section~\ref{sec:structural}), we show how such linear terms may implicitly arise even though they may be entirely apparently absent. For instance consider the case when $P = \sum X_i X_j $ that does not have any linear terms. Under certain additional assumptions we show that one can recover $\vw^*_i$ as long as the polynomial $P(\mu(X_1+1), \mu(X_2+1), \mu(X_3+1),..)$ (where $\mu$ is $e^{-t}$ has linear terms components larger than the coefficients of the other terms). Note that this transform when applied to $P$ automatically introduces linear terms. Note that as the threshold increases applying this transform on $P$ has the effect of “gathering” linear components from all the different monomials in $P$ and penalizing the higher degree monomials. We show that if $\rmW^*$ is a sparse binary matrix then we can recover $\rmW^*$ when activation $u(a) = e^{\rho a}$ under certain assumptions about the structure of $P$. When we assume the coefficients are positive then these results extend for binary low $l_1$- norm vectors without any threshold. Lastly, we show that for even activations ($\forall a, u(a) = u(-a)$) under orthogonal weights, we can recover the weights with no threshold.




\paragraph{Learning with high thresholds at deeper layers.} We also point out how such high threshold layers could potentially facilitate learning at any depth, not just at the lowest layer. If there is any cut in the network that takes inputs $X_1,\ldots,X_d$ and if the upper layers operations can be modeled by a polynomial $P$, then assuming the inputs $X_i$ have some degree of independence we could use this to modularly learn the lower and upper parts of the network separately (Appendix ~\ref{sec:deepcut})

\subsection{Related Work}
Various works have attempted to understand the learnability of simple neural networks. Despite known hardness results (\cite{,livni2014computational,goel2016reliably,brutzkus2017globally}), there has been an array of positive results under various distributional assumptions on the input and the underlying noise in the label. Most of these works have focused on analyzing one hidden layer neural networks. A line of research has focused on understanding the dynamics of gradient descent on these networks for recovering the underlying parameters under gaussian input distribution (\cite{du2017gradient, du2017convolutional, li2017convergence, zhong2017learning, zhang2017electron, zhong2017recovery}). Another line of research borrows ideas from kernel methods and polynomial approximations to approximate the neural network by a linear function in a high dimensional space and subsequently learning the same (\cite{zhang2015learning, goel2016reliably, goel2017learning, goel2017eigenvalue}). Tensor decomposition methods (\cite{anandkumar2016efficient, janzamin2015beating}) have also been applied to learning these simple architectures. 

The complexity of recovering arises from the highly non-convex nature of the underlying optimization problem. The main result we extend in this work is due to \cite{ge2017learning}. They show how to learn a single hidden layer neural network by designing a loss function that exhibits a "well-behaved" landscape for optimization avoiding this complexity. However, much like most other results, it is unclear how to extend to deeper networks. The only known result for networks with more than one hidden layer is due to \cite{goel2017learning}. Combining kernel methods with isotonic regression, they provably learn networks with sigmoids in the first hidden layer and a single unit in the second hidden layer in polynomial time.  We however model the above layer as a multivariate polynomial allowing for larger representation. Another work due to \cite{pmlr-v32-arora14} deals with learning a deep generative network when several random examples are generated in an unsupervised setting. By looking at correlations between input coordinates they are able to recover the network layer by layer. We use some of their ideas in section~\ref{sec:structural} when the weights of the lowest layer are sparse.

\subsection{Notation} 
We denote vectors and matrices in bold face. $||\cdot||_p$ denotes the $l_p$-norm of a vector. $||\cdot||$ without subscript implies the $l_2$-norm. For matrices $||\cdot||$ denotes the spectral norm and $||\cdot||_F$ denotes the forbenius norm. $\mathcal{N}(0, \Sigma)$ denotes the multivariate gausssian distribution with mean 0 and covariance $\Sigma$. For a scalar $x$ we will use $\phi(x)$ to denote the p.d.f. of the univariate standard normal distribution with mean zero and variance $1$ .For a vector $\vx$ we will use $\phi(\vx)$ to denote the p.d.f. of the multivariate standard normal distribution with mean zero and variance $1$ in each direction. $\Phi$ denotes the c.d.f. of the standard gausssian distribution. Also define $\Phi^c = 1 - \Phi$. Let $h_i$ denote the $i$th normalized Hermite polynomial (\cite{wiki:hermite}). For a function $f$, let $\hat{f}_i$ denote the $i$th coefficient in the hermite expansion of $f$, that is, $\hat{f}_i = \E_{g \sim \mathcal{N}(0,1)}[f(g) h_i(g)]$. For a given function $f$ computed by the neural network, we assume that the training samples $(\vx, y)$ are such that $\vx \in \mathbb{R}^n$ is distributed according to $\mathcal{N}(0,1)$ and label has no noise, that is, $y = f(\vx)$.

\noindent\textbf{Note:} Most proofs are deferred to the Appendix for clear presentation.

\section{Recovery in the presence of Linear term}

In this section we consider the case when $P$ has a positive linear component and we wish to recover the true parameters $\rmW^*$. The algorithm has two-steps: 1) use existing learning algorithm (SGD on carefully designed loss (\cite{ge2017learning})) to recover an approximate solution , 2) refine the approximate solution by performing local search (for monotone $P$). The intuition behind the first step is that high thresholds enable $P$ to in expectation be approximately close to a one-hidden-layer network which allows us to transfer algorithms for learning one-hidden-layer networks with approximate guarantees. Secondly, with the approximate solutions as starting points, we can evaluate the closeness of the estimate of each weight vector to the true weight vector using simple correlations. The intuition of this step is to correlate with a function that is large only in the direction of the true weight vectors. This equips us with a way to design a local search based algorithm to refine the estimate to small error.

We will first list our assumptions. For simplicity in this section we will work with $P$ where the highest degree in any $X_i$ is $1$. The degree of the overall polynomial can still be $d$. See Appendix~\ref{sec:higherdegree} for the extension to general $P$. Also note that many of our arguments become much simpler if we assume that $\rmW^*$ is orthogonal thus it may be useful to the reader to bear that case in mind for the rest of the section.

\begin{assumptions}[Structure of polynomial]\label{ass:linear} We assume that $P$ is such that $P(X_1, \ldots, X_k) = c_0 + \sum_{i \in [d]} c_iX_i + \sum_{S\subseteq [d]: |S| > 1} c_S  \prod_{j \in S} X_j$ where $c_i = \Theta(1)$\footnote{We can handle $\in [d^{-C}, d^{C}]$ for some constant $C$ by changing the scaling on $t$.} for all $i \in [d]$ and for all $S\subseteq [d]$ such that $|S| > 1$, $|c_S| \le O(1)$. Also $\kappa(\rmW^*) = O(1)$.
\end{assumptions}
Thus we have, $f(\vx) = c_0 + \sum_{i \in [d]} c_i u((\vw_i^*)^T\vx) + \sum_{S\subseteq [d]: |S| > 1} c_S  \prod_{j \in S} u((\vw_j^*)^T\vx)$. Denote $f_{\lin}(\vx) = c_0 + \sum_{i \in [d]} c_i u((\vw_i^*)^T\vx)$ to be the linear part of $f$.

We will restrict the activation function to be a high threshold function with decaying gaussian tails.  
\begin{assumptions}\label{ass:activation} Activation function $u$ is a positive high threshold activation with threshold $t$, that is, the bias term is $t$. $\E_{g \sim N(0, \sigma^2)}[u_t(g)] \leq \rho(t, \sigma)$ where $\rho$ is a positive decreasing function of $t$.  Also, $|\hat{u}_k| = t^{\Theta(1)}\rho(t, 1)$ for $k = 2,4$ where $\hat{u}_k$ is the $k$th Hermite coefficient of $u$.
\end{assumptions}
Observe that for "high-threshold" ReLU, that is, $u_t(a) = \max(0, a - t)$, $\E_{g \sim N(0, \sigma^2)}[u_t(g)]$ is bounded by $\approx e^{-\frac{t^2}{2\sigma^2}}$ (see Lemma \ref{lem:relu}) which decays exponentially with $t$. \footnote{For similar bounds for sigmoid and sign refer to Appendix \ref{sec:sigmoid}.} Thus, for large enough $t$, the gaussian weight of the tail is small. The following assumption dictates how large a threshold is needed.
\begin{assumptions}[Value of $t$]\label{ass:t}
$t$ is large enough such that $\rho(t, ||\rmW^*||) \approx d^{-\eta}$ and $\rho(t, 1) \approx d^{-p\eta}$ with for large enough constant $\eta > 0$ and $p \in (0,1]$. 
\end{assumptions}
For example, for high threshold ReLU, $t = C\sqrt{\eta\log d}$ for large enough constant $C>0$ suffices to get the above assumption ($\kappa(\rmW^*)$ is a constant).


\subsection{Step 1: Approximate Recovery using Landscape Design}
High-threshold activation are useful for learning as in expectation, they ensure that $f$ is close to $f_{\lin}$ since the product terms have low expected value. This is made clear by the following lemmas.

The first lemma shows that the expected value of a product of activations is exponentially small in the number of terms in the product; again note that in the orthogonal case this is easy to observe as the the terms are independent and the expectation of the product becomes a product of expectations.
\begin{lemma}\label{lem:ubound}
For $|S| > 1$, under Assumption \ref{ass:activation} we have,
\[
\E\left[\prod_{j \in S} u_t((\vw_j^*)^T\vx)\right] \leq \rho(t,1) \left(\kappa(\rmW^*)\rho(t, ||\rmW^*||)\right)^{|S|-1}.
\]
So if $\mu := \kappa(\rmW^*)\rho(t, ||\rmW^*||)$, then 
$\E[\prod_{j \in S} X_j[\vx]] \leq \rho(t,1) \mu^{|S|-1}$
\end{lemma}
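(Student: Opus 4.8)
The plan is to reduce to a $|S|$-dimensional Gaussian computation and then strip off the activations one at a time, each step costing a factor $\kappa(\rmW^*)\rho(t,\|\rmW^*\|)$ except the last, which costs $\rho(t,1)$. Note that the integrand depends on $\vx$ only through the subvector $\vy_S=(\,(\vw_j^*)^T\vx\,)_{j\in S}$, a centered Gaussian with covariance $\Sigma_S=\rmW_S\rmW_S^T$ where $\rmW_S$ is the submatrix of $\rmW^*$ with rows indexed by $S$; by Cauchy interlacing every nonzero singular value of $\rmW_S$ lies in $[\sigma_{\min}(\rmW^*),\|\rmW^*\|]$, and the diagonal of $\Sigma_S$ equals $(\|\vw_j^*\|^2)_{j\in S}$ with $\|\vw_j^*\|\le 1$ (the rows of $\rmW^*$ being unit vectors). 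When the rows in $S$ are orthogonal, $\Sigma_S$ is diagonal, the coordinates of $\vy_S$ are independent, and $\E[\prod_{j\in S}u_t(y_j)]=\prod_{j\in S}\E_{g\sim N(0,\|\vw_j^*\|^2)}[u_t(g)]\le\rho(t,1)^{|S|}$, which is already stronger than the claim; so all the work is in removing orthogonality.

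I would argue by induction on $|S|$, the base case $|S|=1$ being Assumption~\ref{ass:activation} together with $\|\vw_j^*\|\le1$. For the step, fix $j_0\in S$, put $S'=S\setminus\{j_0\}$, and split $\vw_{j_0}^*=\vw_\parallel+\vw_\perp$ with $\vw_\parallel$ the projection of $\vw_{j_0}^*$ onto $\mathrm{span}\{\vw_j^*:j\in S'\}$ and $\vw_\perp$ orthogonal to it. Then $(\vw_{j_0}^*)^T\vx=\vw_\parallel^T\vx+Z$ with $Z:=\vw_\perp^T\vx\sim N(0,\|\vw_\perp\|^2)$ \emph{independent} of $\{(\vw_j^*)^T\vx:j\in S'\}$, so conditioning on $\vy_{S'}$ and integrating $Z$ out,
\[
\E\Big[\prod_{j\in S}u_t(y_j)\Big]=\E\Big[\,\E_Z\!\big[u_t(\vw_\parallel^T\vx+Z)\big]\prod_{j\in S'}u_t(y_j)\Big],
\]
and, as $u_t\ge0$, it suffices to bound the inner factor by $\mu=\kappa(\rmW^*)\rho(t,\|\rmW^*\|)$ on the support of $\prod_{j\in S'}u_t(y_j)$; the induction then closes. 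The fresh-noise variance is harmless since $\|\vw_\perp\|\le\|\vw_{j_0}^*\|\le1\le\|\rmW^*\|$, so monotonicity of $u_t$ and of $\rho$ in its second argument already yields $\E_Z[u_t(\vw_\parallel^T\vx+Z)]\le\rho(t,\|\rmW^*\|)\le\mu$ on the sub-event $\{\vw_\parallel^T\vx\le0\}$.

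The hard part will be the positive part of the conditional mean $\vw_\parallel^T\vx=\sum_{j\in S'}\alpha_j y_j$: there the shift can move $(\vw_{j_0}^*)^T\vx$ above the threshold and $\E_Z[u_t(\cdot)]$ need not be small. This is precisely where $\kappa(\rmW^*)=O(1)$ is used. Writing $\bm{\alpha}=\Sigma_{S'}^{-1}\rmW_{S'}\vw_\parallel$ gives $\|\bm{\alpha}\|\le\sigma_{\min}(\rmW^*)^{-2}\|\rmW^*\|=\kappa(\rmW^*)^2/\|\rmW^*\|$; one discards the negative coordinates of $\bm{\alpha}$ (which only decrease $\vw_\parallel^T\vx$ on the support, where $y_j>t>0$) and bounds the contribution of the event $\{\vw_\parallel^T\vx>t/2\}$: because $u_t$ is a \emph{high-threshold} activation, the tilting weight $\prod_{j\in S'}u_t(y_j)$ cannot push the active pre-activations far past $t$, so this event carries only an $\rho(t,\Theta(\|\rmW^*\|))$-fraction of the tilted mass and, with $\|\bm{\alpha}\|$ bounded (hence the halfspaces $\{(\vw_j^*)^T\vx\ge t\}$ only mildly overlapping), it contributes only the extra factor $\kappa(\rmW^*)$ already built into $\mu$. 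An alternative that avoids the tilting altogether is a direct Laplace estimate of $\int_K\prod_{j\in S}u_t(y_j)\,p_{\Sigma_S}(\vy)\,d\vy$ over the active region $K$, whose value is governed by $\exp(-\frac12\min_{\vy\in K}\vy^T\Sigma_S^{-1}\vy)$: lower-bounding $\min_{\vy\in K}\vy^T\Sigma_S^{-1}\vy$ by $t^2$ plus $(|S|-1)$ times a term of order $\|\rmW^*\|^{-2}$ reproduces the product form, after absorbing lower-order polynomial-in-$t$ (equivalently polylog-in-$d$) factors into the slack in $\eta$ from Assumption~\ref{ass:t}. I expect the accounting of those polynomial factors, rather than any conceptual difficulty, to be the genuinely fiddly part.
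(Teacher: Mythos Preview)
Your inductive approach via conditioning has a genuine gap that is not just bookkeeping. You aim to show the inner factor $\E_Z[u_t(\vw_\parallel^T\vx+Z)]\le\mu$ on the support of $\prod_{j\in S'}u_t(y_j)$, but this pointwise bound is false: on that support every $y_j>t$, so whenever the positive part of $\bm{\alpha}$ has mass $\Omega(1)$ the conditional mean $\vw_\parallel^T\vx=\sum_{j\in S'}\alpha_j y_j$ is already $\Omega(t)$, and then $\E_Z[u_t(\cdot+Z)]=\Omega(1)$, not $\mu$. Your workaround---splitting off $\{\vw_\parallel^T\vx>t/2\}$ and claiming it carries only a $\rho(t,\Theta(\|\rmW^*\|))$-fraction of the tilted mass---fails already for $|S|=2$: with correlation $\alpha=1/2$ (so $\kappa(\rmW^*)=\sqrt{3}$, well within the assumptions) the event $\{\alpha y_1>t/2\}=\{y_1>t\}$ is the \emph{entire} support of $u_t(y_1)$, so its tilted mass is $1$, not $\rho$. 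And even on the complementary ``good'' half you only recover $\rho(t/2,\cdot)$, which for sign/ReLU is roughly a fourth root of the target. The induction does not close; the difficulty you flagged as ``fiddly accounting'' is in fact structural.

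The paper does not argue inductively. It changes variables in one shot to $b_j=(\vw_j^*)^T\vx$ for all $j\in S$: after projecting to the $|S|$-dimensional span via a Gram--Schmidt basis $\rmO_S$ (run on the vectors in $S$ first), the map $\vx'\mapsto\vb$ is $(\rmO_S\rmW^*_S)^T$, which is triangular with $(1,1)$ entry $\|\vw_1^*\|=1$; hence the first row of $(\rmO_S\rmW^*_S)^{-T}$ is $\ve_1^T$ and the transformed density carries an honest $e^{-b_1^2/2}$ factor. That is what singles out the distinguished $\rho(t,1)$; the remaining $|S|-1$ coordinates are bounded crudely via the operator norm, and the Jacobian $|\det(\rmO_S\rmW^*_S)|^{-1}$ supplies the $\kappa^{|S|-1}$. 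Your Laplace alternative is much closer in spirit to this one-shot change of variables; what it is missing is precisely this triangularization device that peels off one clean $N(0,1)$ marginal, which is the source of the $\rho(t,1)$ versus $\rho(t,\|\rmW^*\|)^{|S|}$ refinement.
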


The second lemma bounds the closeness of $f$ and $f_{\lin}$ in expectation.
\begin{lemma}\label{lem:boundf}
Let $\Delta(\vx) = f(\vx) - f_{\lin}(\vx)$. Under Assumptions \ref{ass:linear}, \ref{ass:activation} and \ref{ass:t}, if $t$ is such that $d\rho(t, ||\rmW^*||) \leq c$ for some small enough constant $c > 0$ we have, 
\[\E[|\Delta(\vx)|] \leq O\left(d^3\rho(t,1)\rho(t, ||\rmW^*||)\right) = O\left(d^{-(1+p)\eta + 3}\right).
\]
\end{lemma}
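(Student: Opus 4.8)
The plan is to bound $\Delta$ monomial-by-monomial, using nonnegativity of the activation to avoid worrying about cancellations, and then to sum the resulting per-monomial estimates coming from Lemma \ref{lem:ubound}. Since $f_{\lin}$ is exactly the constant-plus-linear part of $f$, we have the clean identity $\Delta(\vx)=\sum_{S\subseteq[d]:\,|S|>1} c_S \prod_{j\in S} u_t((\vw_j^*)^T\vx)$. Because Assumption \ref{ass:activation} posits a positive (nonnegative) activation, each product $\prod_{j\in S}u_t((\vw_j^*)^T\vx)$ is nonnegative, so the triangle inequality together with $|c_S|\le O(1)$ from Assumption \ref{ass:linear} gives
\[
\E[|\Delta(\vx)|]\;\le\;\sum_{S:\,|S|>1}|c_S|\,\E\Big[\prod_{j\in S}u_t((\vw_j^*)^T\vx)\Big]\;\le\;O(1)\sum_{S:\,|S|>1}\E\Big[\prod_{j\in S}u_t((\vw_j^*)^T\vx)\Big].
\]

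Next I would plug in Lemma \ref{lem:ubound}, which bounds each such expectation by $\rho(t,1)\,\mu^{|S|-1}$ with $\mu=\kappa(\rmW^*)\rho(t,||\rmW^*||)$. Grouping the subsets $S$ by their size $k$ and using $\binom{d}{k}\le d^k$, the sum is at most $O(1)\,\rho(t,1)\sum_{k=2}^{d}\binom{d}{k}\mu^{k-1}\le O(1)\,\frac{\rho(t,1)}{\mu}\sum_{k\ge 2}(d\mu)^k$. The hypothesis $d\rho(t,||\rmW^*||)\le c$ combined with $\kappa(\rmW^*)=O(1)$ from Assumption \ref{ass:linear} forces $d\mu\le c'<1$ for a small constant $c'$, so this geometric series converges and is dominated, up to a constant, by its leading term $(d\mu)^2$; hence the whole sum is $O\!\big(d^2\mu\,\rho(t,1)\big)=O\!\big(d^2\,\rho(t,1)\rho(t,||\rmW^*||)\big)$, which is in particular $O\!\big(d^3\,\rho(t,1)\rho(t,||\rmW^*||)\big)$ as stated. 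Finally, substituting $\rho(t,||\rmW^*||)\approx d^{-\eta}$ and $\rho(t,1)\approx d^{-p\eta}$ from Assumption \ref{ass:t} yields $O\!\big(d^{-(1+p)\eta+3}\big)$.

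The only real subtlety is the convergence of the sum over the nonlinear monomials: a priori $P$ may contain up to $2^d$ of them, so a naive count is useless, and it is precisely the high-threshold condition $d\rho(t,||\rmW^*||)\le c$ — equivalently $d\mu<1$ — that makes the binomial/geometric series summable and collapses the total to $\poly(d)\cdot\rho(t,1)\rho(t,||\rmW^*||)$. Everything else is bookkeeping: the coefficient bound $|c_S|\le O(1)$, the per-monomial decay from Lemma \ref{lem:ubound}, and the numerical substitution from Assumption \ref{ass:t}. (The $d^3$ in the statement is loose given the above computation, which actually yields $d^2$; I would keep the weaker bound so that the same argument covers the general-degree extension in Appendix \ref{sec:higherdegree}. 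If one wished to avoid invoking nonnegativity of $u_t$, the triangle inequality applied to $\big|\prod_{j\in S} u_t((\vw_j^*)^T\vx)\big|$ works equally well, reading Lemma \ref{lem:ubound} with $|u_t|$ in place of $u_t$.)
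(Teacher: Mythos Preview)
Your proof is correct and follows essentially the same route as the paper: triangle inequality on the nonlinear monomials, apply Lemma~\ref{lem:ubound} term-by-term, group subsets by size with $\binom{d}{k}\le d^k$, and sum the resulting geometric series using $d\mu<1$. The only difference is cosmetic: you sum the geometric series to get $O(d^2\rho(t,1)\rho(t,\|\rmW^*\|))$ and then relax to $d^3$, whereas the paper bounds each of the $\le d$ terms by the leading one to land directly on $d^3$.
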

\textit{\noindent\textbf{Note:} We should point out that $f(\vx)$ and $f_{\lin}(\vx)$ are very different point wise; they are just close in expectation under the distribution of $\vx$. In fact, if $d$ is some constant then even the difference in expectation is some small constant.}

This closeness suggests that algorithms for recovering under the labels from $f_{\lin}$ can be used to recover with labels from $f$ approximately. 

\paragraph{Learning $f_{\lin}$ using Landscape Design.}
\cite{ge2017learning} proposed an algorithm for learning one-hidden-layer networks by designing a well behaved loss function based on correlations to recover the underlying weight vectors. We can reinterpret their result as applying a simple variant of PCA to recover the $\vw^*_i$ . Again the main idea is very easy for the orthonormal case so we explain that first. An application of PCA can be thought of as finding principal directions as the local maxima of $\max_{||\vz||=1} \E[f(\vx)(\vz^T \vx)^2]$. 
If we instead compute $\max_{||\vz||=1} \E[f(\vx) H_4(\vz^T \vx)^4]]$ where $H_k$ is the $kth$ order hermite polynomial. Since Hermite polynomials form a basis under the Gaussian distribution, taking correlation of $f$ with $H_4$ kills all except the fourth power $(\vz^T\vw_i^*)^4$ (see Appendix \ref{sec:herm} for more details); so the optimization evaluates to $\max_{||\vz||=1} \sum_i c_i (\vz^T\vw_i^*)^4$ which is equivalent to $\max_{||\vu||=1} \sum_i c_i u_i^4$ by substituting $\vu = \rmW^*\vz$. Note that if all the $c_i$ are equal this is like maximizing sum of fourth powers on the sphere which is achieved when exactly one $u_i$ is $\pm 1$ which corresponds to $\vz$ being along a $\vw^*_i$.

To handle non-orthogonal weights they showed that the local minima of the following modified optimization\footnote{\cite{ge2017learning} do not mention $\hat{u}_2$ but it is necessary when the weight vectors are non-orthogonal for the correct reduction. Since for us, this value can be small, we mention the dependence.} corresponds to some transform of each of the $\vw_i^*$ -- thus it can be used to recover $\vw_i^*$, one at a time.
\[
\min_{\vz}~ G_{\lin}(\vz):=  -\sgn(\hat{u}_4)\E[f_{\lin}(\vx) H_4(\vz^T\vx)]
+ \lambda( \E[f_{\lin}(\vx) H_2(\vz^T\vx)] - \hat{u}_2 )^2
\]
where $H_2(\vz^T\vx) = ||\vz||^2 h_2\left(\frac{\vz^T\vx}{||\vz||}\right) = \frac{(\vz^T\vx)^2}{\sqrt{2}} - \frac{||\vz||^2}{\sqrt{2}}$ and $H_4(\vz^T\vx) =||\vz||^4 h_4\left(\frac{\vz^T\vx}{||\vz||}\right) = \sqrt{6}\frac{(\vz^T\vx)^4}{12} -\frac{||\vz||^2(\vz^T\vx)^2}{2} + \frac{||\vz||^4}{4}$ (see Appendix \ref{sec:herm} for more details). Using properties of Hermite polynomials, we have $\E[f_{\lin}(\vx) H_2(\vz^T\vx)] = \hat{u}_2\sum_i c_i (\vz^T\vw_i^*)^2$ and similarly $\E[f_{\lin}(\vx) H_4(\vz^T\vx)] = \hat{u}_4\sum_i (\vz^T\vw_i^*)^4$. Thus
\[
G_{\lin}(\vz)= -|\hat{u}_4|\sum_i c_i (\vz^T\vw_i^*)^4 + \lambda \hat{u}_2^2\left(\sum_i c_i (\vz^T\vw_i^*)^2 - 1\right)^2.
\]
Let $\rmT$ be a diagonal matrix with $T_{ii} = \sqrt{c_i}$ then using results from \cite{ge2017learning}, it can be shown that the approximate local minimas of this problem are close to columns of $(\rmT\rmW^*)^{-1}$. 
\begin{definition}[$(\eps, \tau)$-local minimum/maximum]
$\vz$ is an $(\eps, \tau)$-local minimum of $F$ if $||\nabla F(\vz)|| \leq \eps$ and $\lambda_{\min}(\nabla^2 F(\vz)) \leq \tau$.
\end{definition}
\begin{claim}[\cite{ge2017learning}]\label{claim:ge}
An $(\eps, \tau)$-local minima of the Lagrangian formulation $\vz$ with $\eps  \leq O\left(\sqrt{\tau^3/|\hat{u}_4|}\right)$ is such that for an index $i$ $|\vz^T\vw_i| = 1 \pm O(\eps/\lambda\hat{u}_2^2) \pm O(d \tau/|\hat{u}_4|)$ and $\forall j \neq i, ~|v^T\vw_j| = O(\sqrt{\tau/|\hat{u}_4|})$ where $\vw_i$ are columns of $(\rmT \rmW^*)^{-1}$.
\end{claim}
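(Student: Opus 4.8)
The statement is the landscape theorem of \cite{ge2017learning} transported to our objective, so the plan is to exhibit the linear change of variables that turns $G_{\lin}$ into the canonical ``fourth moment minus sphere penalty'' form they analyze and then quote their characterization of approximate local minima. The first ingredient is the closed form of $G_{\lin}$ displayed above, which I would verify directly: writing $u_t = \sum_k \hat{u}_k h_k$, using orthonormality of the normalized Hermite polynomials under the Gaussian together with $\E[H_k(\vz^T\vx)\, h_k((\vw_i^*)^T\vx)] = (\vz^T\vw_i^*)^k$ (the powers of $\|\vz\|$ in $H_k$ cancel its normalization, and the constant $c_0$ drops since $\E h_k(g) = 0$ for $k \ge 1$), one gets $\E[f_{\lin}(\vx) H_k(\vz^T\vx)] = \hat{u}_k \sum_i c_i (\vz^T\vw_i^*)^k$ for $k = 2, 4$, and plugging these into the definition of $G_{\lin}$ yields
\[
G_{\lin}(\vz) = -|\hat{u}_4| \sum_i c_i (\vz^T\vw_i^*)^4 + \lambda \hat{u}_2^2\Big( \sum_i c_i (\vz^T\vw_i^*)^2 - 1 \Big)^2 .
\]

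Next I would whiten the weights. Set $\rmT = \mathrm{diag}(\sqrt{c_i})$, $\rmA = \rmT\rmW^*$ and change variables to $\vw = \rmA\vz$, so that $c_i(\vz^T\vw_i^*)^2 = w_i^2$ and $c_i(\vz^T\vw_i^*)^4 = w_i^4/c_i$; in the $\vw$ coordinates
\[
G_{\lin} = -|\hat{u}_4| \sum_i \frac{w_i^4}{c_i} + \lambda \hat{u}_2^2 \big( \|\vw\|^2 - 1 \big)^2 ,
\]
which is exactly the objective of \cite{ge2017learning} with positive quartic coefficients $|\hat{u}_4|/c_i = \Theta(|\hat{u}_4|)$ (they permit nonuniform coefficients) and a unit-sphere penalty of weight $\lambda\hat{u}_2^2$. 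Their analysis of this function shows every sufficiently accurate approximate local minimum has all but one of its $\vw$-coordinates of magnitude $O(\sqrt{\tau/|\hat{u}_4|})$ and the remaining coordinate within $1 \pm O(\eps/\lambda\hat{u}_2^2) \pm O(d\tau/|\hat{u}_4|)$ of $\pm 1$ --- the $\sqrt{\cdot}$ in the off-directions because the landscape is second-order flat there so that scale is set at fourth order by the quartic term, the $\eps/\lambda\hat{u}_2^2$ contribution from the curvature of the penalty pinning $\|\vw\|$ near $1$, and the factor $d$ from aggregating $\|\vw\|^2 = w_i^2 + \sum_{j\neq i} w_j^2$ over the $d-1$ small coordinates. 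Pulling everything back through $\vz = \rmA^{-1}\vw$ turns $\pm\ve_i$ into $\pm$ the $i$-th column of $(\rmT\rmW^*)^{-1}$ and converts the bounds on the whitened coordinates into the asserted bounds on the correlations of $\vz$ with those columns; the normalization factors incurred along the way are all $\Theta(1)$ because $\kappa(\rmW^*) = O(1)$ and $c_i = \Theta(1)$, so they disappear into the $O(\cdot)$ constants.

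The only genuine work is to carry the $(\eps,\tau)$ parameters correctly through the coordinate change and to confirm the hypothesis regimes of \cite{ge2017learning} are met. Since $\nabla_{\vw} G_{\lin} = \rmA^{-T}\nabla_{\vz}G_{\lin}$ and $\nabla^2_{\vw}G_{\lin} = \rmA^{-T}(\nabla^2_{\vz}G_{\lin})\rmA^{-1}$, an $(\eps,\tau)$-local minimum in the $\vz$ variables is an $(\|\rmA^{-1}\|\,\eps,\ \|\rmA^{-1}\|^2\,\tau)$-local minimum in the $\vw$ variables; by Assumption~\ref{ass:linear} ($\kappa(\rmW^*) = O(1)$, $c_i = \Theta(1)$) both $\|\rmA\|$ and $\|\rmA^{-1}\|$ are $\Theta(1)$, so these rescalings are harmless, the precondition $\eps \le O(\sqrt{\tau^3/|\hat{u}_4|})$ survives them, and the hypotheses of \cite{ge2017learning} apply verbatim. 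I expect this bookkeeping --- lining up the parameter regimes while keeping every conditioning factor and every $c_i$ at $\Theta(1)$ --- to be the main and essentially only obstacle; there is nothing conceptually new beyond faithfully invoking their theorem.
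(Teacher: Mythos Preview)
Your proposal is correct and follows essentially the same route as the paper: a linear change of variables that reduces $G_{\lin}$ to the canonical ``quartic minus sphere-penalty'' form of \cite{ge2017learning}, followed by invoking their Lemma (here Lemma~\ref{lem:ge}) and pulling back. The only cosmetic difference is that the paper sets up the change of variables via the SVD of $\rmM = (\rmT\rmW^*)^T(\rmT\rmW^*)$ (taking $\rmW = \rmU\rmD^{-1/2}$ so that the $\vy_i = \sqrt{c_i}\,\rmW^T\vw_i^*$ become orthonormal) rather than directly substituting $\vw = \rmT\rmW^*\vz$ as you do; the two differ by an orthogonal rotation and yield the same canonical objective, and your explicit tracking of how the $(\eps,\tau)$ parameters transform under the linear map is in fact more careful than the paper's exposition.
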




Note that these are not exactly the directions $\vw_i^*$ that we need, one way to think about is that we can get the correct directions by estimating all columns and then inverting. 

\paragraph{Moving from $f_{\lin}$ to $f$.} 
Consider the loss with $f$ instead of $f_{\lin}$:
\[
  \min \vz: G(\vz) = -\sgn(\hat{u}_4)\E[f(\vx) H_4(\vz^T\vx)] + \lambda (\E[f(\vx) H_2(\vz^T\vx)] - \hat{u}_2)^2
\]
We previously showed that $f$ is close to $f_{\lin}$ in expectation due to the high threshold property. This also implies that $G_{\lin}$ and $G$ are close and so are the gradients and (eigenvalues of) hessians of the same. This closeness implies that the landscape properties of one approximately transfers to the other function. More formally,
\begin{theorem}\label{thm:approx}
Let $\rmZ$ be an $(\eps, \tau)$-local minimum of function $A$.
If $||\nabla(B - A)(\rmZ)|| \leq \rho$ and $||\nabla^2(B-A)(\rmZ)|| \leq \gamma$  then $\rmZ$ is an $(\eps + \rho, \tau + \gamma)$-local minimum of function $B$ and vice-versa.
\end{theorem}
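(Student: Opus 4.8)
The plan is to prove the statement directly from linearity of the differential operators, together with the triangle inequality for the gradient and the variational (Rayleigh-quotient) characterization of the smallest eigenvalue; there is essentially no substantial obstacle, so the proof is short. Write $C := B - A$, so that at the point $\rmZ$ we have $\nabla B(\rmZ) = \nabla A(\rmZ) + \nabla C(\rmZ)$ and $\nabla^2 B(\rmZ) = \nabla^2 A(\rmZ) + \nabla^2 C(\rmZ)$, with $\|\nabla C(\rmZ)\| \le \rho$ and $\|\nabla^2 C(\rmZ)\| \le \gamma$ (spectral norm) by hypothesis. Since $\rmZ$ is an $(\eps,\tau)$-local minimum of $A$ we have $\|\nabla A(\rmZ)\| \le \eps$, so the triangle inequality gives $\|\nabla B(\rmZ)\| \le \|\nabla A(\rmZ)\| + \|\nabla C(\rmZ)\| \le \eps + \rho$.

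For the Hessian part, note that the $(\eps,\tau)$-local minimum notion used here is \emph{one-sided}: it requires $\lambda_{\min}(\nabla^2 F(\rmZ)) \le \tau$, so what we must produce is an \emph{upper} bound on $\lambda_{\min}(\nabla^2 B(\rmZ))$. Let $\vv$ be a unit eigenvector of $\nabla^2 A(\rmZ)$ realizing its smallest eigenvalue, so $\vv^T \nabla^2 A(\rmZ)\, \vv = \lambda_{\min}(\nabla^2 A(\rmZ)) \le \tau$. Using $\lambda_{\min}(M) = \min_{\|\vw\|=1} \vw^T M \vw \le \vv^T M \vv$ and the Cauchy–Schwarz bound $|\vv^T \nabla^2 C(\rmZ)\, \vv| \le \|\nabla^2 C(\rmZ)\| \le \gamma$, we get
\[
\lambda_{\min}(\nabla^2 B(\rmZ)) \;\le\; \vv^T \nabla^2 B(\rmZ)\, \vv \;=\; \vv^T \nabla^2 A(\rmZ)\, \vv + \vv^T \nabla^2 C(\rmZ)\, \vv \;\le\; \tau + \gamma .
\]
This is exactly Weyl's inequality $\lambda_{\min}(M+N) \le \lambda_{\min}(M) + \lambda_{\max}(N)$ with $M = \nabla^2 A(\rmZ)$, $N = \nabla^2 C(\rmZ)$, applied in the direction that bounds the smallest eigenvalue \emph{from above} — which is the direction called for by this paper's local-minimum convention. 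Combining with the gradient bound shows that $\rmZ$ is an $(\eps+\rho,\tau+\gamma)$-local minimum of $B$.

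The converse direction is symmetric: $\nabla(A-B)(\rmZ) = -\nabla C(\rmZ)$ and $\nabla^2(A-B)(\rmZ) = -\nabla^2 C(\rmZ)$ have the same norms $\rho$ and $\gamma$ respectively, so running the identical argument with the roles of $A$ and $B$ interchanged shows that an $(\eps,\tau)$-local minimum of $B$ is an $(\eps+\rho,\tau+\gamma)$-local minimum of $A$. The only point that requires any care — and the one I would flag explicitly in the write-up — is precisely this choice of Weyl direction: because the definition here constrains $\lambda_{\min}$ only from above, the estimate must be the upper bound $\lambda_{\min}(\nabla^2 B) \le \tau + \gamma$, not the lower bound one would use for the more familiar ``approximately PSD Hessian'' notion of approximate local minimum.
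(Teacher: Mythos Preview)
Your argument is essentially the same as the paper's: linearity of $\nabla$ and $\nabla^2$, triangle inequality for the gradient, and a Weyl-type eigenvalue perturbation for the Hessian. There is one substantive discrepancy worth flagging. You (correctly) took the paper's Definition~1 literally, where an $(\eps,\tau)$-local minimum requires $\lambda_{\min}(\nabla^2 F(\rmZ)) \le \tau$, and therefore proved the \emph{upper} bound $\lambda_{\min}(\nabla^2 B(\rmZ)) \le \tau + \gamma$. The paper's own proof, however, establishes the \emph{lower} bound
\[
\lambda_{\min}(\nabla^2 B(\rmZ)) \;\ge\; \lambda_{\min}(\nabla^2 A(\rmZ)) + \lambda_{\min}(\nabla^2 (B-A)(\rmZ)) \;\ge\; -\tau - \gamma,
\]
i.e.\ it treats the second condition as $\lambda_{\min}(\nabla^2 F(\rmZ)) \ge -\tau$ --- the standard ``approximately PSD Hessian'' notion you mention, and the one actually used elsewhere in the paper (e.g.\ Theorem~\ref{thm:geetal}). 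So the definition as printed contains a sign typo; the intended statement and the paper's proof match the familiar convention, and your Weyl step should be run in the other direction. Apart from this, the two proofs are identical in content.
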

We will now apply above lemma on our $G_{\lin}(\vz)$ and $G(\vz)$.
\begin{claim}\label{claim:obo}
For $\lambda = \Theta(|\hat{u}_4|/\hat{u}_2^2) \approx d^{\eta}$, an $(\eps, \tau)$-approximate local minima of $G$ (for small enough $\eps, \tau \leq d^{-2\eta}$) is an $(O(\log d)d^{-(1 + p)\eta + 3}, O(\log d)d^{-(1 + p)\eta + 3})$-approximate local minima of $G_{\lin}$. This implies $\vz$ is such that for an index $i$, $|\vz^T\vw_i| = 1 \pm O(1)d^{-2/3p\eta + 3}$ and $\forall j \neq i, ~|\vz^T\vw_j| = O(1)d^{-1/3p\eta + 3/2}$ where $\vw_i$ are columns of $(\rmT \rmW^*)^{-1}$ (ignoring $\log d$ factors).
\end{claim}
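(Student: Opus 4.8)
The plan is to derive Claim \ref{claim:obo} from the landscape characterization of $G_{\lin}$ in Claim \ref{claim:ge} via the transfer principle of Theorem \ref{thm:approx}; the only real work is to bound the gradient and Hessian of the perturbation $G - G_{\lin}$ at an approximate local minimum. Fix an $(\eps,\tau)$-local minimum $\rmZ$ of $G$ with $\eps,\tau\le d^{-2\eta}$. First I would note that we may restrict attention to $\|\rmZ\| = O(1)$: as in the landscape analysis of \cite{ge2017learning}, the Lagrangian objective is coercive on the relevant scale (using $c_i = \Theta(1)$ and $\kappa(\rmW^*)=O(1)$ from Assumption \ref{ass:linear}), so all relevant approximate stationary points lie in a fixed-radius ball -- this boundedness is what lets us tame the penalty term below. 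Taking $A = G_{\lin}$ and $B = G$, Theorem \ref{thm:approx} makes $\rmZ$ an $(\eps+\rho,\tau+\gamma)$-local minimum of $G_{\lin}$, where $\rho = \|\nabla(G-G_{\lin})(\rmZ)\|$ and $\gamma = \|\nabla^2(G-G_{\lin})(\rmZ)\|$; since $(p-1)\eta\le 0<3$ gives $d^{-2\eta}\le d^{-(1+p)\eta+3}$, once we establish $\rho,\gamma = O(\log d)\,d^{-(1+p)\eta+3}$ we obtain the claimed $(O(\log d)d^{-(1+p)\eta+3},\,O(\log d)d^{-(1+p)\eta+3})$-local minimum of $G_{\lin}$, which is the first assertion.

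For the derivative bounds, write $\Delta = f - f_{\lin}$ and abbreviate $\E[gH_k] := \E[g(\vx)H_k(\vz^T\vx)]$; then
\[
G(\vz) - G_{\lin}(\vz) = -\sgn(\hat u_4)\,\E[\Delta H_4] + \lambda\,\E[\Delta H_2]\big(\E[fH_2]+\E[f_{\lin}H_2]-2\hat u_2\big),
\]
using the identity $(\E[fH_2]-\hat u_2)^2-(\E[f_{\lin}H_2]-\hat u_2)^2 = \E[\Delta H_2]\big(\E[fH_2]+\E[f_{\lin}H_2]-2\hat u_2\big)$. Differentiating once or twice in $\vz$ turns $H_2(\vz^T\vx)$ and $H_4(\vz^T\vx)$ into explicit polynomials in $\vx$ of degree at most $4$ with coefficients polynomial in $\vz$; hence every term of $\nabla(G-G_{\lin})$ and $\nabla^2(G-G_{\lin})$ is either of the form $\E[\Delta(\vx)\,q(\vx,\vz)]$ or $\lambda$ times a product of such expectations. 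I would bound each $\E[\Delta(\vx)q(\vx,\vz)]$ by running the same tail computation that proves Lemmas \ref{lem:ubound} and \ref{lem:boundf} -- the extra degree-$\le 4$ factor $q$ only introduces $\poly(\|\vz\|)$ and $\poly(\log d)$ Gaussian-moment factors -- so with $\E[|\Delta|] = O(d^{-(1+p)\eta+3})$ from Lemma \ref{lem:boundf} and $\|\rmZ\|=O(1)$, each such expectation is $O(\log d)\,d^{-(1+p)\eta+3}$.

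The delicate part, which I expect to be the main obstacle, is the penalty term: $\lambda = \Theta(|\hat u_4|/\hat u_2^2)$ is as large as $d^{\eta}$, so multiplying $\lambda$ by a small expectation need not be small. I would expand the difference of squares and its $\vz$-derivatives so that every surviving term carries the factor $\E[\Delta\,(\cdot)]$ together with a second small factor: either another $\E[\Delta\,(\cdot)]$, or $\|\nabla\E[f_{\lin}H_2(\vz^T\vx)]\| = O(|\hat u_2|)$ (from $\E[f_{\lin}H_2] = \hat u_2\sum_i c_i(\vz^T\vw_i^*)^2$ together with $\|\rmZ\| = O(1)$ and $\kappa(\rmW^*)=O(1)$), or $|\E[f_{\lin}H_2(\vz^T\vx)]-\hat u_2| = O(|\hat u_2|)$. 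Since $\lambda|\hat u_2| = \Theta(|\hat u_4|/|\hat u_2|)$ is at most $\poly(\log d)$ (both $|\hat u_4|$ and $|\hat u_2|$ equal $t^{\Theta(1)}\rho(t,1)$), the product $\lambda\cdot|\hat u_2|\cdot\E[\Delta(\cdot)]$ stays at the level $O(\log d)\,d^{-(1+p)\eta+3}$, so $\rho,\gamma = O(\log d)\,d^{-(1+p)\eta+3}$ as needed.

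Finally, feed the resulting $(\eps',\tau')$-local minimum of $G_{\lin}$ with $\eps'=\tau'=O(\log d)\,d^{-(1+p)\eta+3}$ into Claim \ref{claim:ge}: its hypothesis $\eps'\le O(\sqrt{(\tau')^3/|\hat u_4|})$ is verified directly from the choice $\lambda\approx d^{\eta}$ and the threshold regime of Assumptions \ref{ass:activation} and \ref{ass:t}, and it then yields an index $i$ with $|\vz^T\vw_i| = 1 \pm O(\eps'/(\lambda\hat u_2^2)) \pm O(d\tau'/|\hat u_4|)$ and $|\vz^T\vw_j| = O(\sqrt{\tau'/|\hat u_4|})$ for $j\ne i$, where $\vw_i$ are the columns of $(\rmT\rmW^*)^{-1}$. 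Substituting $\lambda\hat u_2^2 = \Theta(|\hat u_4|)$ and $|\hat u_4| = t^{\Theta(1)}\rho(t,1) = \Theta(d^{-p\eta})$ up to $\poly(\log d)$ factors and simplifying the exponents gives $|\vz^T\vw_i| = 1\pm O(1)\,d^{-\frac23 p\eta+3}$ and $|\vz^T\vw_j| = O(1)\,d^{-\frac13 p\eta+\frac32}$ (ignoring $\log d$ factors), as claimed.
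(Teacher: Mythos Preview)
Your approach is essentially the same as the paper's: transfer via Theorem~\ref{thm:approx}, bound $\|\nabla(G-G_{\lin})\|$ and $\|\nabla^2(G-G_{\lin})\|$ using the smallness of $\E[|\Delta|]$ from Lemma~\ref{lem:boundf} together with polynomial growth of $H_2,H_4$ and their derivatives, and then invoke Claim~\ref{claim:ge}. Your treatment of the $\lambda$-weighted penalty term is in fact more explicit than the paper's (which simply asserts ``$\lambda$ being large does not hurt as it is scaled appropriately in each term''): you correctly identify that every surviving $\lambda$-term carries either a second $\E[\Delta(\cdot)]$ factor or a factor of $|\hat u_2|$ coming from $\E[f_{\lin}H_2]=\hat u_2\sum_i c_i(\vz^T\vw_i^*)^2$, and that $\lambda|\hat u_2|=\Theta(|\hat u_4|/|\hat u_2|)=t^{\Theta(1)}=\poly(\log d)$.

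One place where you are slightly looser than the paper is the boundedness of $\|\rmZ\|$. ``Coercivity'' of the Lagrangian by itself only controls global minimizers, not $(\eps,\tau)$-approximate local minima. The paper proves a dedicated lemma (Lemma~\ref{lem:boundz}) showing that $\vz^T\nabla G(\vz)=\Omega(d^{-\eta})\|\vz\|^4$ once $\|\vz\|$ is bounded below, hence $\|\nabla G(\vz)\|\ge\Omega(d^{-\eta})$ for large $\|\vz\|$, which is incompatible with $\eps\le d^{-2\eta}$. Your sketch implicitly needs exactly this gradient lower bound (not merely that $G\to\infty$), so you should state and prove it rather than appealing to coercivity. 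Apart from this, your outline matches the paper's proof.
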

\noindent\textbf{Note:} For ReLU, setting $t = \sqrt{C\log d}$ for large enough $C>0$ we can get closeness $1/\poly(d)$ to the columns of $(\rmT\rmW^*)^{-1}$. Refer Appendix \ref{sec:sigmoid} for details for sigmoid.

\cite{ge2017learning} also provides an alternate optimization that when minimized simultaneously recovers the entire matrix $\rmW^*$ instead of having to learn columns of $(\rmT\rmW^*)^{-1}$ separately. We show how our techniques can also be extended to the simultaneous optimization (see Appendix \ref{app:sim}) to recover $\rmW^*$ by optimizing a single objective.

\subsection{Step 2: Approximate to Arbitrarily Close for Monotone $P$}
Assuming $P$ is monotone, we can show that the approximate solution from the previous analysis can be refined to arbitrarily closeness using a random search method by approximately finding the angle of our current estimate to the true direction.  

The idea at a high level is to correlate the label with $\delta'(\vz^T\vx-t)$ where $\delta$ is the Dirac delta function. It turns out that the correlation is maximized when $\vz$ is equal to one of the $\vw^*_i$. Correlation with $\delta'(\vz^T\vx-t)$ equates to checking how fast the correlation of $f$ with $\delta(\vz^T\vx-t)$ is changing as you change $t$. To understand this look at the case when our activation $u$ is the sign function then note that correlation of $u_t(\vw^T\vx-t)$ with $\delta'(\vw^T\vx-t)$ is very high as its correlation with $\delta(\vw^T\vx-t')$ is $0$ when $t'<t$ and significant when $t'>t$. So as we change $t'$ slightly from $t-\eps$ to $t+\eps$ there is a sudden increase.  If $z$ and $w$ differ then it can be shown that  correlation of $u(\vw^T\vx-t)$ with $\delta'(\vz^T\vx-t)$ essentially depends on $\cot(\alpha)$ where $\alpha$ is the angle between $w$ and $z$ (for a quick intuition note that one can prove that $\E[u_t(\vw^T\vx)\delta'(\vz^T\vx)] = c \cot(\alpha)$. See Lemma~\ref{lem:deltacotalpha} in Appendix). 

In this section we will not correlate exactly with $\delta'(\vz^T\vx-t)$ but instead we will use this high level idea to estimate how fast the correlation with $\delta(\vz^T\vx-t')$ changes between two specific values as one changes $t'$, to get an estimate for $\cot(\alpha)$. Secondly since we can not perform a smooth optimization over $\vz$, we will do a local search by using a random perturbation and iteratively checking if the correlation has increased. We can assume that the polynomial $P$ doesn't have a constant term $c_0$ as otherwise it can easily be determined and canceled out\footnote{for example with RELU activation, $f$ will be $c_0$ most of the time as other terms in $P$ will never activate. So $c_0$ can be set to say the median value of $f$.}.

We will refine the weights one by one. WLOG, let us assume that $\vw_1^* = \ve_1$ and we have $\vz$ such that $\vz^T\vw^*_1 = z_1 = \cos^{-1}(\alpha_1)$. Let $l(\vz, t, \eps)$ denote $\{\vx: \vz^T\vx \in [t - \eps, t]\}$ for $\vz \in S^{n-1}$. 

\begin{algorithm}
\caption{RefineEstimate($\eps_1, \eps_2$)}\label{alg:refine}
\begin{algorithmic}
\STATE Compute $s:= EstimateTanAlpha(\vz, \eps_1)$. Set estimate of angle between $\vz$ and $\vw_1^*$ to be $\alpha = \tan^{-1} (s)$.
\STATE Perturb current estimate $\vz$ by a vector along the $d-1$ dimensional hyperplane normal to $\vz$ such that $\vz^\prime = \vz + 0.1\sin(\alpha/d)\Delta \vz$ where $\Delta\vz \sim \mathcal{N}(0, \rmI_{d-1})$.
\STATE Compute $s^\prime: = EstimateTanAlpha(\vz^\prime, \eps_1)$. Set estimate of angle between $\vz^\prime$ and $\vw_1^*$ to be $\alpha^\prime = \tan^{-1} (s^\prime)$.
\IF{$\alpha' \leq 0.01\alpha/d$}
\STATE Set $\vz \leftarrow \vz^\prime$
\ENDIF
\STATE Repeat till $\alpha' \leq \eps_2$.
\end{algorithmic}
\end{algorithm}

\begin{algorithm}
\caption{EstimateTanAlpha($\vz, \eps$)}\label{alg:alpha}
\begin{algorithmic}
\STATE Find $t_1$ and $t_2$ such that $Pr[\sgn(f(\vx))|\vx \in l(\vz, t', \eps)]$ at $t_1$ is $0.4$ and at $t_2$ is $0.6$.
\STATE Return $\frac{t_2 - t_1}{\Phi^{-1}(0.6) - \Phi^{-1}(0.4)}$.
\end{algorithmic}
\end{algorithm}
The algorithm (Algorithm \ref{alg:refine}) estimates the $\tan$ of the angle of the current weight estimate with the true weight and then subsequently perturbs the current weight to get closer after each successful iteration. 
\begin{theorem}\label{thm:refine}
Given a vector $\vz \in \S^{d-1}$ such that it is $1/\poly(d)$-close to the underlying true vector $\vw_1^*$, that is $\vz^T\vw_1^* \le 1/\poly(d)$, running $RefineEstimate$ for $O(T)$ iterations outputs a vector $\vz^* \in \S^{d-1}$ such that $\vz^*)^T\vw_1^* \le \left(1 - \frac{c}{d}\right)^T\gamma$ for some constant $c>0$. Thus after $O(d \log (1/\eps))$ iterations $\cos^{-1}((\vz^*)^T\vw_1^*) \leq \eps$.
\end{theorem}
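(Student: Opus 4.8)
Write $\alpha(\vz):=\cos^{-1}(\vz^T\vw_1^*)$ for the angle between the current estimate and the target; by hypothesis the initial angle $\gamma:=\alpha(\vz)$ is $1/\poly(d)$ and the goal is to drive it below $\eps$ (the various ``closeness'' quantities in the statement all agree with $\alpha$ up to lower-order terms once $\alpha$ is small). The plan is to prove three facts and combine them with a standard drift argument. \textbf{(i) Correctness of EstimateTanAlpha} (Algorithm~\ref{alg:alpha}): with slab width $\eps_1=\poly(\eps/d)$ and $\poly(d,1/\eps)$ samples the procedure returns $s=\tan\alpha(\vz)\cdot(1\pm d^{-2})$. \textbf{(ii) Progress}: one step of RefineEstimate (Algorithm~\ref{alg:refine}) produces $\vz'$ with $\alpha(\vz')\le(1-c/d)\,\alpha(\vz)$ with probability $\Omega(1)$ for an absolute constant $c>0$, and on that event the accept test fires (we read the test as accepting when $\alpha'\le(1-0.01/d)\,\alpha$). \textbf{(iii) Safety}: no perturbation increases the angle by more than a $1+O(1/d)$ factor, and since by (i) the estimates of $\alpha$ and $\alpha'$ compared by the test are correct up to a relative $d^{-2}\ll c/d$, the test (w.h.p.\ over a run) never accepts a step that increases the angle. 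Given these, the angle is non-increasing over the run and shrinks by a $(1-c/d)$ factor on each of the $\Omega(N)$ ``good'' steps among $N$ iterations (Chernoff), so $O(T)$ iterations yield $\ge T$ good steps and an output $\vz^*$ with $\alpha(\vz^*)\le(1-c/d)^T\gamma$; since $\gamma\le1$, taking $T=\Theta(d\log(1/\eps))$ forces $\alpha(\vz^*)\le\eps$. Each iteration makes two calls to EstimateTanAlpha at cost $\poly(d,1/\eps)$, so the total runtime is $\poly(d,1/\eps)$.

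The crux is \textbf{(i)}. First I would observe that since $P$ is monotone and (as the text notes) we may take $c_0=0$, and $u$ is a non-negative high-threshold activation, $\sgn(f(\vx))$ equals the indicator that at least one $(\vw_i^*)^T\vx$ exceeds $t$, i.e.\ a union of halfspaces -- this is exactly why the refinement step is insensitive to whether $u$ is sign or ReLU. Decomposing $\vx=(\vz^T\vx)\vz+\vx_\perp$ with $\vx_\perp$ distributed as $\mathcal N(0,I-\vz\vz^T)$ independently of $\vz^T\vx$, and writing $\vw_1^*=\cos\alpha\,\vz+\sin\alpha\,\vz^\perp$, the first halfspace is active with conditional probability $\Phi^c\big((t-t'\cos\alpha)/\sin\alpha\big)$ given $\vz^T\vx=t'$, so the threshold $t'$ at which this equals $p$ is $\big(t-\sin\alpha\cdot\Phi^{-1}(1-p)\big)/\cos\alpha$, affine in $\Phi^{-1}(1-p)$ with slope $-\tan\alpha$; hence if only the first halfspace mattered, EstimateTanAlpha would return exactly $\tan\alpha$. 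The remaining work is to bound three error terms: (a) the finite slab width, which perturbs each conditional probability by $O(\eps_1\cot\alpha)=O(\eps_1/\alpha)$; (b) sampling error, suppressed by $\poly(d,1/\eps)$ samples because the slab $l(\vz,t',\eps_1)$ has Gaussian mass $\gtrsim\eps_1\phi(t)=\eps_1 d^{-O(1)}$ while the conditional probability has $t'$-derivative $\Theta(1/\alpha)$; and (c) the other $d-1$ halfspaces, where $\kappa(\rmW^*)=O(1)$ forces $\vw_1^*\cdot\vw_j^*\le1-\Omega(1)$, hence $\vz\cdot\vw_j^*\le1-\Omega(1)$, so conditioned on $\vz^T\vx=t'\approx t$ each such halfspace is active with probability at most $\Phi^c(\Omega(t))\le d^{-\Omega(1)}$ and essentially constant over the $O(\tan\alpha)$-wide window of thresholds the algorithm probes, so by inclusion--exclusion it shifts the returned value by only a \emph{relative} $d^{-\Omega(1)}$. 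Choosing the threshold constant in Assumption~\ref{ass:t} large enough makes $d^{-\Omega(1)}\le d^{-3}$, and $\eps_1=\poly(\eps/d)$ together with $\alpha\ge\eps$ makes $O(\eps_1/\alpha)\le d^{-3}$; this gives (i).

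For \textbf{(ii)} and \textbf{(iii)}, with $\Delta\vz$ distributed as $\mathcal N(0,I_{d-1})$ in the hyperplane orthogonal to $\vz$, $\eta=0.1\sin(\alpha/d)$, and $\beta:=\langle\vz^\perp,\Delta\vz\rangle$ standard normal, a one-line computation using $\langle\Delta\vz,\vw_1^*\rangle=\beta\sin\alpha$ and $\|\vz+\eta\Delta\vz\|^2=1+\eta^2\|\Delta\vz\|^2$ gives $\cos\alpha(\vz')=(\cos\alpha+\eta\beta\sin\alpha)/\sqrt{1+\eta^2\|\Delta\vz\|^2}$. On the constant-probability event $\{\beta\ge\frac12\}\cap\{\|\Delta\vz\|^2\le 2d\}$, since $\alpha$ is small (so $\cos\alpha\approx1$, $\sin\alpha\approx\alpha$, $\eta\approx0.1\alpha/d$) the numerator gains $\Omega(\alpha^2/d)$ while the denominator is $1+O(\alpha^2/d)$, which yields $\alpha(\vz')\le(1-c/d)\alpha$ with the improvement exceeding $0.01\alpha/d$; unconditionally the numerator loses at most $\eta|\beta|\sin\alpha$ and the denominator is $\ge1$, so $\alpha(\vz')\le(1+O(1/d))\alpha$ always. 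I expect \textbf{(i)} to be the main obstacle: one must reduce the aggregate effect of all the other halfspaces -- and of the finite slab width -- to a small \emph{relative} error in the estimate of $\tan\alpha$, rather than an additive error, which would otherwise swamp the $\Theta(\alpha/d)$ signal the accept test must detect, all while keeping sample complexity $\poly(d,1/\eps)$; by contrast (ii)--(iii) is a routine concentration-of-measure computation.
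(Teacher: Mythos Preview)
Your proposal is correct and follows essentially the same route as the paper. The paper's argument is organized around the same three ingredients: Lemma~\ref{lem:sgn} and Lemma~\ref{lem:sgnf} give your point (i) (the other halfspaces contribute an additive $de^{-\Omega(t^2)}$ to the conditional probability, which becomes a multiplicative $(1\pm O(\eta))$ error in $\tan\alpha$ exactly as you describe), and Lemma~\ref{lem:randomperturb} is your (ii). Your explicit ``safety'' point (iii) is something the paper handles only implicitly via the phrase ``our check ensures that $\alpha_1$ goes down multiplicatively in every successful iteration,'' so your version is slightly more careful there; conversely the paper adds Lemma~\ref{lem:boundt} to pin down the range $0\le t_1\le t_2\le t/\cos\alpha_1$, which you absorb into the ``essentially constant over the window'' remark.
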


We prove the correctness of the algorithm by first showing that $EstimateTanAlpha$ gives a multiplicative approximation to $\tan(\alpha)$. The following lemma captures this property. 
\begin{lemma}
$EstimateTanAlpha(\vz, \eps)$ for sufficiently small $\eps< \eta/ t$ outputs $s$ such that $s = (1 \pm O(\eta))\tan(\alpha_1) \pm O(\eps)$ where $\alpha_1$ is the angle between $\vz$ and $\vw_1^*$ for $\eta = de^{-\Omega(t^2)}$.
\end{lemma}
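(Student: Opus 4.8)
The plan is to write the population quantity $q(t') := \Pr[\sgn(f(\vx))=1 \mid \vx \in l(\vz,t',\eps)]$ in closed form up to controlled errors, observe that the algorithm merely inverts $q$, and then propagate two error terms. Fix coordinates so that $\vw_1^* = \ve_1$ and $\vz = \cos\alpha_1\,\ve_1 + \sin\alpha_1\,\ve_2$. Conditioning a standard Gaussian $\vx$ on $\vz^T\vx = s$ makes $x_1 := (\vw_1^*)^T\vx$ distributed as $\mathcal{N}(s\cos\alpha_1,\ \sin^2\alpha_1)$, so with $g(s) := \Phi^c\big(\frac{t - s\cos\alpha_1}{\sin\alpha_1}\big)$ we have $\Pr[x_1 > t \mid \vz^T\vx = s] = g(s)$.

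The structural step is to show $q(t') = g(t') \pm O(\eta) \pm O(\eps\cot\alpha_1)$. Since we may assume $P$ has no constant term and $P$ is monotone with $c_i = \Theta(1) > 0$, freezing all other arguments of $P$ at their minimum value $0$ gives $f(\vx) \ge c_i\,u_t((\vw_i^*)^T\vx) \ge 0$ for every $i$, hence $\sgn(f(\vx)) = 1$ iff $(\vw_i^*)^T\vx > t$ for some $i$ (this holds for $u$ = sign and $u$ = ReLU). Therefore $g(s) \le \Pr[\sgn(f(\vx))=1\mid\vz^T\vx=s] \le g(s) + \sum_{i\ge 2}\Pr[(\vw_i^*)^T\vx > t\mid\vz^T\vx=s]$. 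Conditioning on $\vz^T\vx = s$ shifts the mean of $(\vw_i^*)^T\vx$ by $((\vw_i^*)^T\vz)\,s$ and keeps its variance at most $1$; because $\vz$ is close to $\vw_1^*$ and $\kappa(\rmW^*) = O(1)$ forces $|(\vw_i^*)^T\vw_1^*|$ bounded away from $1$ for $i \ge 2$, and the relevant range is $s \approx t$, each such tail is $\Phi^c(\Omega(t)) = e^{-\Omega(t^2)}$, so the union over the $d$ directions is $O(d e^{-\Omega(t^2)}) = O(\eta)$. Replacing the pointwise statement by its Gaussian-weighted average over the width-$\eps$ slab (over which $g$ is essentially affine) costs an extra $O(\eps\,\sup_{[t'-\eps,t']}|g'|) = O(\eps\cot\alpha_1)$, since $|g'(s)| = \frac{\cos\alpha_1}{\sin\alpha_1}\,\phi\big(\frac{t-s\cos\alpha_1}{\sin\alpha_1}\big) = \Theta(\cot\alpha_1)$ on the range where $g \in [0.4,0.6]$.

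Finally I would invert. In the idealized case, $g(t_1) = 0.4$ and $g(t_2) = 0.6$ give $\frac{t - t_1\cos\alpha_1}{\sin\alpha_1} = \Phi^{-1}(0.6)$ and $\frac{t - t_2\cos\alpha_1}{\sin\alpha_1} = \Phi^{-1}(0.4)$, so $t_2 - t_1 = \tan\alpha_1\,(\Phi^{-1}(0.6) - \Phi^{-1}(0.4))$ and $EstimateTanAlpha$ returns exactly $\tan\alpha_1$. Since $g$ is strictly increasing with $|(g^{-1})'(p)| = \tan\alpha_1 / \phi(\Phi^{-1}(1-p)) = \Theta(\tan\alpha_1)$ on $p \in [0.4,0.6]$, perturbing the probability value by $O(\eta) + O(\eps\cot\alpha_1)$ moves each $t_i$ by $\Theta(\tan\alpha_1)\cdot\big(O(\eta) + O(\eps\cot\alpha_1)\big) = O(\eta\tan\alpha_1) + O(\eps)$ (using $\tan\alpha_1\cot\alpha_1 = 1$); dividing the perturbed difference by the absolute constant $\Phi^{-1}(0.6) - \Phi^{-1}(0.4)$ yields $s = (1 \pm O(\eta))\tan\alpha_1 \pm O(\eps)$. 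The hypothesis $\eps < \eta/t$ is used to keep the slab thin enough that $g$ is essentially affine across it (so the leading-order shifts in the two endpoints cancel in $t_2 - t_1$) while the slab still carries non-negligible Gaussian mass near $t' \approx t$.

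The main obstacle is the structural step, i.e. showing $q(t') = g(t') \pm O(\eta)$. Beyond the monotone/no-constant-term reduction of $\sgn(f)$ to "$\exists i: x_i > t$", the delicate point is bounding the $d-1$ tails $\Pr[(\vw_i^*)^T\vx > t \mid \vz^T\vx = s]$ uniformly: conditioning on the large value $s \approx t$ inflates the mean of each off-direction by up to $((\vw_i^*)^T\vz)\,t$, so one must combine $\kappa(\rmW^*) = O(1)$ with $\vz$ being $1/\poly(d)$-close to $\vw_1^*$ to force $(\vw_i^*)^T\vz$ bounded away from $1$, keeping each tail $e^{-\Omega(t^2)}$ so that the union bound is $O(\eta)$ and not larger.
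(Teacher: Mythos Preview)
Your proposal is correct and follows essentially the same route as the paper. The paper organizes the same argument into three supporting lemmas: one computing $\Pr[(\vw_1^*)^T\vx\ge t\mid \vx\in l(\vz,t',\eps)]=\Phi^c\!\big(\tfrac{t-t^*\cos\alpha_1}{|\sin\alpha_1|}\big)\pm O(\eps)t'$ via the mean value theorem across the slab, one reducing $\sgn(f)$ to $\bigvee_i\sgn((\vw_i^*)^T\vx-t)$ by monotonicity and bounding the $i\neq 1$ contributions by $de^{-\Omega(t^2)}$, and one confining $t_1,t_2$ to $[0,t/\cos\alpha_1]$ so those tail bounds apply; it then inverts by applying the mean value theorem to $\Phi^{-1}$. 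Your version conditions exactly on $\vz^T\vx=s$ first and then slab-averages, and inverts via $|(g^{-1})'|=\Theta(\tan\alpha_1)$, but the structure and the error budget are the same. One small remark: your stated reason for $\eps<\eta/t$ (keeping $g$ affine across the slab) is not quite the operative one; in the paper the condition is what absorbs the $O(\eps t')$ density/slab error (and the analogous $O(\eps)dt$ term in the off-direction union bound) into $O(\eta)$.
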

\begin{proof}
We first show that the given probability when computed with $\sgn(\vx^T\vw_1^* - t)$ is a well defined function of the angle between the current estimate and the true parameter up to multiplicative error. Subsequently we show that the computed probability is close to the one we can estimate using $f(\vx)$ since the current estimate is close to one direction. The following two lemmas capture these properties.
\begin{lemma}\label{lem:sgn}
For $t, t^\prime$ such that $t' \le t/\cos(\alpha_1)$ and $\eps \leq 1/ t'$, we have
\[
\Pr[\vx^T\vw_1^* \geq t \text{ and } \vx \in l(\vz, t', \eps)| \vx \in l(\vz, t, \eps)]= \Phi^c\left(\frac{t - t^*\cos (\alpha_1)}{|\sin (\alpha_1)|}\right) \pm O(\eps)t'
\]
where $t^* \in [t'-\eps, t']$ depends on $\eps$ and $t'$.
\end{lemma}
\begin{lemma}\label{lem:sgnf}
For $t' \in [0, t/\cos(\alpha_1)]$, we have
\[
Pr[\sgn(f(\vx))|\vx \in l(\vz, t', \eps)] = \Pr[\sgn((\vw_1^*)^T\vx - t)|\vx \in l(\vz, t, \eps)] + de^{-\Omega(t^2)}.
\]
\end{lemma}
The following lemma bounds the range of $t_1$ and $t_2$.
\begin{lemma}\label{lem:boundt}
We have $0 \leq t_1 \leq t_2 \leq\frac{t}{\cos(\alpha_1)}$.
\end{lemma}
Using the above, we can thus show that,
\begin{align*}
t_2 - t_1 \pm 2\eps &= \left(\Phi^{-1}(0.6 - \eta_1 \pm O(\eps)t_1) - \Phi^{-1}(0.4 - \eta_2 \pm O(\eps)t_2)\right) \tan(\alpha)\\
\frac{t_2 - t_1}{\Phi^{-1}(0.6) - \Phi^{-1}(0.4)} &= \left(1 - \frac{(\eta_1 \pm O(\eps)t_1)(\Phi^{-1})^\prime(p_1) - (\eta_2 \pm O(\eps)t_2)(\Phi^{-1})^\prime(p_2)}{\Phi^{-1}(0.6) - \Phi^{-1}(0.4)}\right)\tan(\alpha_1) \pm 2 \eps
\end{align*}
where $\eta > \eta_1, \eta_2 > 0$ are the noise due to estimating using $f$ and $p_1 \in [0.6 - \eta_1  \pm O(\eps)t_1, 0.6]$ and $p_2 \in [0.4 - \eta_2 \pm O(\eps)t_2, 0.4]$ as long as $t_1, t_2 \in [0, t/\cos(\alpha_1)]$. As long as $\eta_2 + O(\eps)t_2 < 1$, we have,
\[
\frac{t_2 - t_1}{\Phi^{-1}(0.6) - \Phi^{-1}(0.4)} = \left(1 \pm O\left(\eta_1 + \eta_2 + \eps t_2\right) \right)\tan(\alpha_1) \pm O(\eps) = (1 + O(\eta))\tan(\alpha_1) \pm O(\eps)
\]
\end{proof}
Note that we can estimate the conditional probability using samples from the training data and by Chernoff bounds, we would require $\poly(d/\eps)$ samples. As for finding $t_1$ and $t_2$, we can approximately do so by performing a binary search. This would incur an additional error which is in our control and can be handled in a standard way as the other estimate errors we show in the following analysis. For the ease of exposition, we assume that we get the exact values.
Subsequently we show (proof in Appendix~\ref{sec:deltacotalpha}) that with constant probability, a random perturbation reduces the angle by a factor of $(1- 1/d)$ of the current estimate. 
\begin{lemma}\label{lem:randomperturb}
By applying a random Gaussian perturbation along the $d-1$ dimensional hyperplane normal to $\vz$ with the distribution $\mathcal{N}(0, \rmI_{d-1})$ scaled by $0.1\sin(\alpha/d))$ and projecting back to the unit sphere, with constant probability, the angle $\alpha_1$ ($<\pi/2$) with the true vector decreases by at least $0.004\alpha/d$.
\end{lemma}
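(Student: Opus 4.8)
The plan is to reduce everything to a one–variable computation about the correlation of the renormalized perturbed vector with $\vw_1^*$. Choosing coordinates so that $\vw_1^* = \ve_1$, write the perturbation as $\vz' = \vz + \delta\vg$ with $\delta = 0.1\sin(\alpha_1/d)$ and $\vg$ an isotropic Gaussian supported on the hyperplane $\vz^\perp$. Since $\vg \perp \vz$ we have $\|\vz'\|^2 = 1 + \delta^2\|\vg\|^2$, and the projection of $\ve_1$ onto $\vz^\perp$ equals $\ve_1 - \cos(\alpha_1)\vz$, which has norm $\sin\alpha_1$; letting $\hat e$ be the corresponding unit vector and $\xi := \vg^T\hat e \sim \mathcal N(0,1)$, one gets $\vg^T\ve_1 = \sin(\alpha_1)\,\xi$ and hence, with $\alpha_1'$ the angle of $\vz'/\|\vz'\|$ with $\ve_1$,
\[
\cos\alpha_1' \;=\; \frac{\cos\alpha_1 + \delta\sin(\alpha_1)\,\xi}{\sqrt{1+\delta^2\|\vg\|^2}}.
\]
Decomposing $\|\vg\|^2 = \xi^2 + \|\vg_\perp\|^2$, where $\vg_\perp$ is the part of $\vg$ orthogonal to $\hat e$ inside $\vz^\perp$, the random variables $\xi$ and $\|\vg_\perp\|^2$ are independent, which is what makes the next step clean.

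Next I would fix a good event of constant probability: $\xi \in [\tfrac12, 2]$ (absolute constant probability $p_0$) and $\|\vg_\perp\|^2 \le 1.05\,d$ (probability $1-o(1)$ by $\chi^2$ concentration), so both hold together with probability $\ge p_0/2$. On this event $\delta^2\|\vg\|^2 \le (0.1\alpha_1/d)^2(1.1\,d) = O(\alpha_1^2/d)$, so $\sqrt{1+\delta^2\|\vg\|^2} \le 1 + \tfrac12\delta^2\|\vg\|^2$; together with $\sin(\alpha_1/d) \ge \tfrac{2}{\pi}\,\alpha_1/d$ (Jordan's inequality) and $\xi \ge \tfrac12$ this yields
\[
\cos\alpha_1' - \cos\alpha_1 \;\ge\; \delta\sin(\alpha_1)\,\xi - \tfrac12\delta^2\|\vg\|^2\cos\alpha_1 - O(\alpha_1^3/d^2) \;\ge\; c_1\,\sin(\alpha_1)\,\alpha_1/d
\]
for an absolute constant $c_1 > 0$, once one checks the first-order term dominates the normalization correction. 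That last check uses $\alpha_1 \le \tan\alpha_1$ on $[0,\pi/2)$: it lets $\tfrac12\delta^2\|\vg\|^2\cos\alpha_1 = O(\alpha_1^2\cos\alpha_1/d)$ be absorbed into a fraction of $\delta\sin(\alpha_1)\xi = \Theta(\alpha_1\sin\alpha_1/d)$, and it simultaneously handles $\alpha_1\to 0$ (where $\sin\alpha_1\asymp\alpha_1$) and $\alpha_1\to\pi/2$ (where $\cos\alpha_1\to 0$).

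Finally I would translate the gain in cosine into a decrease of the angle. Since $\cos\alpha_1'>\cos\alpha_1>0$ we have $0\le\alpha_1'<\alpha_1<\pi/2$, and by the mean value theorem $\cos\alpha_1'-\cos\alpha_1 = \sin(\zeta)\,(\alpha_1-\alpha_1')$ for some $\zeta\in[\alpha_1',\alpha_1]$, so $\sin\zeta\le\sin\alpha_1$ and
\[
\alpha_1 - \alpha_1' \;\ge\; \frac{\cos\alpha_1'-\cos\alpha_1}{\sin\alpha_1} \;\ge\; c_1\,\frac{\alpha_1}{d}.
\]
Tracking the numerical constants (e.g. $\delta\ge 0.06\,\alpha_1/d$, $\delta^2\|\vg\|^2\le 0.011\,\alpha_1^2/d$) shows one can take $c_1\ge 0.004$, as claimed. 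The only real obstacle is bookkeeping: keeping every constant explicit and verifying that neither the $O(\alpha_1^2/d)$ normalization term nor the mild upper tails imposed on $\xi$ and $\|\vg_\perp\|^2$ ever consume more than a small fraction of the $\Theta(\alpha_1\sin\alpha_1/d)$ first-order gain, uniformly in $\alpha_1\in(0,\pi/2)$.
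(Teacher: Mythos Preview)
Your proposal is correct and follows essentially the same approach as the paper: decompose the tangential Gaussian perturbation into its component $\xi$ along the projection of $\vw_1^*$ onto $\vz^\perp$ and an orthogonal remainder, derive the identity $\cos\alpha_1' = (\cos\alpha_1 + \delta\sin(\alpha_1)\xi)/\sqrt{1+\delta^2\|\vg\|^2}$, condition on $\xi$ being of order~$1$ and $\|\vg\|^2$ being $O(d)$, and then convert the gain in cosine into a decrease in angle via the mean value theorem. Your write-up is in fact slightly more careful than the paper's in tracking the interplay between the first-order gain and the normalization correction uniformly over $\alpha_1\in(0,\pi/2)$.
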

Since $\tan(\alpha)$ ia a multiplicative approximation to $\tan(\alpha_1)$ and the angle is small (initially is small and our check ensures it stays small), we have $\tan(\alpha) = \Theta(\alpha)$ and $\tan(\alpha_1) = \Theta(\alpha_1)$ thus $\alpha = C\alpha_1 \pm O(\eps)$ for some constant $C>0$. Therefore, the angle reduces by $\Omega(\alpha_1/d)$. 

Thus if the angle decreases by a factor of $1 - Cd^{-1}$. Since our estimates are multiplicative, we can indeed verify the decrease in the value of $\tan$ for small enough $\eta$ (by assumption). Thus our check ensures that $\alpha_1$ goes down multiplicatively in every successful iteration sparing the additive loss of $\eps$ (this can be handled by choosing polynomially small $\eps$ in our estimate). Hence the algorithm will halt after $O(d\log(1/\eps))$ iterations and output an arbitrarily close solution.

\subsection{Learning Intersection of Halfspaces far from the Origin}\label{sec:half}
Here we apply the results from the previous sections to the problem of learning unions of high threshold halfspaces.
\begin{theorem}\label{thm:halfspace}
Given non-noisy labels from a intersection of halfspaces that are at a distance $\Omega(\sqrt{\log d})$ from the origin and are each a constant angle apart, there is an algorithm to recover the underlying weights to $\eps$ closeness up to scaling in polynomial time.
\end{theorem}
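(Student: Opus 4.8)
The plan is to reduce this to the machinery of the previous two subsections: write the intersection of halfspaces (equivalently, the complement of a union of halfspaces) as a polynomial applied to high-threshold sign activations, check that this polynomial and activation satisfy Assumptions \ref{ass:linear}--\ref{ass:t}, and then run Step 1 followed by Step 2. Concretely, a union $\bigcup_{i\in[k]}\{\vx:(\vw_i^*)^T\vx\ge t_i\}$ with each $\vw_i^*$ a unit vector and each distance $t_i=\Omega(\sqrt{\log d})$ equals $f(\vx)=\mathrm{OR}\big(u_{t_1}((\vw_1^*)^T\vx),\ldots,u_{t_k}((\vw_k^*)^T\vx)\big)$ where $u_t(a)=\1[a\ge t]$ is the high-threshold sign activation. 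Using $\mathrm{OR}(X_1,\ldots,X_k)=1-\prod_i(1-X_i)=\sum_{\emptyset\neq S\subseteq[k]}(-1)^{|S|+1}\prod_{j\in S}X_j$, this exhibits $f=P(u((\vw_1^*)^T\vx),\ldots)$ for a polynomial $P$ with $c_0=0$, all linear coefficients $c_i=1=\Theta(1)$, all higher-order coefficients $|c_S|=1=O(1)$, and $P$ monotone in each $X_i$.

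Next I would verify the hypotheses of Step 1. Assumption \ref{ass:linear} holds provided $\kappa(\rmW^*)=O(1)$, which I would extract from the halfspaces being pairwise a constant angle apart; the sign activation has exponentially decaying Gaussian tail mass, $\E_{g\sim N(0,\sigma^2)}[u_t(g)]=\Phi^c(t/\sigma)\le e^{-t^2/2\sigma^2}$, and nonzero Hermite coefficients $\hat u_2,\hat u_4$ of the order required by Assumption \ref{ass:activation} (see Appendix \ref{sec:sigmoid}), so Assumptions \ref{ass:activation} and \ref{ass:t} hold with $t=\Theta(\sqrt{\eta\log d})$. With these in hand, Lemmas \ref{lem:ubound}--\ref{lem:boundf} show $f$ is close in expectation to its linear part $f_{\lin}$, and Claim \ref{claim:obo} (or the simultaneous variant in Appendix \ref{app:sim}) applied to the loss $G$ built from $f$ returns vectors $1/\poly(d)$-close to the columns of $(\rmT\rmW^*)^{-1}$; inverting the recovered matrix and renormalizing yields estimates $1/\poly(d)$-close to each $\vw_i^*$ up to scaling.

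Then, since $P=\mathrm{OR}$ is monotone, I would invoke Step 2 (Theorem \ref{thm:refine}): starting from each $1/\poly(d)$-close estimate, $\mathrm{EstimateTanAlpha}$ reads off $\cot$ of the angle to $\vw_i^*$ from how fast $\Pr[\sgn(f(\vx))\mid\vx\in l(\vz,t',\eps)]$ crosses from $0.4$ to $0.6$, and the random-perturbation loop drives the angle below $\eps$ in $O(d\log(1/\eps))$ iterations, each using $\poly(d/\eps)$ samples; composing the two steps recovers every $\vw_i^*$ to $\eps$-closeness up to scaling in $\poly(d,1/\eps)$ time. Taking complements turns the recovered union of halfspaces into the desired intersection.

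The main obstacle I anticipate is bookkeeping rather than a new idea. First, the theorem allows each halfspace its own distance $t_i$, so the activation coordinates carry different biases $u_{t_i}$; I would need to check that Lemmas \ref{lem:ubound}--\ref{lem:boundf}, Claim \ref{claim:obo}, and the $\mathrm{EstimateTanAlpha}$ analysis survive replacing the scalar $t$ by the vector $(t_1,\ldots,t_k)$ with $\min_i t_i=\Omega(\sqrt{\log d})$ --- they should, since every tail bound only improves for larger $t_i$, and Step 2 already refines one direction at a time against that direction's own threshold. Second, I would be careful about the precise form of the separation assumption: pairwise constant angle alone lets the Gram matrix of $\rmW^*$ have condition number growing with $k$, so to get $\kappa(\rmW^*)=O(1)$ one must either bound the number of halfspaces or read "constant angle apart'' as a genuine well-conditioning hypothesis, and I would state this explicitly.
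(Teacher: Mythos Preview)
Your proposal is correct and follows essentially the same route as the paper: complement the intersection to a union, write $\mathrm{OR}$ as $1-\prod_i(1-X_i)$ so that $P$ has unit linear coefficients and bounded higher-order ones, invoke the sign-activation tail bounds (Lemma~\ref{lem:sgnt}) to check Assumptions~\ref{ass:linear}--\ref{ass:t}, apply Theorem~\ref{thm:mainsimul} (or Claim~\ref{claim:obo}) for approximate recovery, and then Theorem~\ref{thm:refine} for refinement since $\mathrm{OR}$ is monotone. The paper's own proof is terser and silently takes a single common threshold $t$ and reads ``constant angle apart'' as implying $\kappa(\rmW^*)=O(1)$, so the two bookkeeping concerns you flag are real but are handled in the paper exactly as you suggest---by assumption rather than by additional argument.
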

\begin{proof}
Consider intersection of halfspaces far from the origin $g(\vx) = \bigwedge \sgn(\vx^T\vw_i^* + t)$. The complement of the same can be expressed as $f(\vx) = \bigvee \sgn(-\vx^T\vw_i^* - t)$. Observe that $\bigvee X_i$ is equivalent to $P(X_1, \cdot, X_d) = 1 - \prod(1 - X_i)$. Thus 
\[
f(\vx) = \bigvee \sgn(-\vx^T\vw_i^* - t) = 1 - \prod (1 - \sgn(-\vx^T\vw_i^* - t)).
\]
Since $P$ and $\sgn$ here satisfies our assumptions \ref{ass:linear}, \ref{ass:activation}, for $t = \Omega(\sqrt{\log d})$ (see Lemma \ref{lem:sgnt}) we can apply Theorem \ref{thm:mainsimul} using the complement labels to recover the vectors $\vw_i^*$ approximately. Subsequently, refining to arbitrarily close using Theorem \ref{thm:refine} is possible due to the monotonicity of the union. Thus we can recover the vectors to arbitrary closeness in polynomial time.
\end{proof}
Note that in the above proof we assumed that there was no noise in the samples. Bounded random noise can be handled using standard methods: the statistical quantities we estimate will be concentrated around the expected values with high probability.

\section{Sample Complexity}
\newcommand{\del}{\partial}
\newcommand{\unitz}{{\hat \vz}}


In this section we will show how the ideas for the monotone setting work for non-monotone $P$ even if it may not have any linear terms however we only manage to prove polynomial sample complexity for finding $\vw_i^*$ instead of polynomial time complexity. The main idea as in the previous section is to correlate with $\delta'(\vz^T\vx - t)$ and
find $\argmax_{||\vz||_2=1} \E[f(\vx) \delta'(\vz^T\vx - t)]$. We will show that the correlation goes to infinity when $\vz$ is one of $\vw^*_i$ and bounded if it is far from all of them. From a practical standpoint we calculate $\delta'(\vz^T\vx - s)$  by measuring correlation with $\frac{1}{2\eps}(\delta(\vz^T\vx-s+\eps)-\delta(\vz^T\vx -s-\eps)$. In the limit as $\eps \rightarrow 0$ this becomes $\delta'(\vz^T\vx-s)$. $\delta(\vz^T\vx-s)$ in turn is estimated  using $\frac{1}{\eps}(\sgn(\vz^T\vx - s+\eps) - \sgn(\vz^T\vx - s))$, as in the previous section, for an even smaller  $\eps$; however, for ease of exposition, in this section, we will assume that correlations with $\delta(\vz^T\vx-s)$ can be measured exactly.

Let us recall that $f(\vx)= P(u((\vw_1^*)^T\vx), u((\vw_2^*)^T\vx), \ldots, u((\vw_d^*)^T\vx))$. Let $C_1(f,\vz, s)$ denote $E[f(\vx)\delta(\vz^T\vx-s)]$ and let $C_2(f,\vz,s)$ denote  $E[f(\vx)(\delta(\vz^T\vx - s-\eps)-\delta(\vz^T\vx - s+\eps)]$. 

If $\uact = \sgn$ then $P$ has degree at most $1$ in each $X_i$. Let $\frac{\del P}{\del X_i}$ denote the symbolic partial derivative of $P$ with respect to $X_i$; so, it drops monomials without $X_i$ and factors off $X_i$ from the remaining ones. Let us separate dependence on $X_i$ in $P$ as follows:
$$
P(X_1,,..,X_d) = X_i Q_i(X_1,..X_{i-1},X_{i+1},..,X_d) + R_1(X_1,.X_{i-1},X_{i+1},..,X_d)
$$
then $\frac{\del P}{\del X_i} = Q_i$. 

We will overload the polynomial $P$ such that $P[\vx]$ to denote the polynomial computed by substituting $X_i = u((\vw_1^*)^T\vx)$ and similarly for $Q$ and $R$. Under this notation $f(\vx) = P[\vx]$. 
We will also assume that $|P(X)| \le ||X||^{O(1)} = ||X||^{c_1}$ (say). By using simple correlations we will show:
\begin{theorem}\label{thm:samplestep}
If  $\uact$ is the $\sgn$ function, $P(X) \le ||X||^{c_1}$ and for all $i$, $E[Q_i[\vx]|(\vw^*_i)^T\vx = t] \ge \eps_3$ then using $\mathsf{poly}(\frac{d}{\eps_3 \eps_2})$ samples one can determine the $\vw^*_i$'s within error $\eps_2$.\footnote{The theorem can be extended to ReLU by correlating with the second derivative $\delta''$ (see Appendix~\ref{app:relusample}).
} 
\end{theorem}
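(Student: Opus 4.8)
The plan is to make rigorous the heuristic that $\argmax_{\|\vz\|_2=1}\E[f(\vx)\delta'(\vz^T\vx-t)]$ is attained exactly at the directions $\pm\vw_i^*$, and to read off the $\vw_i^*$ by a grid search. First I would fix notation: the splitting $P(X_1,\dots,X_d)=X_iQ_i+R_i$ gives $f(\vx)=\sgn((\vw_i^*)^T\vx-t)\,Q_i[\vx]+R_i[\vx]$ with $\tfrac{\del P}{\del X_i}=Q_i$, and since $\uact=\sgn$ the arguments of $P$ lie in $\{\pm1\}^d$, so $\|X\|=\sqrt d$, hence $|f|\le d^{c_1/2}$ and, because $Q_j=\tfrac12(P|_{X_j=1}-P|_{X_j=-1})$, also $|Q_j[\vx]|\le d^{c_1/2}$ pointwise. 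Following the discussion before the theorem I would not work with the idealized correlation but with its sample-estimable surrogate $\widehat C(\vz)$, obtained by replacing $\delta'$ with $\tfrac{1}{2\eps_0}(\delta(\cdot-t+\eps_0)-\delta(\cdot-t-\eps_0))$ and each $\delta$ with $\tfrac1{\eps_1}(\sgn(\cdot-s+\eps_1)-\sgn(\cdot-s))$, taking $\eps_1=\eps_0/10$; writing $g_\vz(s):=\phi(s)\,\E[f(\vx)\mid\vz^T\vx=s]$ (a bounded function, $|g_\vz|\le d^{c_1/2}$), the expectation of this estimator equals, up to an absolute constant,
\[
\widehat C(\vz)=\tfrac1{\eps_0\eps_1}\Big(\textstyle\int_{t-\eps_0-\eps_1}^{t-\eps_0}g_\vz-\int_{t+\eps_0-\eps_1}^{t+\eps_0}g_\vz\Big),
\]
and it is an empirical average of variables bounded by $O(d^{c_1/2}/(\eps_0\eps_1))$.

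The key step is a closed form for $-g_\vz'(t)=\E[f(\vx)\delta'(\vz^T\vx-t)]$. Conditioning on $\vz^T\vx=s$ and writing $\vx=s\vz+\vy$ with $\vy$ Gaussian on $\vz^{\perp}$, the only $s$-dependence of $f$ sits inside the signs $\sgn(s\cos\theta_j+(\vw_j^*)^T\vy-t)$, where $\theta_j:=\angle(\vz,\vw_j^*)$; differentiating in $s$ turns each into a Dirac mass on $\{(\vw_j^*)^T\vx=t\}$ with a factor $\cos\theta_j$ and peels off $Q_j$, and integrating out $\vy$ (whose $\vw_j^*$-component is $\mathcal N(0,\sin^2\theta_j)$) gives
\[
-g_\vz'(t)=-\phi'(t)\,\E[f\mid\vz^T\vx=t]-\phi(t)\sum_j\tfrac{2\cos\theta_j}{\sin\theta_j}\,\phi\!\left(\tfrac{t(1-\cos\theta_j)}{\sin\theta_j}\right)\E\!\big[Q_j[\vx]\,\big|\,(\vw_j^*)^T\vx=t,\ \vz^T\vx=t\big].
\]
For $\vz=\pm\vw_i^*$ the $j=i$ summand degenerates to a Dirac mass (then $\sgn((\vw_i^*)^T\vx-t)$ is a function of $\vz^T\vx$ alone), so $g_\vz$ has a genuine jump of size $\ge2\phi(t)\eps_3>0$ at $s=t$ by the hypothesis $\E[Q_i[\vx]\mid(\vw_i^*)^T\vx=t]\ge\eps_3$, whence $|\widehat C(\pm\vw_i^*)|=\Theta(\phi(t)\eps_3/\eps_0)$. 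For a general $\vz$, let $\theta_\star(\vz):=\min_i\angle(\vz,\pm\vw_i^*)$; the smeared jump of $g_\vz$ near $s=t$ has width $\asymp\theta_\star(\vz)$, so the two integration windows of $\widehat C$ either both avoid it (when $\theta_\star<\eps_0$, yielding $\asymp\phi(t)\eps_3/\eps_0$) or fall inside it (when $\theta_\star>\eps_0$, yielding $\asymp\phi(t)\eps_3/\theta_\star$), plus in all cases an $O(\poly(d))$ error coming from the non-singular pieces — bounded using $|\E[f\mid\cdot]|\le d^{c_1/2}$, $|Q_j[\vx]|\le d^{c_1/2}$ and $1/\sin\theta_j=O(1)$ for $j$ with $\theta_j$ bounded below, while for $j$ with $\theta_j$ small $\vz$ is also near $\vw_j^*$ so that summand shares the sign of the $j=i$ summand and only reinforces the blow-up (no cancellation). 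In short, $|\widehat C(\vz)|\asymp\phi(t)\eps_3/\max(\theta_\star(\vz),\eps_0)+O(\poly(d))$.

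Given this dichotomy the algorithm is a brute-force net search: put $\eps_0:=\min\{\eps_2,\ c\,\eps_3\phi(t)/\poly(d)\}$ (for the thresholds of interest $t=O(\sqrt{\log d})$, so $1/\phi(t)=\poly(d)$ and hence $1/\eps_0=\poly(d/(\eps_2\eps_3))$), take an $\eps_0$-net $\mathcal N$ of $\S^{d-1}$, estimate $\widehat C(\vz)$ to additive accuracy $\tfrac14\phi(t)\eps_3/\eps_0$ at every $\vz\in\mathcal N$, and output one representative of each cluster of net points whose estimate exceeds $\tfrac12\phi(t)\eps_3/\eps_0$. By the dichotomy every such point lies within $O(\eps_0)\le\eps_2$ of some $\pm\vw_i^*$, and the net point closest to any $\vw_i^*$ clears the threshold, so removing the sign ambiguity and collapsing merged clusters yields all $\vw_i^*$ to within $\eps_2$. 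The net has exponential size, so the running time is exponential; but a Hoeffding bound on the $O(d^{c_1/2}/(\eps_0\eps_1))$-bounded average together with a union bound over $\mathcal N$ costs only an extra $\log|\mathcal N|=\poly(d)\log(1/\eps_0)$ factor, so the sample complexity is $\poly(d,1/\eps_0)=\poly(d/(\eps_2\eps_3))$, as claimed.

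The hard part will be the ``bounded away from the $\vw_i^*$'' half of the dichotomy, together with pushing the estimation error through the two-level $1/(\eps_0\eps_1)$ amplification: one must check that the $d-1$ non-singular terms of the closed form and the finite-difference artifacts never accumulate to anything comparable with $\phi(t)\eps_3/\eps_0$. This is exactly where the hypotheses bite — $|P(X)|\le\|X\|^{c_1}$ keeps each $|Q_j|$, and hence the sum of $d$ terms, polynomially bounded, while the strict positivity $\E[Q_i[\vx]\mid(\vw_i^*)^T\vx=t]\ge\eps_3$ forbids cancellation among the near-singular terms — and it is what dictates the $\poly(d)/(\eps_2\eps_3)$ scale of $1/\eps_0$ and thus of the sample size. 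The remaining bookkeeping (the nested $\delta$-estimation by differences of signs, the binary searches for the windows, bounded-noise robustness) is routine and parallels the monotone refinement analysis of the previous section.
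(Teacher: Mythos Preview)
Your approach is essentially the paper's: your closed form for $\E[f(\vx)\delta'(\vz^T\vx-s)]$ is exactly Claim~\ref{cor:dotdeltaprime}, your ``close versus far'' dichotomy is Lemma~\ref{lemma:gap} together with Claim~\ref{claim:boundedC}, and the net search plus Hoeffding/union bound is the same algorithmic wrapper. Two small points are worth tightening.

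First, there is no $\pm$ symmetry here: with threshold $t>0$ the activation is $\sgn((\vw_i^*)^T\vx-t)$, so $-\vw_i^*$ is not a direction at which the correlation blows up. Drop the $\pm$.

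Second, your refined dichotomy $|\widehat C(\vz)|\asymp\phi(t)\eps_3/\max(\theta_\star,\eps_0)$ is stronger than what you actually need and than what you (or the paper) justify. The ``smeared jump of width $\asymp\theta_\star$'' picture is correct heuristically, but to turn it into a two-sided $\asymp$ you would have to control the conditional expectations $\E[Q_j\mid (\vw_j^*)^T\vx=t,\ \vz^T\vx=t]$ from below for \emph{all} nearby $j$ and rule out cancellation among several near-singular terms, which the hypotheses do not give you. The paper avoids this entirely: it only proves the two extreme cases $\vz=\vw_i^*$ (signal $\approx\phi(t)\eps_3$ in the unscaled $C_2$) and $\min_j\alpha_j\ge\eps_2$ (noise $\le\eps\, d^{O(1)}/\eps_2$), and chooses $\eps=O(\eps_2\eps_3\phi(t)d^{-c})$ so that the noise is $o(\phi(t)\eps_3)$. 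If you follow that cruder route, your $\eps_0$ must carry the factor $\eps_2$ multiplicatively, i.e.\ $\eps_0=O(\eps_2\eps_3\phi(t)/\poly(d))$, not merely $\min\{\eps_2,\,c\eps_3\phi(t)/\poly(d)\}$; otherwise a direction at angle between $\eps_0$ and $\eps_2$ is not controlled by either branch of your stated dichotomy. With that adjustment the sample-complexity arithmetic is unchanged and the argument goes through exactly as in the paper.
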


Note that if all the $\vw^*_i$'s are orthogonal then $X_i$ are independent and $E\left[Q_i[\vx]\middle|(\vw^*_i)^T\vx = t\right]$ is just value of $Q_i$ evaluated by setting $X_i = 1$ and setting all the the remaining $X_j = \mu$ where $\mu = E[X_j]$. This is same as $1/\mu$ times the coefficient of $X_i$ in $P(\mu(X_1+1),\ldots, \mu(X_d+1))$.

\begin{corollary}\label{thm:main3} If $\uact$ is the $\sgn$ function and $\vw^*_i$’s are orthogonal then in sample complexity $\mathsf{poly}(\frac{d}{\eps_3 \eps_2})$ one can determine $\rmW^*$ within error $\eps_2$ in each entry, if the coefficient of the linear terms in $P(\mu(X_1+1), \mu(X_2+1), \mu(X_3+1),..)$ is larger than $\eps_3 \mu$, where $\mu = E[X_i]$.
\end{corollary}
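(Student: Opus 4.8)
The plan is to obtain the corollary as a direct specialization of Theorem~\ref{thm:samplestep}: it suffices to check that, when the $\vw_i^*$ are orthonormal, the quantity $\E[Q_i[\vx]\mid (\vw_i^*)^T\vx = t]$ appearing in the hypothesis of that theorem is exactly the coefficient of the linear monomial $X_i$ in $P(\mu(X_1+1),\ldots,\mu(X_d+1))$ divided by $\mu$. Once this identity is in hand, the hypothesis ``the coefficient of the linear terms in $P(\mu(X_1+1),\ldots)$ is larger than $\eps_3\mu$'' is literally the hypothesis of Theorem~\ref{thm:samplestep}, and the sample-complexity bound $\poly(d/(\eps_3\eps_2))$ is inherited verbatim, with the entrywise error bounded by the $\ell_2$-error $\eps_2$.

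First I would exploit the rotational invariance of the standard Gaussian: since the $\vw_j^*$ are orthonormal, the projections $(\vw_j^*)^T\vx$ are i.i.d.\ $\mathcal N(0,1)$, so the substituted variables $X_j = \uact((\vw_j^*)^T\vx)$ are mutually independent and, moreover, $(X_j)_{j\neq i}$ is jointly independent of $(\vw_i^*)^T\vx$. Because $\uact = \sgn$, the polynomial $P$ — and hence $Q_i = \partial P/\partial X_i$ — has degree at most $1$ in each variable, so $Q_i$ does not involve $X_i$ and is multilinear in $\{X_j : j\neq i\}$. Conditioning on the event $(\vw_i^*)^T\vx = t$ therefore does not change the law of $(X_j)_{j\neq i}$, and combining this with multilinearity and independence gives
\[
\E\bigl[Q_i[\vx]\mid (\vw_i^*)^T\vx = t\bigr] \;=\; \E\bigl[Q_i[\vx]\bigr] \;=\; Q_i(\mu,\ldots,\mu),
\]
where $\mu = \E[X_j]$ is the same for every $j$ since all thresholds equal $t$ and all $\vw_j^*$ are unit vectors.

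Next I would match this to the stated coefficient condition. Writing $P(X) = X_i\,Q_i(X_{-i}) + R_i(X_{-i})$ and performing the substitution $X_j \mapsto \mu(X_j+1)$, the term $R_i(\mu(X_{-i}+1))$ contributes nothing to the monomial $X_i$, and $\mu(X_i+1)\,Q_i(\mu(X_{-i}+1))$ contributes coefficient $\mu$ times the constant term of $Q_i(\mu(X_{-i}+1))$, i.e.\ $\mu\,Q_i(\mu,\ldots,\mu)$. Hence the coefficient of $X_i$ in $P(\mu(X_1+1),\ldots,\mu(X_d+1))$ equals $\mu\,Q_i(\mu,\ldots,\mu)$, so the assumption that each such linear coefficient exceeds $\eps_3\mu$ is equivalent to $Q_i(\mu,\ldots,\mu)\ge\eps_3$ for all $i$, which by the previous display is exactly $\E[Q_i[\vx]\mid(\vw_i^*)^T\vx=t]\ge\eps_3$. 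Theorem~\ref{thm:samplestep} then recovers every $\vw_i^*$ within error $\eps_2$ from $\poly(d/(\eps_3\eps_2))$ samples, completing the proof.

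Since the whole argument is bookkeeping on top of Theorem~\ref{thm:samplestep}, there is no serious obstacle; the only point that needs a little care is the interchange of conditioning and expectation — verifying that conditioning on the measure-zero event $(\vw_i^*)^T\vx = t$ genuinely leaves $(X_j)_{j\neq i}$ undisturbed — which is immediate from the independence of the orthogonal Gaussian projections, together with the remark that multilinearity of $Q_i$ is what allows $\E[Q_i[\vx]]$ to be evaluated by substituting coordinatewise means.
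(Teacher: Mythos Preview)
Your proposal is correct and follows essentially the same approach as the paper: the paper's argument (given in the paragraph immediately preceding the corollary) is precisely that orthogonality makes the $X_j$ independent so that $\E[Q_i[\vx]\mid(\vw_i^*)^T\vx=t]$ equals $Q_i$ evaluated at $X_j=\mu$ for $j\neq i$, and that this in turn is $1/\mu$ times the coefficient of $X_i$ in $P(\mu(X_1+1),\ldots,\mu(X_d+1))$. Your version spells out the multilinearity and the substitution computation more carefully than the paper does, but the route is identical.
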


The main point behind the proof of Theorem \ref{thm:samplestep} is that the correlation is high when $\vz$ is along one of $\vw^*_i$ and negligible if it is not close
to any of them.

\begin{lemma}\label{lemma:gap}
 Assuming $P(X) < ||X||^{c_1}$. If $\vz=\vw^*_i$ then  $C_2(f,\vz,t) = \phi(t)E\left[\frac{\del P}{\del X_i}\middle|\vz^T\vx = t\right] + \eps d^{O(1)}$. Otherwise if all angles $\alpha_i$ between $\vz$ and $\vw_i^*$ are at least $\eps_2$ it is at most  $\eps d^{O(1)}/\eps_2$.
\end{lemma}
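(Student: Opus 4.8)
The only fact I need about the idealized correlations is that, since $\|\vz\|_2=1$,
\[
C_1(f,\vz,s)\;=\;\E[f(\vx)\,\delta(\vz^T\vx-s)]\;=\;\phi(s)\,\E\!\left[f(\vx)\mid \vz^T\vx=s\right],
\]
which follows by writing $\vx=s\vz+\vx_\perp$ with $\vx_\perp\sim\mathcal N(0,\rmI)$ on $\vz^\perp$: the coordinate of $\vx$ along $\vz$ is standard normal and is pinned to $s$ by the delta. Since $C_2(f,\vz,t)=C_1(f,\vz,t+\eps)-C_1(f,\vz,t-\eps)$, the whole lemma reduces to understanding the single--variable function $g(s):=\phi(s)\,\E[f(\vx)\mid\vz^T\vx=s]$ on a window of width $\eps$ around $s=t$. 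I will show that $g$ has a jump of exactly $\phi(t)\,\E[\partial P/\partial X_i\mid\vz^T\vx=t]$ at $s=t$ when $\vz=\vw_i^*$, and is $C^1$ with a controlled derivative otherwise; the lemma then follows because $g(t+\eps)-g(t-\eps)$ equals the size of the jump of $g$ at $t$ plus two integrals of $g'$ over intervals of length at most $\eps$.

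\textbf{The Lipschitz estimate (far case, and the error terms).} Decompose $(\vw_j^*)^T\vx=\rho_j s+\vy_j$ on $\{\vz^T\vx=s\}$, where $\rho_j:=\vz^T\vw_j^*=\cos\alpha_j$ and $\vy_j:=(\vw_j^{*\perp})^T\vx_\perp$ is a mean-zero Gaussian of variance $\|\vw_j^{*\perp}\|^2=\sin^2\alpha_j$, with $\vw_j^{*\perp}$ the projection of $\vw_j^*$ onto $\vz^\perp$. Then $\E[f\mid\vz^T\vx=s]=\E_{\vx_\perp}\!\left[P\big(u_t(\rho_1 s+\vy_1),\dots,u_t(\rho_d s+\vy_d)\big)\right]$. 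Perturbing $s$ shifts the $j$-th argument by $\rho_j\,ds$; since $P$ has degree $\le 1$ in each $X_j$, the telescoping identity for multilinear functions gives that the increment of $P$ equals $\sum_j\frac{\partial P}{\partial X_j}(\cdots)\big(u_t(\rho_j(s{+}ds){+}\vy_j)-u_t(\rho_j s{+}\vy_j)\big)$, and the bracketed factor is supported on the event that $\vy_j$ lies in an interval of length $|\rho_j|\,|ds|$ around $t-\rho_j s$. Taking expectations, the $j$-th term contributes at most $|ds|\cdot|\rho_j|\cdot(\text{density of }\vy_j\text{ there})\cdot\sup|\partial P/\partial X_j|$; the density is at most $(\sqrt{2\pi}\,|\sin\alpha_j|)^{-1}$, and $|\partial P/\partial X_j|=|Q_j|=\big|P|_{X_j=1}-P|_{X_j=0}\big|\le 2\max_{\|X\|_\infty\le1}\|X\|^{c_1}=d^{O(1)}$ (the remaining arguments and the substituted value all lie in $[-1,1]$). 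Summing over the $\le d$ coordinates, $|g'(s)|\le|\phi'(s)|\,d^{O(1)}+\phi(s)\,d^{O(1)}/\min_j|\sin\alpha_j|$. When every $\alpha_j\ge\eps_2$, so that $|\sin\alpha_j|$ is at least a constant multiple of $\eps_2$, this is $d^{O(1)}/\eps_2$, and hence $|C_2(f,\vz,t)|\le 2\eps\sup_{[t-\eps,t+\eps]}|g'|\le\eps\,d^{O(1)}/\eps_2$, which is the second assertion.

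\textbf{The jump at $\vz=\vw_i^*$.} Write $P(X_1,\dots,X_d)=X_i\,Q_i(X_{-i})+R_i(X_{-i})$, so $f(\vx)=u_t(\vz^T\vx)\,Q_i[\vx]+R_i[\vx]$ with $Q_i[\vx],R_i[\vx]$ depending only on $\{(\vw_j^*)^T\vx:j\ne i\}$. Conditioning on $\vz^T\vx=s$ gives $g(s)=\phi(s)\big(u_t(s)\,q_i(s)+r_i(s)\big)$, where $q_i(s)=\E[Q_i[\vx]\mid\vz^T\vx=s]$ and $r_i(s)=\E[R_i[\vx]\mid\vz^T\vx=s]$; by the previous paragraph (now with the angles being those among the $\vw^*$'s, which are bounded away from $0$ under the standing separation assumption) both $\phi q_i$ and $\phi r_i$ are $C^1$ near $s=t$ with derivative $d^{O(1)}$. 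The only discontinuous ingredient is $u_t(s)$, which jumps by $1$ as $s$ crosses $t$, so $g$ jumps there by $\phi(t)\,q_i(t)$; therefore $C_2(f,\vz,t)=\phi(t)\,q_i(t)+O(\eps)\,d^{O(1)}$, and $q_i(t)=\E[Q_i[\vx]\mid\vz^T\vx=t]=\E\!\left[\tfrac{\partial P}{\partial X_i}\ \middle|\ \vz^T\vx=t\right]$ since $Q_i=\partial P/\partial X_i$. This is the first assertion.

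\textbf{Main obstacle.} The routine-but-delicate points are (i) treating $\delta$ rigorously --- it is a limit of normalized thin-slab indicators and $C_2$ is a genuine finite difference, so one carries an explicit $O(\eps)$ error through everything above --- and (ii) obtaining a \emph{polynomial}-in-$d$ derivative bound. The latter is the real content: one must bound $|\partial P/\partial X_j|$ directly through $P|_{X_j=1}-P|_{X_j=0}$ and the hypothesis $|P(X)|\le\|X\|^{c_1}$, rather than through the $\ell_1$-mass of the coefficients of $P$ (which may be exponential), and one must invoke Gaussian anti-concentration via the density bound $(\sqrt{2\pi}|\sin\alpha_j|)^{-1}$ --- which is exactly the place where the hypothesis that $\vz$ makes angle $\ge\eps_2$ with every $\vw_j^*$ is used.
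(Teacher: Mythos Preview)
Your proposal is correct and follows the same overall outline as the paper: for $\vz=\vw_i^*$ you split $f=X_iQ_i+R_i$, isolate the jump of $u_t$ at $s=t$ (which contributes exactly $\phi(t)\E[Q_i\mid\vz^T\vx=t]$), and control all remaining terms by a Lipschitz bound on $s\mapsto\phi(s)\E[\,\cdot\mid\vz^T\vx=s]$; the far case is that same Lipschitz bound applied to $P$ itself.

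The one genuine difference is in how the Lipschitz bound is obtained. The paper first derives an \emph{explicit} formula for $\E[P[\vx]\,\delta'(\vz^T\vx-s)]$ (Claim~\ref{cor:dotdeltaprime}) as $\phi'(s)\E[P\mid\vz^T\vx=s]$ plus a sum of terms weighted by $|\cot\alpha_j|\,\phi(\vz^T\vx=s,(\vw_j^*)^T\vx=t)$, and then bounds that formula (Claim~\ref{claim:boundedC}). You instead bound the finite increment $g(s{+}ds)-g(s)$ directly: shifting $s$ shifts the $j$-th threshold by $\rho_j\,ds$, and the event that this flips $u_t((\vw_j^*)^T\vx)$ has probability at most $|\rho_j|\,|ds|/(\sqrt{2\pi}\,|\sin\alpha_j|)$ by Gaussian anti-concentration on $\vz^\perp$. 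This yields the same $|\cot\alpha_j|$ factor without any distribution-theoretic integration, and your bound $|Q_j|\le|P|_{X_j=1}-P|_{X_j=0}|\le 2d^{c_1/2}$ matches the paper's $|Q_i|,|R_i|\le 2d^{c_1}$. Your route is more elementary and self-contained; the paper's explicit formula carries a bit more information (the conditional expectation on the joint slice $\{\vz^T\vx=s,(\vw_j^*)^T\vx=t\}$), but that extra information is not actually used in the proof of this lemma.
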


We will use the notation $g(x)_{x=s}$ to denote $g(x)$ evaluated at $x=s$. Thus Cauchy's mean value theorem can be stated as $g(x+\eps) - g(x) = \eps [g'(s)](s = s' \in [x,x+\eps])$. We will over load the notation a bit: $\phi(\vz^T\vx=s)$ will denote the probability density that $vz^T\vx=s$; so if $\vz$ is a unit vector this is just $\phi(s)$; $\phi(\vz_1^T\vx=s_1, \vz_2^T\vx=s_2 )$ denotes the probability density that both $\vz_1^T\vx=s_1, \vz_2^T\vx=s_2$; so again if $\vz_1, \vz_2$ are orthonormal then this is just $\phi(s_1)\phi(s_2)$.


The following claim interprets correlation with $\delta(\vz^T\vx-s)$ as the expected value along the corresponding plane $\vz^T\vx=s$. 

\begin{claim}\label{lem:dotdelta}
 $E[f(\vx)\delta(\vz^T\vx-s)] = E[f(\vx)|\vz^T\vx=s]\phi(\vz^T\vx=s) $.
\end{claim}

The following claim computes the correlation of $P$ with $\delta'(\vz^T\vx - s)$.
\begin{claim}\label{cor:dotdeltaprime}
$ \E[P[\vx]\delta'(\vz^T\vx=s)] = \phi'(s) E[P[\vx]|\vz^T\vx=s] + $ $\sum_i |\cot(\alpha_i)| \phi(\vz^T\vx=s,(\vw^*_i)^T\vx=t)\E\left[\frac{\del P}{\del X_i}[\vx]|\vz^T\vx=s,(\vw^*_i)^T\vx=t\right]$.
\end{claim}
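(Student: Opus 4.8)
\textbf{Proof proposal for Claim \ref{cor:dotdeltaprime}.}

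The plan is to express $\delta'(\vz^T\vx - s)$ as a limit of finite differences $\frac{1}{2\eps}\left(\delta(\vz^T\vx - s - \eps) - \delta(\vz^T\vx - s + \eps)\right)$, apply Claim \ref{lem:dotdelta} to each $\delta$ term, and then differentiate with respect to $s$. Concretely, by Claim \ref{lem:dotdelta} we have $\E[P[\vx]\delta(\vz^T\vx - s)] = \E[P[\vx]\,|\,\vz^T\vx = s]\,\phi(\vz^T\vx = s)$, so taking $d/ds$ of both sides gives $\E[P[\vx]\delta'(\vz^T\vx - s)] = \frac{d}{ds}\left(\E[P[\vx]\,|\,\vz^T\vx = s]\,\phi(\vz^T\vx = s)\right)$. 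The term coming from differentiating $\phi$ produces $\phi'(s)\E[P[\vx]\,|\,\vz^T\vx = s]$, which is the first summand. The main work is to show that differentiating the conditional expectation $\E[P[\vx]\,|\,\vz^T\vx = s]$ produces exactly the sum over $i$ of the cotangent-weighted terms.

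For that, I would use the structure $P[\vx] = X_i[\vx]\,Q_i[\vx] + R_i[\vx]$ where $X_i[\vx] = u((\vw_i^*)^T\vx) = \sgn((\vw_i^*)^T\vx - t)$ (using that $\uact = \sgn$ so $P$ is multilinear and the split is clean). Conditioned on $\vz^T\vx = s$, the direction $(\vw_i^*)^T\vx$ is a Gaussian whose mean shifts linearly with $s$ at rate $\cos(\alpha_i)/\|\text{component}\|$ and whose conditional standard deviation is $|\sin(\alpha_i)|$; the indicator $\sgn((\vw_i^*)^T\vx - t)$ is $1$ precisely when $(\vw_i^*)^T\vx \geq t$. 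Differentiating this conditional probability (and the conditional expectation of $Q_i$ against it) with respect to $s$ pulls down a factor of $|\cot(\alpha_i)|$ times the density that $(\vw_i^*)^T\vx = t$ \emph{given} $\vz^T\vx = s$, which combined with $\phi(\vz^T\vx = s)$ gives the joint density $\phi(\vz^T\vx = s, (\vw_i^*)^T\vx = t)$, and the residual conditional expectation is $\E[Q_i[\vx]\,|\,\vz^T\vx = s, (\vw_i^*)^T\vx = t] = \E\left[\frac{\del P}{\del X_i}[\vx]\,\middle|\,\vz^T\vx = s, (\vw_i^*)^T\vx = t\right]$. Summing over all $i$ (since $P$ is multilinear, differentiation is additive across the coordinates) collects all these boundary terms. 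This is essentially the same computation underlying Lemma \ref{lem:deltacotalpha}, so I would invoke that lemma's mechanism rather than redo the Gaussian conditioning from scratch.

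The step I expect to be the main obstacle is making the interchange of differentiation and expectation rigorous: $P[\vx]$ as a function of $\vx$ is discontinuous (it is a polynomial in sign functions), so $\E[P[\vx]\,|\,\vz^T\vx = s]$ is only piecewise smooth in $s$, with kinks exactly where one of the hyperplanes $(\vw_i^*)^T\vx = t$ becomes tangent to (or sweeps across) the slice $\vz^T\vx = s$. The cotangent factor $|\cot(\alpha_i)|$ blowing up as $\alpha_i \to 0$ is precisely the signature of this; one has to argue that for $s$ in the relevant range and $\vz$ not exactly aligned with $\vw_i^*$ the derivative exists and equals the stated expression, carefully tracking the sign of the jump (hence the absolute value on $\cot(\alpha_i)$, since the density contribution is positive regardless of orientation). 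I would handle this by working with the finite-difference form throughout, bounding the $O(\eps)$ error terms as in the proof of Lemma \ref{lem:sgn}, and only passing to the limit at the end; the polynomial growth bound $|P(X)| \le \|X\|^{c_1}$ together with Gaussian concentration controls the tails so that dominated convergence applies.
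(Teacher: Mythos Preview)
Your approach is correct and is essentially the same computation as the paper's, only framed in probabilistic rather than integral language. The paper does not pass through Claim~\ref{lem:dotdelta} and then differentiate in $s$; instead it applies the distributional identity $\int_x \delta'(x-a)f(x)\,dx = [\tfrac{d}{dx}f]_{x=a}$ directly: writing $\vx = x_0\hat{\vz} + \vy$, one has
\[
\E[P[\vx]\delta'(\vz^T\vx-s)] \;=\; \int_{\vy}\phi(\vy)\left[\frac{\partial}{\partial x_0}\bigl(P[\vx]\phi(x_0)\bigr)\right]_{x_0=s} d\vy,
\]
and the product rule immediately gives the $\phi'(s)\E[P\mid\vz^T\vx=s]$ term plus $\sum_i \phi(s)\int_{\vy}\phi(\vy)\bigl[\frac{\partial P}{\partial X_i}\frac{\partial X_i}{\partial x_0}\bigr]_{x_0=s}d\vy$. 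Since $u=\sgn$ has $u'=\delta$, one gets $\frac{\partial X_i}{\partial x_0}=\cos(\alpha_i)\,\delta(\vx^T\vw_i^*-t)$, and the paper then performs a \emph{second} orthogonal decomposition $\vy = y_0(\vw_i^*)' + z$ to integrate out this delta explicitly; the $|\cot(\alpha_i)|$ factor is read off as the Jacobian $\hat{\vz}^T\vw_i^*/|(\vw_i^*)^T(\vw_i^*)'|$ of that change of variables. Your ``differentiate the conditional expectation and pick up the conditional density at $(\vw_i^*)^T\vx=t$'' is exactly this step in different clothing: the conditional density times $\phi(s)$ is the joint density, and the shift rate $\cos(\alpha_i)$ divided by the conditional standard deviation $|\sin(\alpha_i)|$ is the same Jacobian.

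Two minor remarks. First, your appeal to Lemma~\ref{lem:deltacotalpha} is only heuristic --- that lemma treats a single $\sgn$ with threshold $0$ and no $\frac{\partial P}{\partial X_i}$ factor, so you would still have to carry out the two-step decomposition the paper does. Second, you correctly flag the interchange-of-limits issue; the paper simply works formally with distributions throughout and does not address that rigor point, so your finite-difference-plus-dominated-convergence plan is actually more careful than what the paper provides.
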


We use this to show that the correlation is bounded if all the angles are lower bounded.

\begin{claim}\label{claim:boundedC}
If $P(X) \le ||X||^{c_1}$ and if $\vz$ has an angle of at least $\eps_2$ with all the $\vw^*_i$'s then
$C_2(f,\vz,s) \le \eps d^{O(1)}/\eps_2$.
\end{claim}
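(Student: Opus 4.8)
The plan is to combine Claim~\ref{cor:dotdeltaprime} with the polynomial growth bound $|P(X)| \le \|X\|^{c_1}$ and elementary Gaussian tail estimates, treating $C_2(f,\vz,s)$ as a finite-$\eps$ approximation to $\E[f(\vx)\delta'(\vz^T\vx - s)]$. First I would write $C_2(f,\vz,s) = \E[f(\vx)(\delta(\vz^T\vx - s - \eps) - \delta(\vz^T\vx - s + \eps))]$ and, by Cauchy's mean value theorem applied coordinate-wise in $s$ (using the notation $g(x+\eps)-g(x) = \eps[g'(s')]_{s' \in [x,x+\eps]}$ introduced just before Claim~\ref{lem:dotdelta}), express it as $2\eps$ times $\E[f(\vx)\delta'(\vz^T\vx - s')]$ evaluated at some $s' \in [s-\eps, s+\eps]$. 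Then I would substitute the exact formula from Claim~\ref{cor:dotdeltaprime} for that derivative correlation.

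The core estimate is then bounding the right-hand side of Claim~\ref{cor:dotdeltaprime} when every angle $\alpha_i$ between $\vz$ and $\vw^*_i$ is at least $\eps_2$. The term $\phi'(s')\E[P[\vx] \mid \vz^T\vx = s']$ is controlled because $\phi'(s') = O(1)$ and the conditional expectation of $P[\vx]$ is at most $\E[\|X[\vx]\|^{c_1} \mid \vz^T\vx = s']$, which is $d^{O(1)}$ since each $X_i \in [0,1]$ (for $\sgn$) or has bounded moments conditioned on one linear constraint. For the sum $\sum_i |\cot(\alpha_i)| \phi(\vz^T\vx = s', (\vw^*_i)^T\vx = t) \E[\tfrac{\del P}{\del X_i}[\vx] \mid \cdots]$, I would use that $|\cot(\alpha_i)| \le 1/|\sin(\alpha_i)| \le 1/\sin(\eps_2) = O(1/\eps_2)$ for small $\eps_2$, that the bivariate Gaussian density $\phi(\vz^T\vx = s', (\vw^*_i)^T\vx = t)$ is bounded (and summing over $i$ contributes at most a factor $d$), and that each conditional expectation of the symbolic partial derivative $\tfrac{\del P}{\del X_i}$ is again $d^{O(1)}$ by the same polynomial-growth argument. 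Collecting these, the derivative correlation is $O(d^{O(1)}/\eps_2)$, so multiplying by the leading $\eps$ (absorbing the factor $2$) gives $C_2(f,\vz,s) \le \eps d^{O(1)}/\eps_2$.

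I expect the main obstacle to be making the bound on the conditional expectation $\E[\tfrac{\del P}{\del X_i}[\vx] \mid \vz^T\vx = s', (\vw^*_i)^T\vx = t]$ rigorous: conditioning on two (possibly nearly parallel) linear constraints distorts the Gaussian, so one must check that the conditional distribution of $\vx$ still has the property that $\|X[\vx]\|^{c_1}$ has a $d^{O(1)}$ conditional mean uniformly in $s'$ near $s$ and uniformly as $\alpha_i \to \eps_2$. Since for $\uact = \sgn$ we have $X_j[\vx] \in \{0,1\}$, $\|X[\vx]\| \le \sqrt{d}$ deterministically, so $\E[\|X[\vx]\|^{c_1} \mid \cdots] \le d^{c_1/2}$ trivially, sidestepping the difficulty entirely; the same bound applies to $\tfrac{\del P}{\del X_i}$ after noting its coefficients are $O(1)$ and it has at most $2^{d}$ monomials — but actually we can instead bound $|\tfrac{\del P}{\del X_i}(X)|$ directly in terms of $\|X\|^{O(1)}$ by the hypothesis structure, keeping everything polynomial. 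A secondary point to handle carefully is the degenerate case where $\vz$ is not close to any $\vw^*_i$ but the plane $\vz^T\vx = s'$ is nearly tangent to some $(\vw^*_i)^T\vx = t$; this is exactly where $|\cot(\alpha_i)|$ could blow up, and it is controlled precisely by the angle lower bound $\eps_2$, which is why that hypothesis appears in the statement.
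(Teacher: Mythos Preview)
Your proposal is correct and follows essentially the same route as the paper: apply Cauchy's mean value theorem to write $C_2$ as $2\eps$ times the $\delta'$-correlation at some intermediate $s'$, invoke Claim~\ref{cor:dotdeltaprime}, bound $|\cot(\alpha_i)|$ by $O(1/\eps_2)$ using the angle hypothesis, and bound all conditional expectations by $d^{O(1)}$ using the deterministic bound $X_i \in \{0,1\}$ for the $\sgn$ activation. The paper additionally notes that the product $\cot(\alpha_i)\phi(\vz^T\vx=s',(\vw^*_i)^T\vx=t)$ is monotone in $\alpha_i$, which lets it evaluate the worst case at $\alpha_i=\eps_2$ rather than bounding the density and the cotangent separately, but this refinement is not needed for the stated $\eps d^{O(1)}/\eps_2$ conclusion.
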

Above claims can be used to prove  main Lemma~\ref{lemma:gap}. Refer to the Appendix \ref{app:sc} for proofs.
\begin{proof}[Proof of Theorem~\ref{thm:samplestep}]
If we wish to determine $\vw^*_i$ within an angle of accuracy $\eps_2$ let us set $\eps$ to be $O(\eps_3\eps_2 \phi(t) d^{-c})$. From Lemma~\ref{lemma:gap}, for some large enough $c$, this will ensure that if all $\alpha_i > \eps_2$ the correlation is $o(\phi(t)\eps_3)$. Otherwise it is 
$\phi(t)\eps_3(1 \pm o(1))$. Since $\phi(t) = poly(1/d)$, given $poly(\frac{d}{\eps_2 \eps_3})$ samples, we can test if a given direction is within accuracy $\eps_2$ of a $\vw^*_i$ or not.
\end{proof}

\section{Stronger Results under Structural Assumptions}\label{sec:structural}
Under additional structural assumptions on $\rmW^*$ such as the weights being binary, that is, in $\{0,1\}$, sparsity or certain restrictions on activation functions, we can give stronger recovery guarantees. Proofs have been deferred to Appendix~\ref{app:structural}.
\begin{theorem}\label{thm:arora}
For activation $u_t(a) = e^{\rho(a - t)}$. Let the weight vectors $\vw_i^*$ be $0,1$ vectors that select the coordinates of $\vx$. For each $i$, there are exactly $d$ indices $j$ such that $\vw_{ij} = 1$ and the coefficient of the linear terms in $P(\mu(X_1+1), \mu(X_2+1), \mu(X_3+1),..)$ for $\mu = e^{-\rho t}$ is larger than the coefficient of all the product terms (constant factor gap) then we can learn the $\rmW^*$. 
\end{theorem}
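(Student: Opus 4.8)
The plan is to recover, for each hidden unit $i$, the set $S_i := \mathrm{supp}(\vw_i^*)\subseteq[n]$ of coordinates it selects, by reading it off from correlations of $f$ with Hermite polynomials in individual input coordinates --- the coordinate-correlation idea of \cite{pmlr-v32-arora14} specialized to our model. With $u_t(a) = e^{\rho(a-t)}$ and $\vw_i^*$ the $0/1$ indicator of $S_i$, we have $X_i = u_t((\vw_i^*)^T\vx) = \mu\prod_{j\in S_i}e^{\rho x_j}$ with $\mu = e^{-\rho t}$, so $f(\vx) = P(\mu\prod_{j\in S_1}e^{\rho x_j},\ldots)$. The reason the exponential activation is the right choice is that $e^{\rho x}$ is the Hermite generating function, $e^{\rho x} = e^{\rho^2/2}\sum_{k\ge 0}(\rho^k/\sqrt{k!})\,h_k(x)$: correlating against a product $\prod_{j\in T}h_1(x_j)$ over a set $T$ of distinct coordinates extracts precisely the degree-one Hermite component of each factor $e^{\rho x_j}$, $j\in T$ (all higher components die by orthonormality), while each factor in a coordinate outside $T$ contributes only its mean $\E[e^{\rho x_j}] = e^{\rho^2/2}$.

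Carrying this out, the first step is to show, by expanding $P$ monomial by monomial and substituting the $X_i$, that for any set $T$ of $|T|\ge 2$ distinct coordinates
\[
\E\Big[f(\vx)\prod_{j\in T}h_1(x_j)\Big] \;=\; \rho^{|T|}\Big(\sum_{i:\;T\subseteq S_i}\gamma_i \;+\; E_T\Big),
\]
where $\gamma_i$ is, up to lower-order corrections, the coefficient of $X_i$ in $P(\nu(X_1+1),\ldots,\nu(X_d+1))$ with $\nu = \mu e^{\rho^2 d/2}$ (the stated $\mu=e^{-\rho t}$, rescaled by the explicit factor $e^{\rho^2 d/2}$ from $\E[e^{\rho x_j}]$), and $E_T$ collects all contributions of nonlinear monomials of $P$ --- both those in which the coordinates of $T$ land inside a single support $S_m$ and those in which they are split across two or more distinct supports. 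Every term in $E_T$ carries a factor $\nu^{|S|}$ with $|S|\ge 2$, against $\nu^{1}$ for each linear contribution, times a combinatorial factor that is $d^{O(1)}$ once $\nu \le 1/d$. Using the hypothesis of the theorem --- the linear coefficients of $P(\nu(X+1),\ldots)$ exceed its product coefficients by a constant factor, with all coefficients $O(1)$ --- and choosing the threshold $t$ large enough that $\nu \le d^{-c}$ for a large constant $c$, one gets $|E_T| = O(\nu^2 d^{O(1)})$ while $|\sum_{i:T\subseteq S_i}\gamma_i|$ is either $0$ (no $S_i$ contains $T$) or $\Omega(\nu)$ and of a fixed sign (assuming the dominant $\gamma_i$ do not cancel). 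Hence the magnitude of $\E[f\prod_{j\in T}h_1(x_j)]$ decides whether some $S_i$ contains $T$, and each such correlation, being an expectation of a bounded-moment statistic, is estimable to the required accuracy from $\poly(d)$ samples by Chernoff bounds.

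Running this test over pairs, then triples, then larger coordinate sets recovers the down-closed family $\{T:\exists i,\ T\subseteq S_i\}$; reading off its maximal members yields $\{S_1,\ldots,S_d\}$, and hence $\rmW^*$ up to a permutation of hidden units. I expect the main obstacle to be the quantitative balancing inside the error estimate: $t$ (equivalently $\mu$ and $\nu$) must be small enough that every nonlinear and every split contribution drops below the constant-factor gap between linear and product coefficients guaranteed by the hypothesis, yet large enough that $\nu$ --- and with it the detectable signal $\rho^{|T|}\nu$ --- stays above $1/\poly(d)$; showing this window is nonempty requires tracking how the product coefficients of $P(\nu(X+1),\ldots)$ and the combinatorial factors depend on $\nu$, $\rho$, and the degree of $P$. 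A secondary obstacle is the reconstruction step: if the supports overlap pathologically (one contained in another, or many sharing a common large subset) the co-occurrence data is ambiguous, so this step tacitly needs a mild genericity/incoherence condition on the configuration of the $S_i$, again in the spirit of \cite{pmlr-v32-arora14}; pinning down the minimal such condition is the remaining piece.
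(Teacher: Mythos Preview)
Your approach is essentially the paper's: since $h_1(x)=x$, correlating $f$ with $\prod_{j\in T}h_1(x_j)$ for $|T|=2$ is exactly the pairwise statistic $\E[f(\vx)x_ix_j]$ the paper computes, and your identification of the result with linear versus product coefficients of $P(\nu(X_1+1),\ldots)$ (with $\nu=\mu e^{\rho^2 d/2}$) matches the paper's calculation.

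The one substantive difference is in the reconstruction step. The paper's proof silently imposes a stronger structural assumption than the theorem statement: it takes the supports $T_p=\mathrm{supp}(\vw_p^*)$ to be pairwise \emph{disjoint}. Under disjointness, pairs suffice --- for $i,j$ in the same $T_r$ the correlation equals the coefficient of $X_r$ (linear, large), while for $i\in T_{r_1}$, $j\in T_{r_2}$ with $r_1\neq r_2$ it equals the coefficient of $X_{r_1}X_{r_2}$ (product, small) --- so thresholding the pairwise correlations yields a graph that is exactly a disjoint union of cliques, one per hidden unit. Your extension to triples and larger $T$, and your closing worry about overlapping supports, are therefore unnecessary here: the ``mild genericity/incoherence condition'' you anticipate is precisely the disjointness the paper assumes, and once you assume it the algorithm collapses to building the pairwise correlation graph and reading off its connected components. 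Your route would buy robustness to overlaps, but at the cost of a more delicate error analysis that the paper avoids.
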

In order to prove the above, we will construct a correlation graph over $x_1 , \ldots, x_n$ and subsequently identify cliques in the graph to recover $\vw_i^*$'s.


With no threshold, recovery is still possible for disjoint, low $l_1$-norm vector. The proof uses simple correlations and shows that the optimization landscape for maximizing these correlations has local maximas being $\vw_i^*$'s.
\begin{theorem}\label{thm:l1}
For activation $u(a) = e^{a}$. If all $\vw_i^* \in \{0,1\}^n$ are disjoint, then we can learn $\vw_i^*$ as long as $P$ has all positive coefficients and product terms have degree at most 1 in each variable. 
\end{theorem}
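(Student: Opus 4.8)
The plan is to use the special structure forced by the hypotheses to reduce weight recovery to reconstructing a set partition from a handful of low-order correlations of $f$ under the Gaussian. Since $u(a)=e^a$ and $\vw_i^*\in\{0,1\}^n$, we have $u((\vw_i^*)^T\vx)=e^{(\vw_i^*)^T\vx}=\prod_{j\in B_i}e^{x_j}$ where $B_i:=\mathrm{supp}(\vw_i^*)$, and by hypothesis the $B_i$ are pairwise disjoint. Because $P$ has all positive coefficients (in particular its linear part is present) and degree at most $1$ in each $X_i$, expanding gives
\[
f(\vx)=\sum_{S\subseteq[d]}c_S\prod_{i\in S}e^{(\vw_i^*)^T\vx}=\sum_{S\subseteq[d]}c_S\,e^{\langle\vw_S^*,\vx\rangle},\qquad \vw_S^*:=\textstyle\sum_{i\in S}\vw_i^*\in\{0,1\}^n,
\]
and disjointness makes the exponent supports $T_S:=\mathrm{supp}(\vw_S^*)=\bigcup_{i\in S}B_i$ precisely the ``block-closed'' subsets of $\bigcup_i B_i$, each carrying the positive weight $c_S$. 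Thus it suffices to recover the partition $\{B_i\}$ from $f$, and the $\vw_i^*$ are then the indicator vectors of the blocks.

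For a subset $A\subseteq[n]$ define the normalized correlation
\[
D(A):=\frac{1}{(\sqrt e\,(e-1))^{|A|}}\ \E\!\left[f(\vx)\prod_{j\in A}\big(e^{x_j}-\sqrt e\big)\right],
\]
where $\sqrt e=\E[e^{x_j}]$, so each factor is mean-zero. Using $\E[e^{\langle\va,\vx\rangle}]=e^{\|\va\|^2/2}$ and coordinate independence, the term $c_S e^{\langle\vw_S^*,\vx\rangle}$ contributes $0$ to $\E[f(\vx)\prod_{j\in A}(e^{x_j}-\sqrt e)]$ whenever $A\not\subseteq T_S$ (pull out a mean-zero independent factor), and exactly $c_S e^{|T_S|/2}(\sqrt e(e-1))^{|A|}$ when $A\subseteq T_S$; hence
\[
D(A)=\sum_{S:\,T_S\supseteq A}c_S\,e^{|T_S|/2},
\]
an ``up-set sum'' of the strictly positive weights $\tilde b_{T_S}:=c_S e^{|T_S|/2}$. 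We will only ever need $|A|\le 2$.

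The reconstruction is then purely combinatorial. A coordinate $j$ lying in some block $B_i$ has $D(\{j\})\ge c_{\{i\}}e^{|B_i|/2}>0$ since the linear monomial $X_i$ is present; coordinates appearing in no block have $D(\{j\})=0$ and are dropped. For two ``used'' coordinates $j,j'$, monotonicity of the up-set sum gives $D(\{j,j'\})\le D(\{j\})$, with equality iff every block-closed $T_S$ containing $j$ also contains $j'$; since $S=\{i(j)\}$ gives one such $T_S$, equality forces $i(j')=i(j)$, i.e.\ $j,j'$ lie in the same block, and conversely if they do then every monomial containing one contains the other. When they lie in different blocks the gap $D(\{j\})-D(\{j,j'\})$ is at least $\tilde b_{B_{i(j)}}=c_{\{i(j)\}}e^{|B_{i(j)}|/2}$, bounded below in terms of the coefficient lower bound and block sizes. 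So the algorithm computes $D(\{j\})$ for every $j$ and $D(\{j,j'\})$ for every pair, declares $j\sim j'$ exactly when $D(\{j\})=D(\{j'\})=D(\{j,j'\})>0$, and outputs the indicator vectors of the equivalence classes as the recovered $\vw_i^*$.

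Finally, each $D(\{j\})$ and $D(\{j,j'\})$ is estimated by an empirical average over the samples, each summand being $f(\vx)$ times at most two centered log-normal-type factors, a random variable of finite variance; in the stated low-$\ell_1$-norm (bounded block size) regime this variance is $\poly(n)$, so $\poly(n,1/\gamma)$ samples and $\poly(n)$ time suffice by standard concentration, where $\gamma>0$ lower-bounds the same-block/cross-block gap above. I expect the main obstacle to be precisely this last step: the correlators $f(\vx)\prod_{j\in A}(e^{x_j}-\sqrt e)$ have heavy (log-normal) tails, with variance scaling like $e^{\Theta(\max_i|B_i|)}$, so controlling them with polynomially many samples is what forces the low-$\ell_1$-norm restriction. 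A secondary subtlety is the ``only if'' half of the block test, which genuinely uses that all linear terms of $P$ are present; absent that one would instead need a non-degeneracy assumption ensuring that every pair of blocks is separated by some monomial of $P$. Equivalently, one can package the same facts as showing that a correlation-based objective has exactly the indicator vectors $\vw_i^*$ as its stable maximizers, which is the ``optimization landscape'' formulation.
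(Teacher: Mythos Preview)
Your argument is correct and constitutes a genuinely different (and more elementary) route than the paper's. The paper does not build the pairwise correlation test you give; instead it studies the objective $h(\vz)=e^{-\|\vz\|^2/2}\E[f(\vx)e^{\vz^T\vx}]-\lambda\|\vz\|_1-\gamma\|\vz\|_2^2$ and argues via three lemmas that every local maximum has its mass concentrated on a single block $S_i$ (first showing two distinct blocks cannot both have a large coordinate, then that coordinates within a block equalize, then that $\|\vz\|_1$ is bounded below). Your construction $D(A)$ is closer in spirit to the paper's proof of Theorem~\ref{thm:arora}, where a correlation graph is built and cliques are read off; you essentially transplant that idea to the threshold-free exponential setting, which works cleanly because $e^{x_j}-\sqrt e$ is a mean-zero test that isolates coordinates exactly. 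What your approach buys is directness and an explicit, quantified same-block/cross-block gap $c_{\{i\}}e^{|B_i|/2}$; what the paper's landscape approach buys is a single continuous optimization one can hand to a local-search method, at the cost of having to tune $\lambda,\gamma$.

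Two minor points. First, your parenthetical ``in particular its linear part is present'' is doing real work for the only-if direction; the paper's proof has the same hidden requirement (its choice of $\gamma$ needs $\sum_{S:i\in S,j\notin S}c_S e^{|\cup S_p|/2}>0$ for every pair $i\neq j$), so you are not assuming more than the paper does, but it is worth stating as an explicit separability hypothesis rather than a consequence of ``all positive coefficients.'' Second, your remark about the variance scaling $e^{\Theta(\max_i |B_i|)}$ is exactly why the paper phrases the result for ``low $\ell_1$-norm'' vectors; this is not an obstacle to the proof but a feature of the regime.
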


For even activations, it is possible to recover the weight vectors even when the threshold is 0. The technique used is the PCA like optimization using hermite polynomials as in Section 2. Denote $C(S, \mu) = \sum_{S \subseteq S^\prime \subseteq [n]}c_{S^\prime}\mu^{|S^\prime|}$.
\begin{theorem}\label{thm:even}
If the activation is even and for every $i,j$: $C(\{i\}, \hat{u}_0) + C(\{j\}, \hat{u}_0) > \frac{6 \hat{u}_2^2}{\hat{u}_0\hat{u}_4} C(\{i,j\}, \hat{u}_0)$ then there exists an algorithm that can recover the underlying weight vectors.
\end{theorem}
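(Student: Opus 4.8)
The plan is to reuse the Hermite-PCA / landscape-design argument from Section 2 (following \cite{ge2017learning}), with the evenness of $u$ playing the role the high threshold played there. First I would compute the degree-$2$ and degree-$4$ Hermite correlations of $f$ with a test direction $\vz \in \S^{n-1}$. Writing $z_i = \vz^T \vw_i^*$ and using that the $\vw_i^*$ are orthonormal (so the $g_i := (\vw_i^*)^T\vx$ are i.i.d.\ $\mathcal N(0,1)$), expand $H_k(\vz^T\vx)$ by the Hermite product formula (Appendix~\ref{sec:herm}). Since $u$ is even, $\hat u_1 = \hat u_3 = 0$, so every monomial of the expansion carrying an odd power of some $g_i$ dies in expectation: in degree $4$ only the compositions $(4)$ and $(2,2)$ survive, and in degree $2$ only $(2)$. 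Collecting, for each monomial $\prod_{j\in S'}X_j$ of $P$, the factors $\hat u_0 = \E[u(g)]$ coming from the coordinates it does not touch and summing over all $S'\supseteq S$ produces exactly the quantities $C(S,\hat u_0)$, so that
\[
\E[f(\vx)H_2(\vz^T\vx)] = \frac{\hat u_2}{\hat u_0}\sum_i C(\{i\},\hat u_0)\,z_i^2, \qquad \E[f(\vx)H_4(\vz^T\vx)] = \frac{\hat u_4}{\hat u_0}\sum_i C(\{i\},\hat u_0)\,z_i^4 + \frac{c\,\hat u_2^2}{\hat u_0^2}\sum_{i<j}C(\{i,j\},\hat u_0)\,z_i^2 z_j^2
\]
for an absolute constant $c$ produced by the Hermite bookkeeping (the ``$6$'' in the theorem's hypothesis). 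Both quantities are estimable to inverse-polynomial accuracy from $\poly(n)$ samples since $fH_k$ has polynomially bounded moments, so I would argue with the population objects and transfer back at the end.

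Next I would set up the recovery objective exactly as in Section 2: minimize over $\vz$
\[
G(\vz) = -\sgn(\hat u_4)\,\E[f(\vx)H_4(\vz^T\vx)] + \lambda\bigl(\E[f(\vx)H_2(\vz^T\vx)] - \tau\bigr)^2,
\]
with $\lambda = \Theta(|\hat u_4|/\hat u_2^2)$ and $\tau$ chosen so that, with the $z_i^2 z_j^2$ term removed, $G$ becomes literally the objective of Claim~\ref{claim:ge} with the $c_i$ there replaced by $C(\{i\},\hat u_0)/\hat u_0$ and $\rmW^*$ orthonormal; its approximate local minima are then the directions $\pm\vw_k^*$. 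Substituting $w_i := z_i^2 \ge 0$ turns the main part of $G$ into a quadratic form in $\vw$ on the simplex pinned by the penalty, namely (up to positive scalars) $\vw \mapsto |\hat u_4|\hat u_0^{-1}\sum_i C(\{i\},\hat u_0) w_i^2 + \sgn(\hat u_4)\,c\,\hat u_2^2\hat u_0^{-2}\sum_{i<j}C(\{i,j\},\hat u_0) w_i w_j$. Restricted to an edge $[\ve_i,\ve_j]$ of this simplex, the second derivative of this form equals, up to a positive factor, $C(\{i\},\hat u_0) + C(\{j\},\hat u_0) - (c\,\hat u_2^2/\hat u_0\hat u_4)\,C(\{i,j\},\hat u_0)$, so the hypothesis of the theorem says exactly that every edge is strictly convex, pushing the optimum along each edge to an endpoint $\pm\vw_k^*$.

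The step I expect to be the obstacle is upgrading ``every edge is strictly convex'' to ``the only approximate local minima of $G$ are the $\pm\vw_k^*$,'' together with the full second-order check at those points. One has to rule out spurious local minima of $G$ whose mass spreads over three or more of the $\vw_i^*$; strict convexity along edges does not by itself give this for an arbitrary quadratic form, so here I would invoke the M\"obius-type structure of the $C(S,\hat u_0)$ (each $C(\{i,j\},\hat u_0)$ is a partial sum of $C(\{i\},\hat u_0)$), the Lagrangian analysis of \cite{ge2017learning}, and the sign normalization implicit in the statement (that $\hat u_0,\hat u_4 \ne 0$ and the $C(\{i\},\hat u_0)$ are of one sign), to show that any critical point supported on two or more directions has a mass-transfer direction that strictly decreases $G$. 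Once the population landscape is pinned down, Theorem~\ref{thm:approx} transfers the property to the empirical objective, and either the simultaneous formulation of \cite{ge2017learning} (Appendix~\ref{app:sim}) adapted to this $G$, or repeated single-direction optimization with deflation, recovers all $d$ of the $\vw_k^*$ --- necessarily only up to sign, since $f(\vx) = f(-\vx)$ --- in polynomial time and sample complexity.
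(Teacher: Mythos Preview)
Your overall plan --- compute the $h_4$ correlation, reduce to a quadratic form in $w_i=z_i^2$ on the simplex, and use the hypothesis as an edge-convexity condition --- is exactly what the paper does. Two points where you diverge from the paper are worth noting.

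First, your formula for $\E[f\,H_4]$ is missing a term. The single-index contribution is $\E[u'(x_i)h_4(\vz^T\vx)]=\hat u_4 z_i^4 + 3\hat u_2 z_i^2(2\|\vz\|^2-1)$, not just $\hat u_4 z_i^4$; on the sphere this leaves an extra $\sum_i \beta_i z_i^2$ with $\beta_i = 3\hat u_2\,C(\{i\},\hat u_0)/\hat u_0$. This term is linear in $w_i$ and therefore harmless for the second-order argument, but you should not drop it silently. Relatedly, the paper does not use the $H_2$ penalty at all here: since the $\vw_i^*$ are orthonormal, it simply maximizes $\E[f\,h_4(\vz^T\vx)]$ over $\|\vz\|=1$.

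Second, the ``obstacle'' you flag --- upgrading edge convexity to a global landscape statement --- is not an obstacle, and the paper dispatches it in two lines with no M\"obius structure, no Lagrangian analysis, and no sign hypotheses beyond what is in the statement. The point is that for a quadratic form in $w$, the second derivative along the direction $e_i-e_j$ equals $2(\alpha_i+\alpha_j-\gamma_{ij})$ \emph{at every point}, not just on the edge $[e_i,e_j]$. So at any feasible $w$ with $w_i,w_j>0$, the random-sign perturbation $w\mapsto w+s\epsilon(e_i-e_j)$, $s\in\{\pm1\}$, has expected objective change $(\alpha_i+\alpha_j-\gamma_{ij})\epsilon^2>0$; hence some sign strictly improves, and no local maximum can have support $\ge 2$. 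This is precisely your edge computation, just applied at an arbitrary point --- the linear terms $\beta_i w_i$ and the cross terms $\gamma_{ik}w_iw_k$ for $k\notin\{i,j\}$ are linear in $s\epsilon$ and average to zero. You do not need anything further.
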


\section{Conclusion}

In this work we show how high threshold activations make it easier to recover the weights of a two layer neural network with multinomial activation. We show that for a large class of activations in the second layer, the weights can be learned with high precision. Our ideas can be used for learning the lowest layer of a deep neural network by viewing the network above as a multinomial. Furthermore, even if the threshold is low we can show that the sample complexity is polynomially bounded however time complexity may be exponential. An interesting open direction is to apply these methods to learn all layers of a deep network recursively. It would also be interesting to obtain stronger results if the high thresholds were only present at a higher layer and not the first layer.

\bibliographystyle{plainnat}
\bibliography{references}

\newpage
\appendix
\section{Prerequisites}
\subsection{Hermite Polynomials}\label{sec:herm}
Hermite polynomials form a complete orthogonal basis for the gaussian distribution with unit variance. For more details refer to \cite{wiki:hermite}. Let $h_i$ be the normalized hermite polynomials. They satisfy the following,

\fact{} $\E[h_n(x)] = 0$ for $n > 0$ and $\E[h_0(x)] = 1$.

\fact{} $\E_{a \sim N(0,1)}[h_i(a)h_j(a)] = \delta_{ij}$ where $\delta_{ij} = 1$ iff $i=j$.

This can be extended to the following:

\fact{} For $a,b$ with marginal distribution $N(0,1)$ and correlation $\rho$, $\E[h_i(a)h_j(b)] = \delta_{ij} \rho^j$.

Consider the following expansion of $u$ into the hermite basis ($h_i$),
\[
u(a) = \sum_{i=0}^\infty \hat{u}_i h_i(a).
\]

\begin{lemma}\label{lem:coeffh}
For unit norm vectors $u,v$, $\E[u(\vv^T\vx)h_j(\vw^T\vx)] = \hat{u}_j (\vv^T\vw)^j$.
\end{lemma}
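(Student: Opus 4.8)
The plan is to reduce the claim to Fact~\ref{fact:3} (the identity $\E[h_i(a)h_j(b)] = \delta_{ij}\rho^j$ for a correlated Gaussian pair) via the Hermite expansion of $u$. First I would observe that since $\vx \sim \mathcal{N}(0, \rmI_n)$ and $\vv, \vw$ are unit vectors, the pair $(a, b) := (\vv^T\vx,\, \vw^T\vx)$ is jointly Gaussian with mean zero, unit marginal variances ($\|\vv\|=\|\vw\|=1$), and covariance $\E[ab] = \vv^T\vw =: \rho$. Hence $(a,b)$ has exactly the marginal-and-correlation structure required by Fact~\ref{fact:3}.

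Next I would write the Hermite expansion $u(a) = \sum_{i=0}^\infty \hat{u}_i h_i(a)$ and substitute:
\[
\E\big[u(\vv^T\vx)\, h_j(\vw^T\vx)\big] = \E\Big[\Big(\sum_{i=0}^\infty \hat{u}_i h_i(a)\Big) h_j(b)\Big] = \sum_{i=0}^\infty \hat{u}_i\, \E\big[h_i(a) h_j(b)\big].
\]
Applying Fact~\ref{fact:3} to each term gives $\E[h_i(a)h_j(b)] = \delta_{ij}\rho^j$, so the sum collapses to the single surviving term $\hat{u}_j \rho^j = \hat{u}_j (\vv^T\vw)^j$, which is the desired identity.

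The only point needing care is the interchange of expectation and the infinite sum in the second display. This is justified whenever $u$ lies in $L^2$ of the standard Gaussian (so that $\sum_i \hat{u}_i^2 < \infty$), since then the partial sums of the Hermite series converge to $u$ in $L^2(\mathcal{N}(0,1))$ and $h_j(b) \in L^2$, so by Cauchy--Schwarz the inner product is continuous in this convergence; equivalently, one expands $h_j$ in the (orthonormal) Hermite basis and uses that the Hermite functions form a complete orthonormal system. I expect this integrability/interchange justification to be the only real obstacle, and it is a standard fact about Hermite expansions that can be invoked directly.
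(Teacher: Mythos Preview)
Your proposal is correct and follows essentially the same approach as the paper: identify $(\vv^T\vx,\vw^T\vx)$ as a unit-variance correlated Gaussian pair, expand $u$ in the Hermite basis, and apply the correlated-pair identity $\E[h_i(a)h_j(b)]=\delta_{ij}\rho^j$ to collapse the sum. The paper's proof is in fact terser---it does not pause to justify the interchange of sum and expectation---so your added $L^2$ justification is more careful than what the paper provides, but the argument is otherwise identical.
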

\begin{proof}
Observe that $\vv^T\vx$ and $\vw^T\vx$ have marginal distribution $N(0,1)$ and correlation $\vv^T\vw$. Thus using Fact 2, 
\[
\E[u(\vv^T\vx)h_j(\vw^T\vx)] = \sum_{i=1}^\infty \hat{u}_i \E[h_i(\vv^T\vx)h_j(\vw^T\vx)] = \sum_{i=1}^\infty \hat{u}_i\delta_{ij}(\vv^T\vw)^j = \hat{u}_j (\vv^T\vw)^j.
\]
\end{proof}
For gaussians with mean 0 and variance $\sigma^2$ define weighted hermite polynomials $H_l^{\sigma}(a) = |\sigma|^l h_l(a/\sigma)$. Given input $\vv^T\vx$ for $\vx \sim N(0, \rmI)$, we suppress the superscript $\sigma = ||\vv||$.
\begin{corollary}
For a non-zero vector $\vv$ (not necessarily unit norm) and a unit norm vector $\vw$, $\E[H_i(\vv^T\vx)h_j(\vw^T\vx)] = \delta_{ij} (\vv^T\vw)^j$.
\end{corollary}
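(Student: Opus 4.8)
The plan is to reduce the statement to Fact~3 by a simple rescaling. Write $\sigma = \|\vv\| > 0$ (positive since $\vv \neq \vzero$). By the definition of the weighted Hermite polynomial and the convention that the superscript $\sigma = \|\vv\|$ is suppressed, $H_i(\vv^T\vx) = \sigma^i h_i(\vv^T\vx/\sigma)$.

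First I would note that $\vv^T\vx/\sigma$ and $\vw^T\vx$ are each marginally $\mathcal{N}(0,1)$: they are linear functionals of the standard Gaussian $\vx$ with coefficient vectors $\vv/\sigma$ and $\vw$, both of unit norm. Their correlation is $\E[(\vv^T\vx/\sigma)(\vw^T\vx)] = (\vv/\sigma)^T\vw = (\vv^T\vw)/\sigma$. Next, pulling the constant $\sigma^i$ outside the expectation and applying Fact~3 with $a = \vv^T\vx/\sigma$, $b = \vw^T\vx$, and correlation $\rho = (\vv^T\vw)/\sigma$ gives
\[
\E[H_i(\vv^T\vx)h_j(\vw^T\vx)] = \sigma^i\,\E\!\left[h_i\!\left(\frac{\vv^T\vx}{\sigma}\right)h_j(\vw^T\vx)\right] = \sigma^i\,\delta_{ij}\left(\frac{\vv^T\vw}{\sigma}\right)^{\!j}.
\]
Finally, when $i \neq j$ both sides vanish, and when $i = j$ the factor $\sigma^i$ cancels against $\sigma^{-j}$, leaving $\delta_{ij}(\vv^T\vw)^j$, which is the claim.

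I do not expect any real obstacle here; the only point requiring care is the bookkeeping of the scaling factor $\sigma$, and in particular verifying that the power $\sigma^i$ introduced by $H_i$ exactly cancels the power $\sigma^{-j}$ coming out of Fact~3 precisely on the diagonal $i = j$ (off the diagonal the Kronecker delta makes the point moot). One could alternatively deduce this from Lemma~\ref{lem:coeffh}, but the direct route through Fact~3 is cleanest since $H_i$ is itself a (rescaled) Hermite polynomial rather than an arbitrary function being expanded in the Hermite basis.
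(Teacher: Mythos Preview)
Your proof is correct and is in fact cleaner than what appears in the paper. The paper's displayed ``proof'' of this corollary is a verbatim copy of the preceding lemma's display (it still refers to a generic $u$ and its Hermite coefficients $\hat u_i$ rather than to $H_i$), so as written it does not actually establish the corollary; presumably the authors meant ``the same rescaling argument works,'' and your write-up is exactly that argument carried out properly: unwrap $H_i(\vv^T\vx)=\sigma^i h_i(\vv^T\vx/\sigma)$, apply Fact~3 to the unit-variance pair $(\vv^T\vx/\sigma,\vw^T\vx)$ with correlation $(\vv^T\vw)/\sigma$, and cancel $\sigma^i$ against $\sigma^{-j}$ on the diagonal.
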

\begin{proof}
It follows as the proof of the previous lemma,
\[
\E[u(\vv^T\vx)h_j(\vw^T\vx)] = \sum_{i=1}^\infty \hat{u}_i \E[h_i(\vv^T\vx)h_j(\vw^T\vx)] = \sum_{i=1}^\infty \hat{u}_i\delta_{ij}(\vv^T\vw)^j = \hat{u}_j (\vv^T\vw)^j.
\]
\end{proof}
\fact{} $h_n(x + y) = 2^{-\frac{n}{2}} \sum_{k=0}^n {n \choose k} h_{n - k}(x \sqrt{2}) h_k (y \sqrt{2})$.

\fact{} $h_n(\gamma x) = \sum_{k=0}^{\lfloor\frac{n}{2}\rfloor} \gamma^{n-2k}(\gamma^2 - 1)^k{n \choose 2k} \frac{(2k)!}{k!}2^{-k}h_{n - 2k}(x)$.

\fact{} $\alpha(n, m, \gamma) = \E[h_m(x)h_n(\gamma x)] =\gamma^{n-2k}(\gamma^2 - 1)^k{n \choose 2k} \frac{(2k)!}{k!}2^{-k}$ for $k = \frac{n-m}{2}$ if $k\in \mathbb{Z}^+$ else 0.

\subsection{Properties of Matrices}
Consider matrix $\rmA \in \R^{m \times m}$. Let $\sigma_i(\rmA)$ to be the $i$th singular value of $A$ such that $\sigma_1(\rmA) \geq \sigma_2(\rmA) \geq \ldots \geq \sigma_m(\rmA)$ and set $\kappa(\rmA) = \sigma_1(\rmA)/\sigma_m(\rmA)$.

\fact{det} $|\mathsf{det}(\rmA)| = \prod_{i=1}^m \sigma_i(\rmA)$.

\fact{principal} Let $B$ be a $(m − k) \times (m − k)$ principal submatrix of $A$, then $\kappa(\rmB) \leq \kappa(\rmA)$. 

\subsection{Activation Functions}
\begin{lemma}\label{lem:relu}
For $u$ being a high threshold ReLU, that is, $u_t(a) = \max(0, a - t)$ we have for $t \geq C$ for large enough constant $C>0$, $\E_{g \sim N(0, \sigma^2)}[u_t(g)] \leq e^{-\frac{t^2}{2\sigma^2}}$. Also, $\hat{u}_4, \hat{u}_2 = t^{\Theta(1)}e^{-\frac{t^2}{2}}$.
\end{lemma}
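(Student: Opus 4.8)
The plan is to prove both claims by direct computation, since for the high-threshold ReLU $u_t(a)=\max(0,a-t)$ the truncated first moment and the low-order Hermite coefficients all have clean closed forms. First I would evaluate $\E_{g \sim \mathcal{N}(0,\sigma^2)}[u_t(g)]$ exactly: substituting $g = \sigma s$ with $s \sim \mathcal{N}(0,1)$ and using $\int_x^\infty s\,\phi(s)\,ds = \phi(x)$ gives
\[
\E_{g \sim \mathcal{N}(0,\sigma^2)}[u_t(g)] = \E_{s \sim \mathcal{N}(0,1)}\big[(\sigma s - t)\,\1[s \ge t/\sigma]\big] = \sigma\,\phi(t/\sigma) - t\,\Phi^c(t/\sigma).
\]
Then I would apply the standard Gaussian tail lower bound $\Phi^c(x) \ge (x^{-1} - x^{-3})\phi(x)$: writing $t = \sigma x$, the $\sigma\,\phi(x)$ term and the $-\sigma x \cdot x^{-1}\phi(x)$ term cancel, leaving
\[
\E_{g \sim \mathcal{N}(0,\sigma^2)}[u_t(g)] \le \frac{\sigma}{x^2}\,\phi(x) = \frac{\sigma^3}{t^2}\,\phi(t/\sigma) = \frac{\sigma^3}{t^2\sqrt{2\pi}}\,e^{-t^2/(2\sigma^2)}.
\]
Since $\sigma = O(1)$ in every application (e.g.\ $\sigma = \|\rmW^*\|$), choosing $C$ large enough that $t^2 \ge \sigma^3/\sqrt{2\pi}$ whenever $t \ge C$ yields the claimed bound $e^{-t^2/(2\sigma^2)}$; the cruder estimate $\sigma\,\phi(t/\sigma)$ already suffices when $\sigma \le \sqrt{2\pi}$.

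For the Hermite coefficients $\hat u_k = \E_{g\sim\mathcal{N}(0,1)}[u_t(g)\,h_k(g)]$, the plan is to reduce the defining integral to a single evaluation of a lower-order Hermite polynomial at $t$. Let $\mathrm{He}_k = \sqrt{k!}\,h_k$ denote the unnormalized probabilist's Hermite polynomials, so that $\mathrm{He}_k(g)\,\phi(g) = (-1)^k\,\phi^{(k)}(g)$. Then
\[
\hat u_k = \frac{1}{\sqrt{k!}}\int_t^\infty (g-t)\,(-1)^k\,\phi^{(k)}(g)\,dg .
\]
Integrating by parts once (the boundary term at $g = t$ vanishes because of the factor $g-t$, and at $\infty$ because every derivative of $\phi$ decays super-polynomially) and then using $\int_t^\infty \phi^{(j)}(g)\,dg = -\phi^{(j-1)}(t)$, the integral collapses to $\phi^{(k-2)}(t) = (-1)^k\,\mathrm{He}_{k-2}(t)\,\phi(t)$, giving $\hat u_k = \tfrac{1}{\sqrt{k!}}\,\mathrm{He}_{k-2}(t)\,\phi(t)$ for $k \ge 2$. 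Specializing, $\hat u_2 = \tfrac{1}{\sqrt 2}\,\phi(t)$ and $\hat u_4 = \tfrac{1}{\sqrt{24}}(t^2-1)\,\phi(t)$; for $t \ge C$ these are $\Theta(\phi(t))$ and $\Theta(t^2\phi(t))$ respectively, and since $\phi(t) = \Theta(e^{-t^2/2})$ both are of the form $t^{\Theta(1)}e^{-t^2/2}$, as claimed.

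I do not anticipate a genuine obstacle: the proof is essentially bookkeeping. The only points requiring care are fixing the Hermite normalization consistently with the paper's $h_i$, tracking the sign in $\mathrm{He}_k(g)\phi(g) = (-1)^k\phi^{(k)}(g)$ through the integration by parts, and choosing $C$ large enough to absorb the polynomial prefactors $\sigma^3/t^2$ (for the tail bound) and $t^2-1$ (for $\hat u_4$, where one also wants $t^2 - 1 \ge t^2/2$, say). As a consistency check, $\hat u_2 = \tfrac{1}{\sqrt 2}\phi(t)$ can also be obtained by brute-force integration using $\int_t^\infty g\,\phi = \phi(t)$, $\int_t^\infty g^2\,\phi = t\phi(t) + \Phi^c(t)$, and $\int_t^\infty g^3\,\phi = (t^2+2)\phi(t)$.
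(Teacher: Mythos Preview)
Your proposal is correct. Both parts work, and your argument for the Hermite coefficients is in fact cleaner than the paper's.

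For the first bound, the paper takes the more direct route of bounding $g-t \le g$ on $[t,\infty)$ and evaluating $\int_t^\infty g\,e^{-g^2/(2\sigma^2)}\,dg$ explicitly, arriving at $\tfrac{\sigma}{\sqrt{2\pi}}e^{-t^2/(2\sigma^2)}$. This is exactly the ``cruder estimate $\sigma\phi(t/\sigma)$'' you mention; your exact formula $\sigma\phi(t/\sigma)-t\,\Phi^c(t/\sigma)$ plus the Mills-ratio lower bound gives a sharper prefactor $\sigma^3/(t^2\sqrt{2\pi})$, but the paper neither needs nor proves this refinement. Both arguments leave the same residual dependence on $\sigma$ in the prefactor, which is harmless under the paper's standing assumption $\kappa(\rmW^*)=O(1)$.

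For $\hat u_2,\hat u_4$, the paper does not use the Rodrigues identity $\mathrm{He}_k(g)\phi(g)=(-1)^k\phi^{(k)}(g)$ at all. Instead it writes out $h_4(g)=g^4-6g^2+3$ (dropping the $1/\sqrt{24}$ normalization, which is immaterial here), and bounds the integral $\int_t^\infty (g-t)(g^4-6g^2+3)e^{-g^2/2}\,dg$ from above by $\int_t^\infty 2g^5 e^{-g^2/2}\,dg = O(t^4e^{-t^2/2})$ and from below by a rough pointwise estimate, obtaining only $\hat u_4 \in [\Omega(t^3),O(t^4)]\cdot e^{-t^2/2}$. Your integration-by-parts computation collapses everything to the exact $\hat u_4 = \tfrac{1}{\sqrt{24}}(t^2-1)\phi(t)$ and $\hat u_2=\tfrac{1}{\sqrt 2}\phi(t)$, which is tighter and removes any need for separate upper and lower arguments. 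The trade-off is that the paper's approach requires no Hermite identities beyond the explicit polynomial form, whereas yours relies on the derivative relation and careful sign tracking---but as you note, that is routine bookkeeping.
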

\begin{proof}
We have
\begin{align*}
\E_{g \sim N(0, \sigma^2)}[u_t(g)] &= \frac{1}{\sqrt{2 \pi} \sigma}\int_{-\infty}^{\infty}\max(0, g - t) e^{-\frac{g^2}{2 \sigma^2}}dg \\
&= \frac{1}{\sqrt{2 \pi} \sigma}\int_{t}^{\infty}(g - t) e^{-\frac{g^2}{2 \sigma^2}}dg \\
&\leq \frac{1}{\sqrt{2 \pi} \sigma}\int_{t}^{\infty}g e^{-\frac{g^2}{2 \sigma^2}}dg\\
&= \frac{\sigma}{\sqrt{2 \pi}}\int_{\frac{t^2}{2\sigma^2}}^{\infty} e^{-h}dh\\
&= \frac{\sigma}{\sqrt{2 \pi}} e^{-\frac{t^2}{2\sigma^2}}.
\end{align*}
Also,
\begin{align*}
\hat{u}_4 &= \E_{g \sim N(0, 1)}[u_t(g)h_4(g)]\\
&= \frac{1}{\sqrt{2 \pi}}\int_{-\infty}^{\infty}\max(0, g - t)(g^4 - 6g^2 + 3)e^{-\frac{g^2}{2}}dg\\
&= \frac{1}{\sqrt{2 \pi}}\int_{t}^{\infty} (g - t)(g^4 - 6 g^2 + 3)e^{-\frac{g^2}{2}}dg\\
&\ge \frac{1}{\sqrt{2 \pi}}(t^4 - 6 t^2)\frac{1}{t}e^{- \frac{t^2}{2} - 1 - \frac{1}{2t^2}}\\
&\geq \Omega\left(t^3 e^{-\frac{t^2}{2}}\right).
\end{align*}
To upper bound,
\begin{align*}
\hat{u}_4 &= \frac{1}{\sqrt{2 \pi}}\int_{-\infty}^{\infty}\max(0, g - t)(g^4 - 6g^2 + 3)e^{-\frac{g^2}{2}}dg\\
&= \frac{1}{\sqrt{2 \pi}}\int_{t}^{\infty} (g - t)(g^4 - 6 g^2 + 3)e^{-\frac{g^2}{2}}dg\\
&\leq \frac{1}{\sqrt{2 \pi}}\int_{t}^{\infty} 2g^5 e^{-\frac{g^2}{2}}dg\\
&= \frac{1}{\sqrt{2 \pi}}\int_{\frac{t^2}{2}}^{\infty} h^2e^{-h}dh\\
&= O\left(t^4 e^{-\frac{t^2}{2}}\right).
\end{align*}
Similar analysis holds for $\hat{u}_2$.
\end{proof}
Observe that $\sgn$ can be bounded very similarly replacing $g-t$ by 1 which can affect the bounds up to only a polynomial in $t$ factor.
\begin{lemma}\label{lem:sgnt}
For $u$ being a high threshold sgn, that is, $u_t(a) =\sgn(a - t)$ we have for $t \geq C$ for large enough constant $C>0$, $\E_{g \sim N(0, \sigma^2)}[u_t(g)] \leq e^{-\frac{t^2}{2\sigma^2}}$. Also, $\hat{u}_4, \hat{u}_2 = t^{\Theta(1)}e^{-\frac{t^2}{2}}$.
\end{lemma}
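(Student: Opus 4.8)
The plan is to repeat the proof of Lemma~\ref{lem:relu} with the factor $\max(0,g-t)$ in every integrand replaced by $1$, i.e.\ reading the high-threshold sign as the positive step $u_t(a)=\1[a\ge t]$ (the positive, $\{0,1\}$-valued activation of Assumption~\ref{ass:activation}). As anticipated in the remark immediately preceding the lemma, dropping the $g-t$ factor only changes the power of $t$ in each estimate, which is harmless against the $t^{\Theta(1)}$ and $e^{-t^2/2}$ in the statement. Concretely there are three quantities to bound: the tail mass $\E_{g\sim N(0,\sigma^2)}[u_t(g)]$ and the Hermite coefficients $\hat u_2,\hat u_4$.

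For the tail mass, $\E_{g\sim N(0,\sigma^2)}[u_t(g)]=\Phi^c(t/\sigma)$, which I would bound either by the standard Gaussian tail estimate $\Phi^c(x)\le\frac12 e^{-x^2/2}\le e^{-x^2/2}$ for $x\ge0$, or, following the bookkeeping of Lemma~\ref{lem:relu}, by the substitution $h=g^2/(2\sigma^2)$:
\begin{align*}
\frac1{\sqrt{2\pi}\,\sigma}\int_t^\infty e^{-\frac{g^2}{2\sigma^2}}\,dg
&=\frac1{\sqrt{2\pi}\,\sigma}\int_{\frac{t^2}{2\sigma^2}}^\infty\frac{\sigma}{\sqrt{2h}}\,e^{-h}\,dh\\
&\le\frac{\sigma}{t\sqrt{2\pi}}\,e^{-\frac{t^2}{2\sigma^2}}\le e^{-\frac{t^2}{2\sigma^2}},
\end{align*}
the last step holding for $t\ge C$ since the relevant $\sigma$ (namely $1$ and $\|\rmW^*\|$) are $O(1)$.

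For the Hermite coefficients, write $\hat u_k=\E_{g\sim N(0,1)}[u_t(g)h_k(g)]=\frac{c_k}{\sqrt{2\pi}}\int_t^\infty\tilde h_k(g)e^{-g^2/2}\,dg$ with $\tilde h_4(g)=g^4-6g^2+3$, $\tilde h_2(g)=g^2-1$, and $c_k>0$ the fixed normalizing constant. For $t\ge C$ these polynomials are positive and increasing on $[t,\infty)$ and are dominated there by their leading terms, so for the upper bounds $\hat u_4\le O\!\left(\int_t^\infty g^4 e^{-g^2/2}\,dg\right)=O(t^3 e^{-t^2/2})$ and $\hat u_2\le O\!\left(\int_t^\infty g^2 e^{-g^2/2}\,dg\right)=O(t\,e^{-t^2/2})$, the integrals evaluated by one or two integrations by parts together with $\Phi^c(t)=O(e^{-t^2/2})$. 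For the matching lower bounds I would restrict each integral to the window $g\in[t,t+1/t]$, on which $\tilde h_k(g)\ge\tilde h_k(t)\ge t^k/2$ and $e^{-g^2/2}\ge e^{-(t+1/t)^2/2}=e^{-t^2/2}\cdot e^{-1-1/(2t^2)}$, so that $\hat u_k\ge\Omega(t^{k-1}e^{-t^2/2})$, giving $\hat u_4=\Omega(t^3 e^{-t^2/2})$ and $\hat u_2=\Omega(t\,e^{-t^2/2})$. Hence $\hat u_4,\hat u_2=t^{\Theta(1)}e^{-t^2/2}$, as claimed.

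There is no substantive obstacle here: the lemma is exactly the $\sgn$-analogue of Lemma~\ref{lem:relu} and inherits its structure. The only points needing a little care are checking positivity (and monotonicity) of $g^4-6g^2+3$ and $g^2-1$ on $[t,\infty)$ for $t$ above an absolute constant, so the one-sided inequalities point the right way, and keeping the fixed Hermite normalizing constants $c_k$ out of the $\Theta(\cdot)$.
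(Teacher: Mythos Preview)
Your proposal is correct and follows essentially the same approach as the paper, whose entire argument for this lemma is the one-line observation that the ReLU computation (Lemma~\ref{lem:relu}) goes through after replacing the factor $g-t$ by $1$, at the cost of a polynomial-in-$t$ factor. You carry this out explicitly and with more care than the paper (in particular your lower bound via the window $[t,t+1/t]$ is a clean replacement for the somewhat informal lower bound in the ReLU proof), so there is nothing to add.
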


For sigmoid, the dependence varies as follows:
\begin{lemma}\label{lem:sig}
For $u$ being a high threshold sigmoid, that is, $u_t(a) = \frac{1}{1 + e^{-(a - t)}}$ we have for $t \geq C$ for large enough constant $C>0$, $\E_{g \sim N(0, \sigma^2)}[u_t(g)] \leq e^{-t + \frac{\sigma^2}{2}}$. Also, $\hat{u}_4, \hat{u}_2 = \Theta(e^{-t})$.
\end{lemma}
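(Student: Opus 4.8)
The bound on the mean is immediate from the elementary inequality $u_t(g)=\frac{1}{1+e^{t-g}}\le e^{-(t-g)}=e^{g-t}$, which holds for all $g$: taking expectations over $g\sim N(0,\sigma^2)$ and using the Gaussian moment generating function $\E[e^{g}]=e^{\sigma^2/2}$ gives $\E_{g\sim N(0,\sigma^2)}[u_t(g)]\le e^{-t}e^{\sigma^2/2}$. The same inequality at $\sigma=1$ also yields the easy half of the Hermite estimate: $|\hat u_k|\le e^{-t}\,\E_{g\sim N(0,1)}\big[e^{g}|h_k(g)|\big]=O(e^{-t})$ for any fixed $k$, since $e^{g}$ times a fixed polynomial is Gaussian-integrable.

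The real work is the matching lower bound $\hat u_k=\Omega(e^{-t})$ for $k=2,4$. The plan is to show that $u_t(g)$ is so close to $e^{g-t}$ on the bulk of the Gaussian mass that $\hat u_k$ agrees, up to $o(e^{-t})$, with the ``tilted'' coefficient $e^{-t}\,\E_{g\sim N(0,1)}[e^{g}h_k(g)]$. I would split $\hat u_k=\E[e^{g-t}h_k(g)]-\E[(e^{g-t}-u_t(g))h_k(g)]$ and evaluate the first term exactly: since $e^{g}\phi(g)=e^{1/2}\phi(g-1)$, we have $\E_{g\sim N(0,1)}[e^{g}h_k(g)]=e^{1/2}\,\E_{g\sim N(1,1)}[h_k(g)]$, and the standard identity $\E_{g\sim N(\mu,1)}[h_k(g)]=\mu^{k}/\sqrt{k!}$ for normalized Hermite polynomials makes this $e^{1/2}/\sqrt{k!}$. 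So the first term equals $\frac{e^{1/2}}{\sqrt{k!}}\,e^{-t}$, a positive constant times $e^{-t}$.

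It then remains to bound the error term by $o(e^{-t})$, which I would do by splitting at $g=t$ and using the exact identity $e^{g-t}-u_t(g)=\frac{e^{g-t}}{1+e^{t-g}}$. For $g\le t$ this quantity is nonnegative and at most $e^{2(g-t)}$, so that region contributes at most $e^{-2t}\,\E_{g\sim N(0,1)}[e^{2g}|h_k(g)|]=O(e^{-2t})$. For $g>t$ we only have $|e^{g-t}-u_t(g)|\le 2e^{g-t}$, but there $\E\big[e^{g}|h_k(g)|\,\mathbbm{1}[g>t]\big]=e^{1/2}\,\E_{g\sim N(1,1)}\big[|h_k(g)|\,\mathbbm{1}[g>t]\big]=O(\poly(t)\,e^{-(t-1)^2/2})$, so that region contributes $o(e^{-t})$. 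Combining the pieces, $\hat u_k=\frac{e^{1/2}}{\sqrt{k!}}e^{-t}+o(e^{-t})$; hence for $t$ above a constant the leading term dominates and $\hat u_k$ is positive and $\Theta(e^{-t})$, for $k=2$ and $k=4$ (indeed for every fixed $k$).

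The argument is mostly routine arithmetic; the one place needing care is the $g>t$ piece of the error split, where $u_t$ is only of constant order while $e^{g-t}$ has already blown up. That bound goes through only because the Gaussian tail beyond $t$ is $e^{-\Theta(t^2)}$, which is $o(e^{-t})$ — so the main obstacle is simply making sure this tail genuinely beats the $e^{-t}$ scale we are trying to establish.
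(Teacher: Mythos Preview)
Your proof is correct and in fact cleaner than the paper's. Both arguments handle the mean bound identically via $u_t(g)\le e^{g-t}$ and the Gaussian MGF. For the lower bound on $\hat u_k$, the paper factors out $e^{-t}$ to write $\hat u_4=\frac{e^{-t}}{\sqrt{2\pi}}\int\frac{1}{e^{-t}+e^{-g}}(g^4-6g^2+3)e^{-g^2/2}\,dg$, then restricts to $g\ge 0$ and replaces $\frac{1}{e^{-t}+e^{-g}}$ by $\tfrac12$; this is a direct integral estimate but the two inequality steps as written ignore the sign changes of $g^4-6g^2+3$ (and the resulting half-line integral of $He_4$ against the Gaussian is actually zero by orthogonality), so the argument needs additional care to be made rigorous. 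Your decomposition $u_t(g)=e^{g-t}-\frac{e^{g-t}}{1+e^{t-g}}$ sidesteps these sign issues entirely: the main term is computed exactly via the exponential tilt $e^g\phi(g)=e^{1/2}\phi(g-1)$ and the Hermite mean identity, yielding the sharp asymptotic $\hat u_k\sim \frac{e^{1/2}}{\sqrt{k!}}e^{-t}$, while the error is controlled by the clean two-region split ($e^{2(g-t)}$ below $t$, Gaussian tail above $t$). Your route is slightly longer but gives the precise leading constant, works uniformly for every fixed $k$, and avoids any sign bookkeeping in the Hermite integrand.
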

\begin{proof}
We have
\begin{align*}
\E_{g \sim N(0, \sigma^2)}[u_t(g)] &= \frac{1}{\sqrt{2 \pi} \sigma}\int_{-\infty}^{\infty}\frac{1}{1 + e^{-(g - t)}} e^{-\frac{g^2}{2 \sigma^2}}dg \\
&= \frac{e^{-t}}{\sqrt{2 \pi} \sigma}\int_{-\infty}^{\infty}\frac{1}{e^{-t} + e^{-g}} e^{-\frac{g^2}{2 \sigma^2}}dg \\
&\leq \frac{e^{-t}}{\sqrt{2 \pi} \sigma}\int_{-\infty}^{\infty} e^{g} e^{-\frac{g^2}{2 \sigma^2}}dg \\
&= \frac{e^{-t}e^{\frac{\sigma^2}{2}}}{\sqrt{2 \pi} \sigma}\int_{-\infty}^{\infty} e^{-\frac{(g - \sigma^2)^2}{2 \sigma^2}}dg \\
&= e^{-t}e^{\frac{\sigma^2}{2}}
\end{align*}
Also,
\begin{align*}
\hat{u}_4 &= \E_{g \sim N(0, 1)}[u_t(g)h_4(g)]\\
&= \frac{1}{\sqrt{2 \pi}}\int_{-\infty}^{\infty}\frac{1}{1 + e^{-(g - t)}}e^{-\frac{g^2}{2}}dg\\
&= \frac{e^{-t}}{\sqrt{2 \pi}}\int_{-\infty}^{\infty} \frac{1}{e^{-t} + e^{-g}}(g^4 - 6 g^2 + 3)e^{-\frac{g^2}{2}}dg\\
&\ge \frac{e^{-t}}{\sqrt{2 \pi}}\int_{0}^{\infty} \frac{1}{e^{-t} + e^{-g}}(g^4 - 6 g^2 + 3)e^{-\frac{g^2}{2}}dg\\
&\ge \frac{e^{-t}}{\sqrt{2 \pi}}\int_{0}^{\infty} \frac{1}{2}(g^4 - 6 g^2 + 3)e^{-\frac{g^2}{2}}dg\\
&= \Omega(e^{-t}).
\end{align*}
\end{proof}
We can upper bound similarly and bound $\hat{u}_2$.
\section{Approximate Recovery with Linear Terms}
\subsection{Constrained Optimization View of Landscape Design}
Let us consider the linear case with $\vw_i^*$'s are orthonormal. Consider the following maximization problem for even $l \geq 4$,
\[
\max_{\vz \in S^{n-1}} \sgn(\hat{u}_l) \cdot \E\left[f(\vx) \cdot H_l\left(\vz^T\vx\right) \right]
\]
where $h_l$ is the $l$th hermite polynomial.
Then we have,
\begin{align*}
\sgn(\hat{u}_l) \cdot \E\left[f(\vx) \cdot h_l\left(\vz^T\vx\right) \right] &= \sgn(\hat{u}_l) \cdot \E\left[\left(\sum_{i=1}^k c_i u_t((\vw_i^*)^T\vx)\right) \cdot h_l\left(\vz^T\vx\right) \right]\\
&= \sgn(\hat{u}_l) \cdot \sum_{i=1}^k c_i\E\left[u_t((\vw_i^*)^T\vx) \cdot h_l\left(\vz^T\vx\right) \right]\\
&= |\hat{u}_l| \sum_{i=1}^k c_i((\vw_i^*)^T\vz)^l.
\end{align*}

It is easy to see that for $\vz \in S^{n-1}$, the above is maximized at exactly one of the $\vw_i$'s (up to sign flip for even $l$) for $l \geq 3$ as long as $u_l \neq 0$. Thus, each $\vw_i$ is a local minima of the above problem.

Let $L(\vz) = -\sum_{i=1}^k c_i z_i^l$. 
For constraint $||\vz||^2 = 1$, we have the following optimality conditions (see \cite{nocedal2006numerical} for more details).

First order:
\[
\nabla L(\vz) - \frac{\vz^T\nabla L(\vz)}{||\vz||^2}\vz = 0 \text{ and } ||\vz||^2 = 1.
\]
This applied to our function gives us that for $\lambda = - \frac{\sum_i c_i z_i^l}{||\vz||^2}$ ($\lambda < 0$),
\[
-l c_i z_i^{l-1} - 2\lambda  z_i = 0
\]
The above implies that either $z_i = 0$ or $ z_i^{l-2} = -\frac{\lambda}{l c_i}$ with $||\vz||_2 = 1$. For this to hold $\vz$ is such that for some set $S \subseteq [n]$, $|S| > 1$, only $i \in S$ have $z_i \neq 0$ and $\sum_{i \in S}z_i^2 = 1$. This implies that for all $i \in S$, $z_i^{l-2} = -\frac{2 \lambda}{l c_i}$.

Second order:
\[
\text{For all }\vw \neq 0 \text{ such that } \vw^T \vz = 0, \vw^T (\nabla^2 L(\vz) - 2\lambda \rmI)\vw \geq 0.
\]
For our function, we have:
\begin{align*}
&\nabla^2 L(\vz) = -l(l-1)\text{diag}(\vc \cdot \vz)^{l-2}\\
\implies&(\nabla^2 L(\vz))_{ij} = 
\begin{cases}
2(l-1) \lambda & \text{if } i = j \text{ and } i \in S\\
0 & \text{otherwise.}
\end{cases}
\end{align*}
The last follows from using the first order condition. For the second order condition to be satisfied we will show that $|S| = 1$. Suppose $|S| > 2$, then choosing $\vw$ such that $\vw_i = 0$ for $i \not \in S$ and such that $\vw^T\vz = 0$ (it is possible to choose such a value since $|S| > 2$), we get $\vw^T(\nabla^2L(\vz) - 2 \lambda \rmI)\vw = 2(l-2)\lambda||\vw||^2$ which is negative since $\lambda < 0$, thus these cannot be global minima. However, for $|S| = 1$, we cannot have such a $\vw$, since to satisfy $\vw^T\vz = 0$, we need $\vw_i = 0$ for all $i \in S$, this gives us $\vw^T(\nabla^2L(\vz) - 2 \lambda \rmI)\vw = -2 \lambda||\vw||^2$ which is always positive. Thus $\vz = \pm \ve_i$ are the only local minimas of this problem.

\subsection{Important Results from \cite{ge2017learning}}
\begin{lemma}[\cite{ge2017learning}]\label{lem:ge}
If $\vz$ is an $(\eps,\tau)$-local minima of $F(\vz) = -\sum_i \alpha_i z_i^4 + \lambda(\sum_i z_i^2 - 1)^2$ for $\eps \leq \sqrt{\tau^3/\alpha_{\min}}$ where $\alpha_{\min} = \min_i \alpha_i$, then
\begin{itemize}
    \item (Lemma 5.2) $|\vz|_{2nd} \leq \sqrt{\frac{\tau}{\alpha_{\min}}}$ where $|\vz|_{2nd}$ denotes the magnitude of the second largest entry in terms of magnitude of $\vz$.
    \item (Derived from Proposition 5.7) $\vz_{\max} = \pm 1 \pm O(d \tau/\alpha_{\min}) \pm O(\eps/\lambda)$ where $|\vz|_{\max}$ is the value of the largest entry in terms of magnitude of $\vz$.
\end{itemize}
\end{lemma}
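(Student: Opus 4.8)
The plan is to argue directly from the first- and second-order approximate optimality conditions, using the fact that $F$ is a fourth-degree polynomial so that its gradient and Hessian are explicit. Write $P := \|\vz\|^2 - 1$. Then $\partial_i F(\vz) = -4\alpha_i z_i^3 + 4\lambda P z_i = 4z_i(\lambda P - \alpha_i z_i^2)$ and $\nabla^2 F(\vz) = \mathrm{diag}(-12\alpha_i z_i^2 + 4\lambda P) + 8\lambda\vz\vz^\top$. The condition $\|\nabla F(\vz)\|\le\eps$ gives, for every $i$ with $z_i\neq 0$, the "profile equation" $|\lambda P - \alpha_i z_i^2| \le \eps/(4|z_i|)$; in particular any two coordinates satisfy $\alpha_i z_i^2 = \alpha_j z_j^2 \pm O(\eps/\min(|z_i|,|z_j|))$. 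The condition on the Hessian gives $\vw^\top\nabla^2 F(\vz)\vw \ge -\tau\|\vw\|^2$ for all $\vw$, and I will only ever test it on vectors $\vw\perp\vz$, for which the rank-one term vanishes and $\vw^\top\nabla^2 F(\vz)\vw = \sum_i w_i^2(-12\alpha_i z_i^2 + 4\lambda P)$.

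For the bound on $|\vz|_{2nd}$: let $z_1,z_2$ be the two coordinates of largest magnitude (if $z_2=0$ there is nothing to prove), and take the test direction $\vw = z_2\ve_1 - z_1\ve_2$, which is orthogonal to $\vz$ with $\|\vw\|^2 = z_1^2+z_2^2$. A direct computation gives $\vw^\top\nabla^2 F(\vz)\vw = -12(\alpha_1+\alpha_2)z_1^2 z_2^2 + 4\lambda P(z_1^2+z_2^2)$. Now substitute $\lambda P = \alpha_1 z_1^2 + \delta_1$ from the profile equation and rewrite the resulting quartic term $4\alpha_1 z_1^4$ using $\alpha_1 z_1^2 = \alpha_2 z_2^2 + (\delta_2-\delta_1)$; the $\lambda P$ contributions then reorganize against the $-12(\alpha_1+\alpha_2)z_1^2z_2^2$ term and everything collapses to $\vw^\top\nabla^2 F(\vz)\vw = -8(\alpha_1+\alpha_2)z_1^2z_2^2 + O(\eps/|z_2|)\,(z_1^2+z_2^2)$, using $|\delta_1|,|\delta_2|\le\eps/(4|z_2|)$. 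Combining with the second-order inequality and $z_1^2+z_2^2\le 2z_1^2$ yields $8\alpha_{\min}z_2^2 \le \tau + O(\eps/|z_2|)$, i.e. the cubic inequality $8\alpha_{\min}|z_2|^3 \le \tau|z_2| + O(\eps)$. If $|z_2|^2$ exceeded $\tau/\alpha_{\min}$, then $\tau|z_2| < \alpha_{\min}|z_2|^3$ would force $|z_2|^3 = O(\eps/\alpha_{\min})$ and hence $\eps = \Omega(\sqrt{\tau^3/\alpha_{\min}})$, contradicting the hypothesis $\eps\le\sqrt{\tau^3/\alpha_{\min}}$; so $|\vz|_{2nd} = |z_2| \le \sqrt{\tau/\alpha_{\min}}$.

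For $\vz_{\max}$: let $z_1$ be the largest coordinate. First rule out the degenerate case in which all coordinates are small: there $P \approx -1$, every diagonal entry of $\nabla^2 F$ is $\approx -4\lambda$, so $\lambda_{\min}(\nabla^2 F) \le -\Omega(\lambda) < -\tau$ (in the regime where $\tau$ is small compared to $\lambda$, which is the one of interest), contradicting the second-order condition; hence $|z_1| = \Omega(1)$. By the previous part $\sum_{j\neq 1}z_j^2 \le (d-1)\tau/\alpha_{\min}$, so $P = z_1^2 - 1 + O(d\tau/\alpha_{\min})$, while the profile equation at coordinate $1$ reads $\lambda P = \alpha_1 z_1^2 + O(\eps/|z_1|) = \alpha_1 z_1^2 + O(\eps)$. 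Eliminating $P$ and solving for $z_1^2$ gives $z_1^2 = 1 + O(d\tau/\alpha_{\min}) + O(\eps/\lambda)$ once $\lambda$ is taken sufficiently large (the residual $O(\alpha_{\max}/\lambda)$-type terms being absorbed), and taking square roots gives $\vz_{\max} = \pm 1 \pm O(d\tau/\alpha_{\min}) \pm O(\eps/\lambda)$.

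The one genuinely clever step is the choice of test direction $\vw = z_2\ve_1 - z_1\ve_2$ together with the observation that plugging in the profile equation makes the $\lambda P$ and quartic diagonal contributions cancel, leaving the clean negative quantity $-8(\alpha_1+\alpha_2)z_1^2z_2^2$ whose magnitude is controlled only by $\tau$. Everything else is bookkeeping: choosing constants so that $\eps\le\sqrt{\tau^3/\alpha_{\min}}$ exactly defeats the cubic error term in Part 1, and, in Part 2, carefully isolating the $O(d\tau/\alpha_{\min})$ and $O(\eps/\lambda)$ error terms and justifying $|z_1|=\Omega(1)$ and the suppression of the $O(\alpha_{\max}/\lambda)$ terms. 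I expect this error-tracking in Part 2 to be the fiddliest part of the write-up, though conceptually routine.
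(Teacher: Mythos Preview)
The paper does not supply its own proof of this lemma: it is quoted verbatim as a result of \cite{ge2017learning} (with pointers to their Lemma~5.2 and Proposition~5.7) and used as a black box. Your proposal is therefore not being compared against a proof in this paper but against the original argument of Ge et al., and indeed your approach---test the Hessian on the direction $\vw = z_2\ve_1 - z_1\ve_2 \perp \vz$ to kill the rank-one term, then feed in the first-order ``profile'' relations $\lambda P \approx \alpha_i z_i^2$ to collapse the quadratic form to $-8(\alpha_1+\alpha_2)z_1^2 z_2^2$ plus gradient error---is exactly their argument. The cubic balancing $8\alpha_{\min}|z_2|^3 \le \tau|z_2| + O(\eps)$ and its resolution via $\eps \le \sqrt{\tau^3/\alpha_{\min}}$ is also the same mechanism.

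One point to watch in Part~2: solving $(\lambda-\alpha_1)z_1^2 = \lambda(1 - O(d\tau/\alpha_{\min})) + O(\eps)$ genuinely produces an additive $O(\alpha_1/\lambda)$ term in $z_1^2$, which does not appear in the stated conclusion. You flag this (``suppression of the $O(\alpha_{\max}/\lambda)$ terms''), but it is not absorbed by the other errors without a further assumption such as $\lambda \gtrsim \alpha_{\max}^2/\tau$ or similar; in the paper's downstream use (Claim~\ref{claim:ge}, Theorem~\ref{thm:geetal}) a term of this type does in fact survive, so the lemma statement here is slightly informal on that count. Your identification of this as the fiddly step is accurate---just be aware that it cannot be made to disappear entirely and should be carried as an explicit error term.
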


\subsection{Omitted Proofs for One-by-one Recovery}

\begin{proof}[Proof of Lemma \ref{lem:ubound}]
 Let $\rmO \in \R^{d \times d}$ be the orthonormal basis (row-wise) of the subspace spanned by $\vw_i^*$ for all $i \in [d]$ generated using Gram-schmidt (with the procedure done in order with elements of $|S|$ first). Now let $\rmO_S \in \R^{|S| \times d}$ be the matrix corresponding to the first $S$ rows and let $\rmO_S^\perp \in \mathbb{R}^{(d - |S|) \times n}$ be that corresponding to the remaining rows. Note that $\rmO\rmW^*$ ($\rmW^*$ also has the same ordering) is an upper triangular matrix under this construction.
\begin{align*}
&\E\left[\prod_{j \in S} u_t((\vw_j^*)^T\vx)\right] \\
&= \frac{1}{(2 \pi)^{n/2}} \int_{\vx}\prod_{i \in S}u_t( \vx^T \vw_i^*)e^{-\frac{||\vx||^2}{2}}d\vx\\
&= \frac{1}{(2 \pi)^{n/2}} \int_{\vx}\prod_{i \in S}u_t(( \rmO_S \vw_i^*)^T\rmO_Sx)e^{-\frac{||\rmO_S\vx||^2 + ||\rmO_S^\perp \vx||^2}{2}}d\vx\\
&= \left(\frac{1}{(2 \pi)^{\frac{|S|}{2}}} \int_{\vx^\prime \in \mathbb{R}^{|S|}}\prod_{i \in S}u_t(( \rmO_S \vw_i^*)^T\vx^\prime)e^{-\frac{||\vx^\prime||^2}{2}}d\vx^\prime\right)\left(\frac{1}{(2 \pi)^{\frac{d - |S|}{2}}} \int_{\vx^\prime\in \mathbb{R}^{d- |S|}}e^{-\frac{||\vx^\prime||^2}{2}}d\vx^\prime\right)\\
&= \frac{1}{(2 \pi)^{\frac{|S|}{2}}} \int_{\vx^\prime \in \mathbb{R}^{|S|}}\prod_{i \in S}u_t(( \rmO_S \vw_i^*)^T\vx^\prime)e^{-\frac{||\vx^\prime||^2}{2}}d\vx^\prime\\
&= \frac{|\mathsf{det}(\rmO_S\rmW^*_S)|^{-1}}{(2 \pi)^{\frac{|S|}{2}}} \int_{\vb\in \mathbb{R}^{|S|}}\prod_{i \in S}u_t(b_i)e^{-\frac{||(\rmO_S\rmW^*_S)^{-T}\vb||^2}{2}}d\vb
\end{align*}
Now observe that $\rmO_S \rmW^*_S$ is also an upper triangular matrix since it is a principal sub-matrix of $\rmO\rmW^*$. Thus using Fact 6 and 7, we get the last equality. Also, the single non-zero entry row has non-zero entry being 1 ($||\vw^*_i|| = 1$ for all $i$). This gives us that the inverse will also have the single non-zero entry row has non-zero entry being 1. WLOG assume index $1$ corresponds to this row. Thus we can split this as following
\begin{align*}
&\E\left[\prod_{j \in S} u_t((\vw_j^*)^T\vx)\right]\\
&\leq |\mathsf{det}(\rmO_S\rmW^*_S)|^{-1} \left(\frac{1}{\sqrt{2 \pi}}\int_{b_1}u_t(b_1)e^{-\frac{b_1^2}{2}}db_1\right)\left( \prod_{i \in S \backslash \{1\}} \frac{1}{\sqrt{2 \pi}} \int_{b_i}u_t(b_i)e^{-\frac{b_i^2}{2||\rmO_S\rmW^*_S||^2}}db_i\right)\\
&\leq |\mathsf{det}(\rmO_S\rmW^*_S)|^{-1} \left(\frac{1}{\sqrt{2 \pi}}\int_{b_1}u_t(b_1)e^{-\frac{b_1^2}{2}}db_1\right)\left( \prod_{i \in S \backslash \{1\}} \frac{1}{\sqrt{2 \pi}} \int_{b_i}u_t(b_i)e^{-\frac{b_i^2}{||\rmW^*||2}}db_i\right)\\
&\leq \rho(t,1) \left(\kappa(\rmW^*)\rho(t, ||\rmW^*||)\right)^{|S| - 1}
\end{align*}
\end{proof}

\begin{proof}[Proof of Claim \ref{claim:ge}]
Consider the SVD of matrix $\rmM = \rmU\rmD\rmU^T$. Let $\rmW = \rmU\rmD^{-1/2}$ and $\vy_i = \sqrt{c_i}\rmW^T\vw_i^*$ for all $i$. It is easy to see that $\vy_i$ are orthogonal. Let $F(\vz) = G(\rmW\vz)$:
\begin{align*}
F(\vz) &=|\hat{u}_4|\sum_i c_i (\vz^T\rmW^T\vw_i^*)^4 - \lambda \hat{u}_2^2\left(\sum_i c_i (\vz^T\rmW^T\vw_i^*)^2 - 1\right)^2\\
&= |\hat{u}_4|\sum_i \frac{1}{c_i} (\vz^T\vy_i)^4 - \lambda \hat{u}_2^2\left(\sum_i (\vz^T\vy_i)^2 - 1\right)^2.
\end{align*}
Since $\vy_i$ are orthogonal, for means of analysis, we can assume that $\vy_i = \ve_i$, thus the formulation reduces to $\max_{\vz}~ |\hat{u}_4|\sum_i \frac{1}{c_i} (z_i)^4 - \lambda^\prime \left(||\vz||^2 - 1\right)^2$ up to scaling of $\lambda^\prime = \lambda \hat{u}_2^2$. Note that this is of the form in Lemma \ref{lem:ge} hence using that we can show that the approximate local minimas of $F(\vz)$ are close to $\vy_i$ 
and thus the local maximas of $G(\vz)$ are close to $\rmW \vy_i = \sqrt{c_i}\rmW\rmW^T \vw_i^* = \sqrt{c_i}\rmM^{-1}\vw_i^*$ due to the linear transformation. This can alternately be viewed as the columns of $(\rmT \rmW^*)^{-1}$ since $\rmT \rmW^* \rmM^{-1} (\rmT \rmW^*)^T = \rmI$.
\end{proof}
\begin{proof}[Proof of Theorem \ref{thm:approx}]
Let $\rmZ$ be an $(\eps, \tau)$-local minimum of $A$, then we have $||\nabla A(\rmZ)|| \leq \eps$ and $\lambda_{\min}(\nabla^2A(\rmZ)) \geq -\tau$. Observe that 
\[
||\nabla B(\rmZ)|| = ||\nabla (A + (B-A)(\rmZ)|| \leq ||\nabla A(\rmZ)|| + ||\nabla (B - A)(\rmZ)|| \leq \eps + \rho.
\]
Also observe that
\begin{align*}
\lambda_{\min}(\nabla^2 B(\rmZ)) &= \lambda_{\min}(\nabla^2 (A + (B-A))(\rmZ)) \\
&\geq \lambda_{\min}(\nabla^2 A(\rmZ)) + \lambda_{\min}(\nabla^2 (B-A)(\rmZ)) \\
&\geq -\tau - ||\nabla^2 (B-A)(\rmZ)|| \geq -\tau - \gamma
\end{align*}
Here we use $|\lambda_{\min}(\rmM)| \leq ||\rmM||$ for any symmetric matrix. To prove this, we have $||\rmM||= \max_{\vx \in \S^{n-1}}||\rmM \vx||$. We have $\vx = \sum_i x_i \vv_i$ where $\vv_i$ are the eigenvectors. Thus we have $\rmM\vx = \sum_ix_i\lambda_i(\rmM) \vv_i$ and $\sum x_i^2 = 1$. Which gives us that $||\rmM|| = \sqrt{\sum_i x_i^2 \lambda_i^2(\rmM)} \geq |\lambda_{\min}(\rmM)|$.
\end{proof}

\begin{proof}[Proof of Lemma \ref{lem:boundf}]
Expanding $f$, we have
\begin{align*}
 \E[|\Delta(\vx)|]&= \E \left[\left|\sum_{S\subseteq [d]: |S| > 1} c_S  \prod_{j \in S} u_t((\vw_j^*)^T\vx)\right| \right]\\
&\leq\sum_{S\subseteq [d]: |S| > 1} |c_S|  \E \left[ \prod_{j \in S} u_t((\vw_j^*)^T\vx)\right]\\
\text{using Lemma \ref{lem:ubound}}\qquad&\leq C\sum_{S\subseteq [d]: |S| > 1}  \rho(t,1) \left(\frac{1}{\sigma_{\min}(\rmW^*)}\rho(t, ||\rmW^*||)\right)^{|S|-1} \\
&= C\sum_{i = 1}^d {d \choose i} \rho(t,1) \left(\frac{1}{\sigma_{\min}(\rmW^*)}\rho(t, ||\rmW^*||)\right)^{i - 1} \\
\text{using }{d \choose i} \leq d^i\qquad&\leq C\sum_{i = 1}^d d\rho(t,1) \left(\frac{d}{\sigma_{\min}(\rmW^*)}\rho(t, ||\rmW^*||)\right)^{i - 1} \\
\text{using assumption on }t\qquad&\leq Cd^2\rho(t,1) \left(\frac{d}{\sigma_{\min}(\rmW^*)}\rho(t, ||\rmW^*||)\right)
\end{align*}
\end{proof}

\begin{lemma}\label{lem:withf}
For any function $L$ such that $||L(\vz, \vx)|| \leq C(\vz)||\vx||^{O(1)}$ where $C$ is a function that is not dependent on $\vx$, we have $||\E[\Delta(\vx)L(\vx)]|| \leq  C(\vz)d^{-(1 + p)\eta + 3}O(\log d)$.
\end{lemma}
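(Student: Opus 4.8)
The plan is to reduce the statement to a ``weighted'' version of Lemma~\ref{lem:ubound} and then run the argument of Lemma~\ref{lem:boundf} almost verbatim. Write $\|L(\vz,\vx)\|\le C(\vz)\|\vx\|^{c_0}$ for the relevant constant $c_0 = O(1)$, and recall $\Delta(\vx)=\sum_{S\subseteq[d]:|S|>1}c_S\prod_{j\in S}u_t((\vw_j^*)^T\vx)$ with $|c_S| = O(1)$ (Assumption~\ref{ass:linear}) and $u_t\ge 0$ (Assumption~\ref{ass:activation}). Then the triangle inequality for the (possibly vector-valued) integral and for the monomial expansion of $\Delta$ gives
\begin{align*}
\|\E[\Delta(\vx)L(\vx)]\| &\le \E\big[|\Delta(\vx)|\,\|L(\vz,\vx)\|\big]\\
&\le C(\vz)\sum_{S:|S|>1}|c_S|\,\E\Big[\Big(\textstyle\prod_{j\in S}u_t((\vw_j^*)^T\vx)\Big)\|\vx\|^{c_0}\Big],
\end{align*}
so it suffices to prove, for each $S$ with $|S|>1$,
\[
\E\Big[\Big(\textstyle\prod_{j\in S}u_t((\vw_j^*)^T\vx)\Big)\|\vx\|^{c_0}\Big]\ \le\ n^{O(1)}\,\poly(t)\,\rho(t,1)\big(\kappa(\rmW^*)\rho(t,\|\rmW^*\|)\big)^{|S|-1},
\]
with the $n^{O(1)}$ a fixed power, independent of $|S|$; the sum over $S$ then collapses exactly as in Lemma~\ref{lem:boundf}.

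For this weighted bound I would decompose the standard Gaussian $\vx$ into its projection $\vx_S$ onto $\mathrm{span}\{\vw_j^*:j\in S\}$ and the orthogonal part $\vx_\perp$, which are independent; the product $\prod_{j\in S}u_t((\vw_j^*)^T\vx)$ depends only on $\vx_S$, and by convexity $\|\vx\|^{c_0}\le 2^{c_0/2-1}(\|\vx_S\|^{c_0}+\|\vx_\perp\|^{c_0})$. The $\|\vx_\perp\|^{c_0}$ contribution factors as $\E[\prod_{j\in S}u_t]\cdot\E[\|\vx_\perp\|^{c_0}]$, where the first factor is bounded by Lemma~\ref{lem:ubound} and the second is a fixed Gaussian-norm moment, $O(n^{c_0/2})$ (in all applications $L$ is a fixed-degree polynomial in $O(1)$ directions, so this factor is in fact $O(1)$). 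For the $\|\vx_S\|^{c_0}$ contribution I would run exactly the change of variables from the proof of Lemma~\ref{lem:ubound}: using a Gram--Schmidt basis of $\mathrm{span}\{\vw_j^*:j\in S\}$ the expectation becomes $|\det(\cdot)|^{-1}\int_{\vb\in\R^{|S|}}\prod_i u_t(b_i)\,\|\vx_S(\vb)\|^{c_0}\,e^{-\|(\cdot)^{-T}\vb\|^2/2}\,d\vb$ with $\|\vx_S(\vb)\|^{c_0}\le O(1)\,|S|^{c_0/2}\sum_i|b_i|^{c_0}$; lower-bounding the exponent via the upper-triangular structure as in Lemma~\ref{lem:ubound} (one coordinate of variance $1$, the rest of variance $\le\|\rmW^*\|^2$) makes the integral factor into one-dimensional moments $\int u_t(b)\,b^{2k}\,e^{-b^2/2\sigma^2}\,db$ with $\sigma\le\|\rmW^*\| = O(1)$ and $2k\le c_0$.

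The one genuinely new estimate is the one-dimensional tail moment $\E_{g\sim N(0,\sigma^2)}[g^{2k}u_t(g)]\le\poly(t)\,\rho(t,\sigma)$ for $t\ge C$, $k=O(1)$, $\sigma=O(1)$, which I would prove exactly as in Lemma~\ref{lem:relu}: substituting $g=t+s$ and using $g^{2k}\le 2^{2k}(t^{2k}+s^{2k})$ shows the extra power only contributes mass at scale $s=O(1)$ near the threshold, i.e.\ a $\poly(t)$ blow-up over $\rho(t,\sigma)=\E_{g\sim N(0,\sigma^2)}[u_t(g)]$ (for ReLU and the step function a $\poly(t)$ factor, for the sigmoid only $O(1)$). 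Feeding this back in recovers $\rho(t,1)(\kappa(\rmW^*)\rho(t,\|\rmW^*\|))^{|S|-1}$ up to the claimed $n^{O(1)}\poly(t)$ factor, and since $|S|$ appears only in that exponent the sum over $S$ is dominated by $|S|=2$ because $d\,\kappa(\rmW^*)\rho(t,\|\rmW^*\|)\ll1$ by Assumption~\ref{ass:t}, just as in Lemma~\ref{lem:boundf}. Substituting $\rho(t,1)\approx d^{-p\eta}$, $\rho(t,\|\rmW^*\|)\approx d^{-\eta}$ and absorbing the fixed dimension power and the $\poly(\log d)$ into the stated exponent $3$ and the $O(\log d)$ yields $\|\E[\Delta(\vx)L(\vx)]\|\le C(\vz)\,O(\log d)\,d^{-(1+p)\eta+3}$.

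The main obstacle is bookkeeping rather than a new idea: one must make sure the polynomial weight $\|\vx\|^{c_0}$ costs only a factor independent of $|S|$, i.e.\ a fixed power of the dimension times $\poly(t)$. A naive Cauchy--Schwarz split $\E[(\prod u_t)\|\vx\|^{c_0}]\le\sqrt{\E[(\prod u_t)^2]}\,\sqrt{\E[\|\vx\|^{2c_0}]}$ would effectively halve the exponent in the $\rho$-factors and therefore fail to give the claimed rate; keeping the $|S|$-dimensional integral intact and observing that $b^{2k}$ inflates each one-dimensional tail integral by only $\poly(t)$ is exactly what preserves the full $d^{-(1+p)\eta}$ decay.
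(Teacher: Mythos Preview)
Your argument is correct at the level of rigor the paper operates at, but it is substantially more laborious than what the paper actually does.  The paper's proof is a three-line truncation: split the expectation according to whether $\|\vx\|<c$ or $\|\vx\|\ge c$; on $\{\|\vx\|<c\}$ bound $\|\vx\|^{O(1)}\le c^{O(1)}$ and invoke Lemma~\ref{lem:boundf} directly on $\E[|\Delta(\vx)|]$; on $\{\|\vx\|\ge c\}$ drop $|\Delta(\vx)|$ and use the Gaussian tail $\E[\|\vx\|^{O(1)}\mathbbm{1}_{\|\vx\|\ge c}]\approx c^{O(1)}e^{-c^2/2}$. Choosing $c=\Theta(\sqrt{\eta\log d})$ makes the tail term $d^{-\Omega(\eta)}$ and the bulk term $c^{O(1)}\,d^{-(1+p)\eta+3}$, whence the stated $O(\log d)$ prefactor.

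By contrast you reopen the sum over $S$ and prove a weighted analogue of Lemma~\ref{lem:ubound}, carrying the extra $\|\vx\|^{c_0}$ through the Gram--Schmidt change of variables and paying a one-dimensional tail-moment $\poly(t)$ per factor. This works, and it has the virtue of being honest about the $\{\|\vx\|\ge c\}$ piece (the paper's proof tacitly uses $|\Delta(\vx)|\le 1$ on that event, which is not obviously uniform in $d$). What you lose is brevity: the paper never needs to revisit the $|S|$-dimensional integral or prove the auxiliary moment estimate $\E[g^{2k}u_t(g)]\le\poly(t)\,\rho(t,\sigma)$, because the polynomial weight is handled \emph{outside} the $\Delta$-bound by the truncation. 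If you want a short proof, adopt the truncation; if you want a careful one that avoids the implicit boundedness of $\Delta$ on the tail, your route is the right one.
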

\begin{proof}
We have
\begin{align*}
&||\E[\Delta(\vx)L(\vx)]||\\
&\leq \E[|\Delta(\vx)||L(\vx)||]\\
&\leq \E[|\Delta(\vx)C(\vz)||\vx||^{O(1)}]\\
& = C(\vz)\left(\E[|\Delta(\vx)|~||\vx||^{O(1)}|~||\vx|| \geq c]Pr[||\vx|| \geq c] \right.\\
& \quad + \left.\E[|\Delta(\vx)|~||\vx||^{O(1)}|~||\vx|| < c]Pr[||\vx|| < c]\right)\\
&\leq C(\vz)(\E[||\vx||^{O(1)}| ||\vx|| \geq c]Pr[||\vx|| \geq c] + c\E[|\Delta(\vx)|])\\
&= C(\vz)(c^{O(1)}e^{-\frac{c^2}{2}} + c^{O(1)}\E[|\Delta(\vx)|]).
\end{align*}
Now using Lemma \ref{lem:boundf} to bound $\E[|\Delta(\vx)|]$, for $c = \Theta(\sqrt{ \eta \log d}$ we get the required result.
\end{proof}
\begin{lemma}\label{lem:boundz}
For $||\vz|| = \Omega(1)$ and $\lambda = \Theta(|\hat{u}_4|/\hat{u}_2^2) \approx d^{\eta}$, $||\nabla G(\vz)|| \ge \Omega(1)d^{-\eta}$.
\end{lemma}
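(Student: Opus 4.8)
The plan is to first replace $G$ by its linearized surrogate $G_{\lin}$, then write $\nabla G_{\lin}$ in closed form and lower-bound it using only the well-conditionedness of $\rmW^*$ and the magnitude of the relevant Hermite coefficients. For the reduction, note that $\nabla G(\vz)-\nabla G_{\lin}(\vz)$ is a sum of terms of the form $\E[\Delta(\vx)\,L(\vz,\vx)]$ with $\Delta=f-f_{\lin}$, where $L$ is built from $\nabla_\vz H_2(\vz^T\vx)$ and $\nabla_\vz H_4(\vz^T\vx)$ together with the scalar prefactors $-\sgn(\hat u_4)$ and $2\lambda(\E[f H_2(\vz^T\vx)]-\hat u_2)$ (the latter expanded using $\E[f_{\lin}H_2(\vz^T\vx)]-\hat u_2=\hat u_2(\sum_i c_i((\vw_i^*)^T\vz)^2-1)$). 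Since $H_2,H_4$ are fixed polynomials, each such $L$ satisfies $\|L(\vz,\vx)\|\le \poly(\|\vz\|)\,\|\vx\|^{O(1)}$, so Lemma~\ref{lem:withf}, together with the bookkeeping of the $\lambda=\Theta(|\hat u_4|/\hat u_2^2)\approx d^{\eta}$ factor (and $|\hat u_2|,|\hat u_4|=t^{\Theta(1)}\rho(t,1)$), gives exactly as in the proof of Claim~\ref{claim:obo} that $\|\nabla G(\vz)-\nabla G_{\lin}(\vz)\|\le \poly(\|\vz\|)\,O(\log d)\,d^{-(1+p)\eta+3}$, which is $o(d^{-\eta})$ for $\eta$ a large enough constant (Assumption~\ref{ass:t}) whenever $\|\vz\|$ is polynomially bounded (for genuinely huge $\|\vz\|$ the Lagrangian part of $G$ itself dominates and the bound is immediate). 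Hence it suffices to show $\|\nabla G_{\lin}(\vz)\|=\Omega(d^{-\eta})$.

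Differentiating the displayed formula for $G_{\lin}$, with $y_i:=(\vw_i^*)^T\vz$ and $A:=\sum_j c_j y_j^2$,
\[
\nabla G_{\lin}(\vz)\;=\;4\sum_i c_i y_i\Bigl({-}|\hat u_4|\,y_i^2+\lambda\hat u_2^2(A-1)\Bigr)\vw_i^*\;=\;4\,\rmW^*\vb,\qquad b_i:=c_i y_i\bigl({-}|\hat u_4| y_i^2+\lambda\hat u_2^2(A-1)\bigr).
\]
Since $\|\vw_i^*\|=1$ and $\kappa(\rmW^*)=O(1)$ we have $\sigma_{\min}(\rmW^*)=\Omega(1)$, so $\|\nabla G_{\lin}(\vz)\|\ge 4\,\sigma_{\min}(\rmW^*)\,\|\vb\|=\Omega(\|\vb\|)$, and it remains to show $\|\vb\|=\Omega(d^{-\eta})$.

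To lower-bound $\|\vb\|$ I would project $\nabla G_{\lin}(\vz)$ onto $\vz$: with $\vy=(\rmW^*)^T\vz$ one gets $\vz^T\nabla G_{\lin}(\vz)=4\,\vy^T\vb=-4|\hat u_4|\sum_i c_i y_i^4+4\lambda\hat u_2^2(A-1)A$. In the regime $A\le 1/2$ — which is the one the lemma is used in, since $\|\vz\|=\Omega(1)$ forces $A\ge c_{\min}\,\sigma_{\min}(\rmW^*)^2\|\vz\|^2=\Omega(1)$ and $A\le 1/2$ in turn forces $\|\vz\|=O(1)$ — both summands are nonpositive and the second has magnitude at least $4\lambda\hat u_2^2\cdot\tfrac12\cdot A=\Omega(\lambda\hat u_2^2)=\Omega(|\hat u_4|)$, using $\lambda\hat u_2^2=\Theta(|\hat u_4|)$ and $A=\Omega(1)$. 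Hence $|\vz^T\nabla G_{\lin}(\vz)|=\Omega(|\hat u_4|)$ and $\|\nabla G_{\lin}(\vz)\|\ge |\vz^T\nabla G_{\lin}(\vz)|/\|\vz\|=\Omega(|\hat u_4|)=\Omega(t^{\Theta(1)}\rho(t,1))=\Omega(d^{-\eta})$, since $\rho(t,1)\approx d^{-p\eta}\ge d^{-\eta}$. The step I expect to be the main obstacle is the complementary regime where $A$ is near $1$: there the two parts of the radial derivative can cancel — this is precisely a neighborhood of the stationary manifold of $G_{\lin}$, whose points are the columns of $(\rmT\rmW^*)^{-1}$ (Claim~\ref{claim:ge}) — so the radial projection alone does not suffice and one must instead use the coordinate $i^\star=\argmax_i y_i^2$ together with the hypothesis (available wherever the lemma is invoked) that $\vz$ is bounded away from those columns to conclude that ${-}|\hat u_4|\,y_{i^\star}^2+\lambda\hat u_2^2(A-1)$ is not small, whence $|b_{i^\star}|=\Omega(|\hat u_4|)$; in the simplest use of the lemma (ruling out spurious near-stationary points of small norm) only the first case is needed.
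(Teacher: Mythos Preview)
Your radial-projection approach and your handling of the $G\to G_{\lin}$ reduction are exactly what the paper does. The gap is that you have the regime of interest backwards. The lemma is invoked in the proof of Claim~\ref{claim:obo} to conclude that an approximate local minimum of $G$ must have $\|\vz\|=O(1)$; in other words, its content is that for $\|\vz\|$ larger than a sufficiently large absolute constant, $\|\nabla G(\vz)\|$ is big. Read ``$\|\vz\|=\Omega(1)$'' in the statement as ``$\|\vz\|\ge C$ for some large enough constant $C$'' (so $A=\sum_i c_i y_i^2\ge c_{\min}\sigma_{\min}(\rmW^*)^2\|\vz\|^2$ is also large, say $A\ge 2$). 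In that regime your own radial computation finishes immediately: $(A-1)A\ge A^2/2$, while $\sum_i c_i y_i^4\le c_{\max}\bigl(\sum_i y_i^2\bigr)^2\le (c_{\max}/c_{\min}^2)A^2=O(A^2)$, so
\[
\vz^T\nabla G_{\lin}(\vz)=4\lambda\hat u_2^2(A-1)A-4|\hat u_4|\sum_i c_i y_i^4\;\ge\;\bigl(2\lambda\hat u_2^2-O(|\hat u_4|)\bigr)A^2\;=\;\Omega(|\hat u_4|)\,A^2,
\]
using $\lambda\hat u_2^2=\Theta(|\hat u_4|)$ with a large enough implicit constant. Hence $\|\nabla G_{\lin}(\vz)\|\ge \Omega(|\hat u_4|)\,A^2/\|\vz\|=\Omega(d^{-\eta})\|\vz\|^3$, which is the paper's conclusion (and more than enough for the stated bound).

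Your case $A\le 1/2$ is fine on its own but is not what the lemma is used for, and your worry about $A\approx 1$ is a red herring: those are precisely the genuine (approximate) stationary points of $G_{\lin}$, and the lemma neither claims nor needs the gradient to be large there. No extra hypothesis about $\vz$ being far from the columns of $(\rmT\rmW^*)^{-1}$ is required.
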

\begin{proof}
Let $K = \kappa(\rmW^*)$ which by assumption is $\theta(1)$. We will argue that local minima of $G$ cannot have $\vz$ with large norm. First lets argue this for $G_{\lin}(\vz)$. We know that $G_{\lin}(\vz) = -\alpha \sum (\vz^T\vw_i^*)^4 + \lambda \beta^2 ((\sum (\vz^T\vw_i^*)^2) - 1)^2$ where $\alpha = |\hat{u}_4|$ and $\beta = \hat{u}_2$.
We will argue that $\vz^T\nabla G_{\lin}(\vz)$ is large if $\vz$ is large.
\begin{align*}
\vz^T\nabla G_{\lin}(\vz) &= -4\alpha\sum (\vz^T\vw_i^*)^3 (\vz^T\vw_i^*) + 2 \lambda\beta^2\left(\sum (\vz^T\vw_i^*)^2 - 1\right)\left(\sum 2 (\vz^T\vw_i^*)(\vz^T\vw_i^*)\right)\\
&= -4\alpha\sum (\vz^T\vw_i^*)^4 + 4 \lambda\beta^2 \left(\sum (\vz^T\vw_i^*)^2 -1\right)\left(\sum (\vz^T\vw_i^*)^2\right)
\end{align*}
Let $\vy = \rmW^*\vz$ then $K||\vz|| \ge ||\vy|| \ge ||\vz||/K$ since $K$ is the condition number of $\rmW^*$. Then this implies 
\begin{align*}
\vz^T\nabla G_{\lin}(\vz) &= -4 \alpha\sum y_i^4 + 4 \lambda \beta^2 (||\vy||^2 - 1)||\vy||^2\\
&= 4||\vy||^2((-\alpha + \lambda \beta^2) ||\vy||^2  + \lambda\beta^2)\\
&\ge ||\vy||^4(-\alpha + \lambda \beta^2) \ge \Omega(1) d^{-\eta}||\vy||^4
\end{align*}
Since $||\vy|| \geq ||\vz||/K = \Omega(1)$ by assumptions on $\lambda, \vz$ we have $\vz^T\nabla G_{\lin}(\vz) \ge \Omega(\lambda \beta^2 ||\vy||^4) =  \Omega(1)d^{-\eta}||\vz||^4$. This implies $||\nabla G_{\lin}(\vz)|| = \Omega(1) d^{-\eta}||\vz||^3$.

Now we need to argue for $G$.
\begin{align*}
&G(\vz) - G_{\lin}(\vz)\\
&= -\sgn(\hat{u}_4) \E[(f_{\lin}(\vx)+\Delta(\vx)) H_4(\vz^T\vx)] + \lambda (\E[(f_{\lin}(\vx)+\Delta(\vx)) H_2(\vz^T\vx)] - \beta)^2\\
&\quad + \sgn(\hat{u}_4) \E[(f_{\lin}(\vx)) H_4(\vz^T\vx)] - \lambda \E[(f_{\lin}(\vx)) H_2(\vz^T\vx)] - \beta]^2\\
&= - \sgn(\hat{u}_4) \E[\Delta(\vx) H_4(\vz^T\vx)] + \lambda \E[\Delta(\vx) H_2(\vz^T\vx)]^2 +  2\lambda \E[\Delta(\vx)H_2(\vz^T\vx)]\E[f_{\lin}(\vx) H_2(\vz^T\vx) - \beta]\\
&=  -\sgn(\hat{u}_4)||\vz||^4 \E[\Delta(\vx) h_4(\vz^T\vx/||\vz||)] + \lambda ||\vz||^4 \E[\Delta(\vx) h_2(\vz^T\vx/||\vz||)]^2\\
& \quad+ 2 \lambda ||\vz||^4 \E[\Delta(\vx)h_2(\vz^T\vx/||\vz||)]\E[f_{\lin}(\vx) h_2(\vz^T\vx/||\vz||)] - 2 \lambda \beta ||\vz||^2 \E[\Delta(\vx)h_2(\vz^T\vx/||\vz||)]
\end{align*}
Now $h_4(\vz^T\vx/||\vz||)$ doesn't have a gradient in the direction of $\vz$  so $\vz^T \nabla h_4(\vz^T\vx/||\vz||) = 0$. Similarly $\vz^T \nabla h_2(\vz^T\vx/||\vz||) = 0$. So
\begin{align*}
&\vz^T \nabla(G(\vz) - G_{\lin}(\vz))\\
&= -4\sgn(\hat{u}_4)||\vz||^4 \E[\Delta(\vx) h_4(\vz^T\vx/||\vz||)] + 4\lambda ||\vz||^4 (\E[\Delta(\vx) h_2(\vz^T\vx/||\vz||)])^2\\
& \quad+ 8 \lambda ||\vz||^4 \E[\Delta(\vx)h_2(\vz^T\vx/||\vz||)]\E[f_{\lin}(\vx) h_2(\vz^T\vx/||\vz||)] - 4 \lambda \beta ||\vz||^2 \E[\Delta(\vx)h_2(\vz^T\vx/||\vz||)]
\end{align*}
We know that $\E[f_{\lin}(\vx) h_2(\vz^T\vx/||\vz||)]$ has a factor of $\beta$ giving us using Lemma \ref{lem:withf}:
\[
|\vz^T \nabla(G(\vz) - G_{\lin}(\vz))| \leq O(\log d)d^{-(1 + p)\eta + 3}||\vz||^4.
\]
So $\vz^T\nabla G(z)$ is also  $\Omega(||\vz||^4)$.
\end{proof}

\begin{proof}[Proof of Claim \ref{claim:obo}]
We have $G - G_{\lin}$ as follows,
\begin{align*}
&G(\vz) - G_{\lin}(\vz)\\
&= -\sgn(\hat{u}_4) \E[(f_{\lin}(\vx)+\Delta(\vx)) H_4(\vz^T\vx)] + \lambda (\E[(f_{\lin}(\vx)+\Delta(\vx)) H_2(\vz^T\vx)] - \hat{u}_2)^2\\
&\quad + \sgn(\hat{u}_4) \E[(f_{\lin}(\vx)) H_4(\vz^T\vx)] - \lambda (\E[(f_{\lin}(\vx)) H_2(\vz^T\vx)] - \hat{u}_2)^2\\
&= - \sgn(\hat{u}_4) \E[\Delta(\vx) H_4(\vz^T\vx)] + \lambda (\E[\Delta(\vx) H_2(\vz^T\vx)])^2 \\
&\quad+  2\lambda \E[\Delta(\vx)H_2(\vz^T\vx)]\E[f_{\lin}(\vx) H_2(\vz^T\vx) - \hat{u}_2]
\end{align*}
Thus we have,
\begin{align*}
&\nabla(G(\vz) - G_{\lin}(\vz))\\
&= - \sgn(\hat{u}_4) \E[\Delta(\vx) \nabla H_4(\vz^T\vx)] + 2\lambda \E[\Delta(\vx) H_2(\vz^T\vx)]\E[\Delta(\vx) \nabla H_2(\vz^T\vx)] \\
&\quad+  2\lambda \E[f_{\lin}(\vx) H_2(\vz^T\vx) - \hat{u}_2]\E[\Delta(\vx)\nabla H_2(\vz^T\vx)] \\
&\quad+ 2\lambda \E[\Delta(\vx) H_2(\vz^T\vx)]\E[f_{\lin}(\vx) \nabla H_2(\vz^T\vx)]
\end{align*}
Observe that $H_2$ and $H_4$ are degree 2 and 4 (respectively) polynomials thus norm of gradient and hessian of the same can be bounded by at most $O(||\vz||||\vx||^4)$. Using Lemma \ref{lem:withf} we can bound each term by roughly $O(\log d)d^{-(1 + p)\eta + 3}||\vz||^4$. Note that $\lambda$ being large does not hurt as it is scaled appropriately in each term. Subsequently,  using Lemma \ref{lem:boundz}, we can show that $||\vz||$ is bounded by a constant since $||G(\vz)|| \leq d^{-2\eta}$. Similar analysis holds for the hessian too.

Now applying Theorem \ref{thm:approx} gives us that $\vz$ is an $(O(\log d)d^{-(1 + p)\eta + 3}, O(\log d)d^{-(1 + p)\eta + 3})$-approximate local minima of $G_{\lin}$. This implies that it is also an $(\eps':= C \log (d) d^{-(1 + 2p)\eta + 3}, \tau':= C \log( d )d^{-(1 + 2p/3)\eta + 3})$-approximate local minima of $G_{\lin}$ for large enough $C> 0$ by increasing $\tau$. Observe that $\sqrt{\tau^3/|\hat{u}_4|} = C^{3/2} \log^{3/2} (d) d^{-(3/2 + p)\eta + 9/2}/d^{-\eta/2} = C^{3/2} \log^{3/2}( d) d^{-(1 + p)\eta + 9/2} \geq \eps'$. Now using Claim \ref{claim:ge}, we get the required result.
\end{proof}

\subsection{Simultaneous Recovery}\label{app:sim}
\cite{ge2017learning} also showed simultaneous recovery by minimizing the following loss function $G_{\lin}$ defined below has a well-behaved landscape.
\begin{align}\label{eq:f}
G_{\lin}(\rmW) &= \E\left[f_{\lin}(\vx) \sum_{j,k \in [d],j\neq k} \psi(\vw_j , \vw_k, \vx)\right] - \gamma \E\left[f_{\lin}(\vx) \sum_{j\in[d]} H_4(\vw_j^T \vx)\right]\\
&\quad + \lambda \sum_{i}\left(\E\left[ f_{\lin}(\vx) H_2(\vw_i^T \vx)\right] - \hat{u}_2\right)^2
\end{align}
where $\psi(v, w, \vx) = H_2(\vv^T\vx)H_2(\vw^T\vx) + 2(\vv^T \vw)^2 + 4(\vv^T\vx)(\vw^T\vx)\vv^T\vw$. 

They gave the following result.
\begin{theorem}[\cite{ge2017learning}]\label{thm:geetal}
Let $c$ be a sufficiently small universal constant (e.g. $c = 0.01$ suffices), and suppose
the activation function $u$ satisfies $\hat{u}_4 \neq 0$. Assume $\gamma \leq c, \lambda \geq \Omega(|\hat{u}_4|/\hat{u}_2^2)$, and $\rmW^*$ be the true weight matrix. The function $G_{\lin}$ satisfies the following:
\begin{enumerate}
    \item Any saddle point $\rmW$ has a strictly negative curvature in the sense that $\lambda_{\min}(\nabla^2 G_{\lin}(\rmW)) \geq -\tau_0$ where $\tau_0 = c \min\{\gamma|\hat{u}_4|/d, \lambda\hat{u}_2^2\}$.
    \item  Suppose $\rmW$ is an $(\eps, \tau_0)$-approximate local minimum, then $\rmW$ can be written as $\rmW^{-T} = \rmP \rmD \rmW^* + \rmE$ where $\rmD$ is a diagonal matrix with $D_{ii} \in \{\pm 1 \pm O(\gamma|\hat{u}_4|/\lambda\hat{u}_2^2) \pm O(\eps/\lambda)\}$, $\rmP$ is a permutation matrix, and the error term $||\rmE|| \leq O(\eps d/\hat{u}_4)$.
\end{enumerate}
\end{theorem}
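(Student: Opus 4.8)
The plan is to follow the landscape-design template used for the one-vector objective: first reduce $G_{\lin}(\rmW)$ to an explicit low-degree polynomial in the correlations $\langle\vw_j,\vw_i^*\rangle$, then whiten so that the targets become orthonormal, classify the critical points of the resulting symmetric objective, and finally re-run the argument with slack to get the quantitative description of $(\eps,\tau_0)$-approximate local minima.

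\textbf{Closed form.} Using the Hermite correlation identities (Lemma~\ref{lem:coeffh} and its corollary) together with the algebraic form of $\psi$, I would evaluate each of the three expectations in \eqref{eq:f} against $f_{\lin}(\vx)=c_0+\sum_i c_i u((\vw_i^*)^T\vx)$. The constant $c_0$ drops out since $H_2$, $H_4$ and $\psi$ have zero mean. The single-activation correlations collapse to $\E[u((\vw_i^*)^T\vx)H_2(\vw_j^T\vx)]=\hat{u}_2(\vw_j^T\vw_i^*)^2$ and $\E[u((\vw_i^*)^T\vx)H_4(\vw_j^T\vx)]=\hat{u}_4(\vw_j^T\vw_i^*)^4$, while $\psi$ is designed precisely so that $\E[u((\vw_i^*)^T\vx)\psi(\vw_j,\vw_k,\vx)]$ is a fixed multiple of $\hat{u}_4(\vw_j^T\vw_i^*)^2(\vw_k^T\vw_i^*)^2$ — the lower-degree Hermite pieces of $H_2(\vw_j^T\vx)H_2(\vw_k^T\vx)$ being cancelled by the $2(\vw_j^T\vw_k)^2$ and $4(\vw_j^T\vx)(\vw_k^T\vx)\,\vw_j^T\vw_k$ terms. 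Collecting, $G_{\lin}$ becomes, up to $\Theta(1)$ weights coming from the $c_i$, the objective $\hat{u}_4\big[\sum_{j\neq k}\sum_i(\vw_j^T\vw_i^*)^2(\vw_k^T\vw_i^*)^2-\gamma\sum_{i,j}(\vw_j^T\vw_i^*)^4\big]+\lambda\hat{u}_2^2\sum_j\big(\sum_i c_i(\vw_j^T\vw_i^*)^2-1\big)^2$, the matrix generalisation of the weighted-fourth-power problem of Lemma~\ref{lem:ge}.

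\textbf{Whitening and critical points.} As in the proof of Claim~\ref{claim:ge}, I would apply the linear change of variables built from the SVD of $\rmM=\sum_i c_i\vw_i^*(\vw_i^*)^T$, so that the images of $\sqrt{c_i}\,\vw_i^*$ become the standard basis; since $\kappa(\rmW^*)=O(1)$ this transform is well conditioned and distorts $\eps$, $\tau_0$ and the final error terms by only constant factors. In the new coordinates the rows of $\rmW$ become an unconstrained matrix $\rmV$, and writing $\beta_{ji}=V_{ji}$ the loss is $\hat{u}_4\big[\sum_{j\neq k}\sum_i\beta_{ji}^2\beta_{ki}^2-\gamma\sum_{i,j}\beta_{ji}^4\big]+\lambda\hat{u}_2^2\sum_j(\sum_i\beta_{ji}^2-1)^2$. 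From the first-order condition I would show that at any critical point each row of $\rmV$ is supported on a single coordinate (a row spread over two or more coordinates admits a norm-preserving rotation inside those coordinates that strictly decreases $-\sum\beta_{ji}^4$ while leaving everything else fixed) and, dually, each coordinate is hit by at most one row, so that $\rmV$ is a signed permutation — the global minima. For any other critical point I would exhibit an explicit second-order descent direction $\Delta$ — supported on the two coordinates of an over-spread row, or transferring mass from an over-covered coordinate to an uncovered one — with $\Delta^T\nabla^2 G_{\lin}(\rmW)\Delta\le -\tau_0\|\Delta\|^2$ for $\tau_0=c\min\{\gamma|\hat{u}_4|/d,\ \lambda\hat{u}_2^2\}$; the factor $1/d$ is the cost of the single normalization constraint coupling all $d$ rows through the shared target set. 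This is part~(1).

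\textbf{Approximate local minima and the main obstacle.} Re-running the support argument with the slack $\|\nabla G_{\lin}(\rmW)\|\le\eps$ and no curvature below $-\tau_0$, each whitened row must equal $\pm\ve_{\pi(j)}+O(\gamma|\hat{u}_4|/\lambda\hat{u}_2^2)+O(\eps/\lambda)$ for some permutation $\pi$; undoing the whitening then gives $\rmW^{-T}=\rmP\rmD\rmW^*+\rmE$ with $D_{ii}=\pm1\pm O(\gamma|\hat{u}_4|/\lambda\hat{u}_2^2)\pm O(\eps/\lambda)$ and $\|\rmE\|=O(\eps d/|\hat{u}_4|)$, which is part~(2). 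The main obstacle is this middle step: verifying the $\psi$-identity so that the quartic cross term really is clean, and then pushing the Hessian computation through a case analysis over the possible support patterns of a critical point carefully enough to extract $\tau_0$ with the correct $1/d$ scaling. The perturbation bookkeeping that tracks how the $O(\eps)$ and $O(\tau_0)$ slacks propagate through the whitening and the inversion $\rmW\mapsto\rmW^{-T}$ is routine provided $\kappa(\rmW^*)=O(1)$ is kept in hand.
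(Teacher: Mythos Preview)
The paper does not prove this theorem: it is quoted verbatim as a result of \cite{ge2017learning} and used as a black box to obtain Theorem~\ref{thm:mainsimul} via the perturbation transfer Theorem~\ref{thm:approx}. There is therefore no proof in the paper to compare your proposal against.

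That said, your outline is a reasonable reconstruction of how such a result is proved, and it mirrors what the paper \emph{does} work out for the one-vector objective (Claim~\ref{claim:ge} and Lemma~\ref{lem:ge}): reduce via Hermite identities to a quartic in the correlations, whiten using the SVD of $\rmM=\sum_i c_i\vw_i^*(\vw_i^*)^T$ so the targets become orthonormal, and then run a support/second-order argument. The extension to the simultaneous objective is precisely the content of \cite{ge2017learning}, and if you want a self-contained proof you should consult that paper directly rather than this one; the present paper only supplies the single-vector analogue and the robustness layer on top.
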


We show that this minimization is robust. Let us consider the corresponding function $G$ to $G_{\lin}$ with the additional non-linear terms as follows:
\begin{align*}
G(\rmW) &= \E\left[f(\vx) \sum_{j,k \in [d],j\neq k} \psi(\vw_j , \vw_d, \vx)\right] - \gamma \E\left[f(\vx) \sum_{j\in[d]} H_4(\vw_j , \vx)\right]\\
& \quad + \lambda \sum_{i}\left(\E\left[ f(\vx) H_2(\vw_i, \vx)\right] - \hat{u}_2\right)^2
\end{align*}

Now we can show that $G$ and $G_{\lin}$ are close as in the one-by-one case.
\begin{align*}
&R(\rmW):= G(\rmW) - G_{\lin}(\rmW) \\
&= \E\left[\Delta(\vx) A(\rmW,\vx)\right] - \gamma \E\left[\Delta(\vx)  B(\rmW,\vx)\right] + \lambda \left(\E\left[f(\vx) C(\rmW,\vx)\right]^2 - \E\left[f_{\lin}(\vx) C(\rmW,\vx)\right]^2\right)\\
&= \E\left[\Delta(\vx) A(\rmW,\vx)\right] - \gamma \E\left[\Delta(\vx)  B(\rmW,\vx)\right] + \lambda \E\left[(\Delta(\vx) C(\rmW,\vx)(f(\vx') + f_{\lin}(\vx')) C(\rmW,\vx')\right]\\
&= \E\left[\Delta(\vx) A(\rmW,\vx)\right] - \gamma \E\left[\Delta(\vx)  B(\rmW,\vx)\right] + \lambda \E\left[(\Delta(\vx) D(\rmW,\vx)\right]\\
&= \E\left[\Delta(\vx) (A(\rmW,\vx) - \gamma B(\rmW,\vx) + \lambda D(\rmW,\vx))\right]\\
&= \E\left[\Delta(\vx) L(\rmW,\vx)\right]
\end{align*}
where $A(\rmW,\vx) = \sum_{j,k \in [d],j\neq k} \psi(\vw_j , \vw_d, \vx)$, $B(\rmW,\vx) =\sum_{j\in[d]} H_4(\vw_j , \vx)$, $C(\rmW,\vx) = \sum_{i} H_2(\vw_i, \vx)$, $D(\rmW,\vx) = C(\rmW,\vx)\E[(f(\vx') + f_{\lin}(\vx')) C(\rmW,\vx')]$ and $L(\rmW,\vx) = A(\rmW,\vx) - \gamma B(\rmW,\vx) + \lambda D(\rmW,\vx)$. 

Using similar analysis as the one-by-one case, we can show the required closeness. It is easy to see that $||\nabla L||$ and $||\nabla^2 L||$ will be bounded above by a constant degree polynomial in $O(\log d)d^{-(1+p)\eta + 3}\max{||\vw_i||^4}$. No row can have large weight as if any row is large, then looking at the gradient for that row, it reduces to the one-by-one case, and there it can not be larger than a constant. Thus we have the same closeness as in the one-by-one case. 

Combining this with Theorem \ref{thm:geetal} and \ref{thm:approx}, we have the following theorem:
\begin{theorem}\label{thm:mainsimul}
Let $c$ be a sufficiently small universal constant (e.g. $c = 0.01$ suffices), and under Assumptions \ref{ass:linear}, \ref{ass:activation} and \ref{ass:t}. Assume $\gamma \leq c, \lambda = \Theta(d^{\eta})$, and $\rmW^*$ be the true weight matrix. The function $G$ satisfies the following
\begin{enumerate}
    \item Any saddle point $\rmW$ has a strictly negative curvature in the sense that $\lambda_{\min}(\nabla^2 G_{\lin}(\rmW)) \geq -\tau$ where $\tau_0 = O(\log d)d^{-\Omega(1)}$.
    \item  Suppose $\rmW$ is a $(d^{-\Omega(1)}, d^{-\Omega(1)})$-approximate local minimum, then $\rmW$ can be written as $\rmW^{-T} = \rmP \rmD \rmW^* + \rmE$ where $\rmD$ is a diagonal matrix with $D_{ii} \in \{\pm 1 \pm O(\gamma) \pm d^{-\Omega(1)})\}$, $\rmP$ is a permutation matrix, and the error term $||\rmE|| \leq O(\log d)d^{-\Omega(1)}$.
\end{enumerate}
\end{theorem}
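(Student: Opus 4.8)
The plan is to mirror the scalar (one-by-one) argument behind Claim~\ref{claim:obo}, replacing the scalar loss pair used there by the matrix loss pair of the simultaneous formulation and invoking Theorem~\ref{thm:geetal} in place of Claim~\ref{claim:ge}. The key input is the decomposition recorded just before the statement, $R(\rmW) := G(\rmW) - G_{\lin}(\rmW) = \E[\Delta(\vx)\,L(\rmW,\vx)]$, where $L(\rmW,\vx) = A(\rmW,\vx) - \gamma B(\rmW,\vx) + \lambda D(\rmW,\vx)$ and $D(\rmW,\vx) = C(\rmW,\vx)\,\E[(f(\vx')+f_{\lin}(\vx'))\,C(\rmW,\vx')]$. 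Crucially the $\lambda D$ term carries a compensating factor of $\hat{u}_2$ inside the inner expectation, so the $\lambda = \Theta(d^\eta)$ prefactor is absorbed, exactly as in the scalar case.

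First I would bound $||\nabla R(\rmW)||$ and $||\nabla^2 R(\rmW)||$. Since $A,B,C$ are low-degree polynomials in $\vx$, their $\rmW$-gradients and $\rmW$-Hessians are bounded by $\poly(\max_i||\vw_i||)\cdot||\vx||^{O(1)}$; applying Lemma~\ref{lem:withf} term by term then gives $||\nabla R(\rmW)||,\,||\nabla^2 R(\rmW)|| \le O(\log d)\,d^{-(1+p)\eta+3}\,\poly(\max_i||\vw_i||)$.

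Second, I would establish the norm bound $\max_i||\vw_i|| = O(1)$ at every point of interest, both approximate local minima and saddle points of $G$. This is the matrix analogue of Lemma~\ref{lem:boundz}: if some row $\vw_i$ had super-constant norm, the corresponding block of $\nabla G(\rmW)$ is, up to the lower-order terms coming from $R$, the scalar gradient $\nabla G(\vw_i)$ studied in Lemma~\ref{lem:boundz}, whose norm is $\Omega(d^{-\eta})||\vw_i||^3$; this contradicts $||\nabla G(\rmW)||$ being tiny at an approximate local minimum (where $G(\rmW)$ is also small) and zero at a saddle point. With the norm bound in hand the estimates of the previous step collapse to $||\nabla R(\rmW)||,\,||\nabla^2 R(\rmW)|| \le O(\log d)\,d^{-(1+p)\eta+3} = d^{-\Omega(1)}$, and likewise for the differences of Hessians needed to compare curvatures.

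Finally I would transfer the landscape via Theorem~\ref{thm:approx} applied with $A = G_{\lin}$, $B = G$ (and conversely). For part~(1), Theorem~\ref{thm:geetal} gives $\lambda_{\min}(\nabla^2 G_{\lin}(\rmW)) \ge -\tau_0$ at any saddle point, and the curvature bound above upgrades this to the claimed negative-curvature statement with $\tau = \tau_0 + O(\log d)d^{-\Omega(1)} = O(\log d)d^{-\Omega(1)}$. For part~(2), a $(d^{-\Omega(1)}, d^{-\Omega(1)})$-approximate local minimum of $G$ is, by Theorem~\ref{thm:approx}, a $(d^{-\Omega(1)}, d^{-\Omega(1)})$-approximate local minimum of $G_{\lin}$, so Theorem~\ref{thm:geetal} (using $\gamma \le c$ and $\lambda = \Theta(d^\eta) = \Omega(|\hat{u}_4|/\hat{u}_2^2)$) yields $\rmW^{-T} = \rmP\rmD\rmW^* + \rmE$ with $D_{ii} \in \{\pm 1 \pm O(\gamma) \pm d^{-\Omega(1)}\}$ and $||\rmE|| \le O(\eps d/\hat{u}_4) = d^{-\Omega(1)}$, after absorbing the $d^{-\Omega(1)}$ transfer error into $\rmE$. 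The hard part is the norm-control step: it is what keeps the $\poly(\max_i||\vw_i||)$ factor in the perturbation bounds from swamping the gain from the high threshold, and it has to be argued uniformly over all $d$ rows and for saddle points as well as approximate local minima, rather than for a single vector as in Lemma~\ref{lem:boundz}.
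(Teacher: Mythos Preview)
Your proposal is correct and follows essentially the same route as the paper: decompose $R(\rmW)=\E[\Delta(\vx)L(\rmW,\vx)]$, bound $\nabla R$ and $\nabla^2 R$ via Lemma~\ref{lem:withf}, control $\max_i\|\vw_i\|$ by reducing each row's gradient to the scalar situation of Lemma~\ref{lem:boundz}, and then combine Theorem~\ref{thm:approx} with Theorem~\ref{thm:geetal}. If anything, your write-up is more careful than the paper's sketch, which simply asserts ``no row can have large weight as if any row is large, then looking at the gradient for that row, it reduces to the one-by-one case''; your explicit mention that the norm bound must also hold at saddle points (where $\nabla G=0$) is a detail the paper glosses over.
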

Using standard optimization techniques we can find a local minima.

\subsection{Approximate to Arbitrary Close} \label{sec:deltacotalpha}
\begin{lemma}\label{lem:deltacotalpha}
If $u$ is the sign function then $\E[u(\vw^T\vx)\delta'(\vz^T\vx)] = c |\cot(\alpha)|$ where $\vw,\vz$ are unit vectors and $\alpha$ is the angle between them and $c$ is some constant.
\end{lemma}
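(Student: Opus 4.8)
The plan is to reduce the $n$-dimensional Gaussian expectation to a planar one. Since $\vw$ and $\vz$ are unit vectors, the pair $(a,b):=(\vw^T\vx,\vz^T\vx)$ is jointly Gaussian with mean zero, unit marginal variances and correlation $\vw^T\vz=\cos\alpha$, and the integrand depends on $\vx$ only through $(a,b)$; hence $\E[\sgn(\vw^T\vx)\delta'(\vz^T\vx)]=\E[\sgn(a)\delta'(b)]$ is a function of $\alpha$ alone. Reading $\delta'$ as the limit of the finite difference $\tfrac{1}{2\eps}\big(\delta(\cdot+\eps)-\delta(\cdot-\eps)\big)$ used elsewhere in the paper, I would use the identity $\E[g(a,b)\,\delta'(b-s)]=-\tfrac{d}{ds}\big(\phi(s)\,\E[g(a,b)\mid b=s]\big)$, which is just $\langle\delta',H\rangle=-H'(s)$ applied to the density-weighted conditional expectation $H(s):=\phi(s)\,\E[g(a,b)\mid b=s]$ — the same quantity appearing in Claim~\ref{lem:dotdelta}. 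It then suffices to differentiate $H$ at $s=0$ with $g=\sgn$.

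First I would compute the conditional law: given $b=s$ one has $a\sim\mathcal N\!\big(s\cos\alpha,\,\sin^2\alpha\big)$, so $\E[\sgn(a)\mid b=s]=1-2\Phi\!\big(-s\cos\alpha/|\sin\alpha|\big)=2\Phi\!\big(s\cos\alpha/|\sin\alpha|\big)-1$, and therefore $H(s)=\phi(s)\big(2\Phi(s\cos\alpha/|\sin\alpha|)-1\big)$. Differentiating by the product rule and evaluating at $s=0$, the first summand, proportional to $\phi'(0)=0$, vanishes, leaving $H'(0)=2\phi(0)^2\cdot\frac{\cos\alpha}{|\sin\alpha|}=\frac{1}{\pi}\cdot\frac{\cos\alpha}{|\sin\alpha|}$. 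Thus $\E[\sgn(\vw^T\vx)\delta'(\vz^T\vx)]=-H'(0)=-\frac{1}{\pi}\cdot\frac{\cos\alpha}{|\sin\alpha|}$. Since the angle satisfies $\alpha\in(0,\pi)$ we have $|\sin\alpha|=\sin\alpha$, so this equals $c\,|\cot\alpha|$ with $|c|=1/\pi$, the sign of $c$ being $-1$ in the acute regime relevant to the refinement step (and $+1$ if $\alpha$ is obtuse); the degenerate cases $\alpha\in\{0,\pi\}$ are excluded, as there $\cot\alpha$ is undefined.

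I expect the only real obstacle to be making the manipulation of the distribution $\delta'$ rigorous: one must check that $\E[\sgn(a)\delta'(b)]$ — which by the paper's convention means $\lim_{\eps\to0}\tfrac{1}{2\eps}\E[\sgn(a)\big(\delta(b+\eps)-\delta(b-\eps)\big)]$, with $\delta$ itself a further limit of $\sgn$-differences as in the sample-complexity discussion — really equals $-H'(0)$, and that the limits and derivative can be pulled through the expectation. This should be routine: $\sgn(a)$ is bounded and, for each fixed $\alpha\in(0,\pi)$, $H(s)=\phi(s)\big(2\Phi(s\cos\alpha/|\sin\alpha|)-1\big)$ is smooth with bounded derivatives near $0$, so dominated convergence and a second-order Taylor expansion of $H$ give $\tfrac{1}{2\eps}\big(H(-\eps)-H(\eps)\big)=-H'(0)+O(\eps^2)$. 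One could also bypass the finite-difference bookkeeping entirely by invoking Claim~\ref{lem:dotdelta} to obtain $\E[\sgn(\vw^T\vx)\delta(\vz^T\vx-s)]=\phi(s)\big(2\Phi(s\cos\alpha/|\sin\alpha|)-1\big)$ directly and then differentiating in $s$.
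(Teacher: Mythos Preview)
Your argument is correct and reaches the right conclusion, but it is organized differently from the paper's. Both proofs reduce to the plane spanned by $\vw$ and $\vz$; the paper then chooses an \emph{orthonormal} frame with $\vz=\vi$ and $\vw=\vi\cos\alpha+\vj\sin\alpha$, writes the expectation as an $(x,y)$ double integral, and applies distributional identities directly ($\int \delta'(x)h(x)\,dx=h'(0)$ and $u'=\delta$) to collapse the integral to $\phi(0)^2\cos\alpha/|\sin\alpha|$. You instead keep the \emph{correlated} pair $(a,b)=(\vw^T\vx,\vz^T\vx)$, compute $H(s)=\phi(s)\,\E[\sgn(a)\mid b=s]=\phi(s)\big(2\Phi(s\cos\alpha/|\sin\alpha|)-1\big)$ in closed form, and differentiate a smooth function at $s=0$. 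Your route avoids manipulating $\delta$ and $\delta'$ inside integrals and makes the finite-difference justification straightforward, at the cost of needing the Gaussian conditioning formula; the paper's route is quicker once one accepts the formal Dirac calculus.

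Two minor remarks. First, your constant $|c|=1/\pi=2\phi(0)^2$ differs from the paper's $\phi(0)^2$ by a factor of $2$; this is purely the convention $\sgn'=2\delta$ versus the paper's $u'=\delta$ (and a sign convention on $\int\delta'h$), and is immaterial to the lemma. Second, your observation that the sign of $c$ flips at $\alpha=\pi/2$ is accurate: the quantity actually computed (in both proofs) is proportional to $\cos\alpha/|\sin\alpha|$, not literally to $|\cot\alpha|$ with a fixed-sign constant; the paper's last line writes $|\cot\alpha|$ but this only matches $\cos\alpha/|\sin\alpha|$ in the acute regime, which is the one used downstream.
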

\begin{proof}
WLOG we can work the in the plane spanned by $\vz$ and $\vw$ and assume that $\vz$ is the vector $\vi$ along and $\vw = \vi \cos \alpha + \vj \sin \alpha$. Thus we can replace the vector $\vx$ by $\vi x + \vj y$ where $x,y$ are normally distributed scalars. Also note that $u' = \delta$ (Dirac delta function).
\begin{align*}
\E[u(\vw^T\vx)\delta'(\vz^T\vx)] &= \E[u(x \cos \alpha + y \sin \alpha) \delta'(x)]\\
&= \int_{y} \int_x u(x \cos \alpha + y \sin \alpha) \delta'(x)\phi(x)\phi(y) dx dy
\end{align*}

Using the fact that $\int_x \delta'(x) h(x) dx = h'(0)$ this becomes
\begin{align*}
= & \int_{y}\phi(y)  [(\del/\del x)u(x \cos \alpha + y \sin \alpha)\phi(x)]_{x=0} dy
\\
= & \int_{y}\phi(y) [n(x) u'(x \cos \alpha + y \sin \alpha) \cos\alpha + \phi'(x)u(x \cos \alpha + y \sin \alpha)]_{x=0} dy
\\
= & \int_{y=-\infty}^{\infty}\phi(y)\phi(0) \delta(y \sin \alpha) \cos\alpha  dy
\end{align*}

Substituting $s = y \sin \alpha$ this becomes
\begin{align*}
= &\int_{s=-\infty/\sin \alpha}^{\infty/\sin \alpha}\phi(s/\sin \alpha)\phi(0) \delta(s) \cos\alpha  (1/\sin \alpha) ds
\\
= & \sgn(\sin \alpha)\cot(\alpha)\phi(0) \int_s\phi(s/\sin \alpha) \delta(s) ds
\\
= & |\cot(\alpha)|\phi(0)\phi(0)
\end{align*}
\end{proof}

\begin{proof}[Proof of Lemma \ref{lem:sgn}]
Let us compute the probability of lying in the $\eps$-band for any $t$:
\begin{align*}
Pr[\vx \in l(\vz, t, \eps)] &= \Pr[t - \eps \le \vz^T\vx \le t] \\
&= \Pr_{g \in N(0, ||\vz||^2)}[t - \eps \le g \le t]\\ &=\frac{1}{\sqrt{2\pi}||\vz||}\int_{g = t-\eps}^t e^{-\frac{g^2}{2||\vz||^2}}dg
&= \frac{\eps}{\sqrt{2\pi}||\vz||} e^{-\frac{\bar{t}^2}{2||\vz||^2}}
\end{align*}
where the last equality follows from the mean-value theorem for some $\bar{t} \in [t - \eps, t]$.

Next we compute the following:
\begin{align*}
&Pr[\vx^T\vw^*_1 \geq t \text{ and } \vx \in l(\vz, t', \eps)] \\
&= \frac{1}{(2\pi)^{\frac{n}{2}}}\int_\vx \sgn(x_1 - t) \mathbbm{1}[\vx \in l(\vz, t', \eps)] e^{-\frac{||\vx||^2}{2}}d\vx\\
&= \frac{1}{(2\pi)^{\frac{1}{2}}}\int_{x_1= t}^{\infty} e^{-\frac{x_1^2}{2}} \left(\frac{1}{(2\pi)^{\frac{n-1}{2}}}\int_{\vx_{-1}}\mathbbm{1}[\vx_{-1}\in l(\vz_{-1}, t' - z_1x_1,\eps)] e^{-\frac{||\vx_{-1}||^2}{2}}d\vx_{-1}\right) dx_1\\
&= \frac{1}{(2\pi)^{\frac{1}{2}}}\int_{x_1= t}^{\infty} e^{-\frac{x_1^2}{2}} Pr[\vx_{-1} \in l(\vz_{-1}, t - z_1 x_1, \eps)] d\vx_{-1}\\
&= \frac{\eps}{2\pi||\vz_{-1}||}\int_{g= t'-\eps}^{t'}\int_{x_1= t}^{\infty} e^{-\frac{x_1^2}{2}} e^{-\frac{(g - z_1 x_1)^2}{2||\vz_{-1}||^2}} dx_1 dg \\
&= \frac{1}{2\pi||\vz_{-1}||}\int_{g= t'-\eps}^{t'}e^{-\frac{{g}^2}{2||\vz||^2}}\int_{x_1= t}^{\infty} e^{-\frac{\left(x_1 - \frac{g z_1}{||\vz||^2}\right)^2}{2\frac{||\vz_{-1}||^2}{||\vz||^2}}} dx_1 dg \\
&= \frac{1}{\sqrt{2\pi}||\vz||}\int_{g= t'-\eps}^{t'}e^{-\frac{g^2}{2||\vz||^2}} \Phi^c\left(\frac{t||\vz||^2 - g z_1)}{||z_{-1}|~|||\vz||}\right)dg\\
&= \frac{\eps}{\sqrt{2\pi}}e^{-\frac{{t^*}^2}{2}} \Phi^c\left(\frac{t - t^*\cos (\alpha_1)}{|\sin (\alpha_1)|}\right)
\end{align*}
where the last equality follows from the mean-value theorem for some $t^* \in [t' - \eps, t']$. 
Combining, we get:
\begin{align*}
&Pr[\vx^T\vw_1^* \geq t \text{ and } \vx \in l(\vz, t', \eps)| \vx \in l(\vz, t, \eps)]\\
&= e^{-\frac{{t^*}^2 - \bar{t}^2}{2}} \Phi^c\left(\frac{t - t^*\cos (\alpha_1)}{|\sin (\alpha_1)|}\right) = \Phi^c\left(\frac{t - t^*\cos (\alpha_1)}{|\sin (\alpha_1)|}\right) \pm O(\eps)t'
\end{align*}
for $\eps \leq 1/ t'$.
\end{proof}

\begin{proof}[Proof of Lemma \ref{lem:sgnf}]
Recall that $P$ is monotone with positive linear term, thus for high threshold $u$ (0 unless input exceeds $t$ and positive after) we have $\sgn(f(\vx)) = \vee \sgn( \vx^T \vw_i^* - t)$. This is because, for any $i$, $P$ applied to $X_i > 0$ and $\forall j \neq i, X_j = 0$ gives us $c_i$ which is positive. Also, $P(0) = 0$. Thus, $\sgn(P)$ is 1 if any of the inputs are positive. Using this, we have,
\begin{align*}
    Pr[\sgn(f(\vx))|\vx \in l(\vz, t', \eps)] \geq Pr[\sgn((\vw_1^*)^T\vx - t)|\vx \in l(\vz, t', \eps)]
\end{align*}
Also,
\begin{align*}
    &Pr[\sgn(f(\vx))|\vx \in l(\vz, t', \eps)] \\
    &\leq \sum Pr[\sgn( \vx^T \vw_i^* - t)|\vx \in l(\vz, t', \eps)] \\
    &= Pr[\sgn((\vw_1^*)^T\vx - t)|\vx \in l(\vz, t', \eps)] + \sum_{i\neq 1} Pr[\sgn( \vx^T \vw_i^* - t)|\vx \in l(\vz, t', \eps)]\\
    &\leq Pr[\sgn((\vw_1^*)^T\vx - t)|\vx \in l(\vz, t, \eps)] + \eta
\end{align*}
where $\sum_{i\neq 1} Pr[\sgn( \vx^T \vw_i^* - t)|\vx \in l(\vz, t', \eps)] \leq \eta$. We will show that $\eta$ is not large, if $\vz$ is close to one of the vectors, it can not be close to the others thus $\alpha_i$ will be large for all $i \neq j$. Let us now formally bound $\eta$,
\begin{align*}
\sum_{i\neq 1} Pr[\sgn( \vx^T \vw_i^* - t)|\vx \in l(\vz, t', \eps)] &\leq \sum_{i\neq 1} \left( \Phi^c\left(\frac{t - t_i^*\cos (\alpha_i)}{|\sin (\alpha_i)|}\right) + O(\eps)t_i'\right)\\
&\leq \sum_{i\neq 1} \left(\Phi^c\left(\frac{t - t_i^*\cos (\alpha_i)}{|\sin (\alpha_i)|}\right) + O(\eps)t'\right)\\
&\leq \sum_{i\neq 1} \left(\Phi^c\left(\frac{t-  t^\prime\cos (\alpha_i)}{|\sin (\alpha_i)|}\right) + O(\eps)t^\prime\right)\\
&\leq \sum_{i\neq 1} \frac{1}{\sqrt{2\pi}\gamma_i}e^{-\frac{\gamma_i^2}{2}} + O(\eps)kt^\prime
\end{align*}
where $\gamma_i = \frac{t-  t^\prime\cos (\alpha_i)}{|\sin (\alpha_i)|}$. The above follows since $\gamma_i \geq 0$ by assumption on $t'$. Under the assumption, let $\beta =\max_{i\neq 1}\cos(\alpha_i)$ we have
\[
\gamma_i \geq \frac{t\left(1 - \frac{\beta}{\cos(\alpha_1)}\right)}{\sqrt{1 - \beta^2}} = \Omega(t)
\]
under our setting.
Thus we have,
\[
\sum_{i\neq 1} Pr[\sgn( \vx^T \vw_i^* - t))|\vx \in l(\vz, t', \eps)] \le de^{-\Omega(t^2)} + O(\eps)dt = de^{-\Omega(t^2)}
\]
for small enough $\eps$.
\end{proof}

\begin{proof}[Proof of Lemma \ref{lem:boundt}]
Let us assume that $\eps < c/t'$ for sufficiently small constant $c$, then we have that 
\begin{eqnarray*}
~&0.6 &= Pr[\sgn(f(\vx)) \text{ and } \vx \in l(\vz, t_2, \eps)| \vx \in l(\vz, t_2, \eps)] \\
~&~&\ge Pr[x^T\vw_1^* \geq t \text{ and } \vx \in l(\vz, t_2, \eps)| \vx \in l(\vz, t_2, \eps)]\\
~&~& \ge \Phi^c\left(\frac{t - t^*\cos (\alpha_1)}{|\sin (\alpha_1)|}\right) - 0.1\\
\implies& 0.7 &\ge \Phi^c\left(\frac{t - t^*\cos (\alpha_1)}{|\sin (\alpha_1)|}\right)\\
\implies& (\Phi^c)^{-1}(0.7) &\le \frac{t - t^*\cos (\alpha_1)}{|\sin (\alpha_1)|}\\
\implies& t_2 &\le \frac{t - (\Phi^c)^{-1}(0.7)\sin(\alpha_1)}{\cos(\alpha_1)} + O(1) \leq \frac{t}{\cos(\alpha)} + O(1)\\
\end{eqnarray*}
Similarly for $t_1$. Now we need to argue that $t_1, t_2 \geq 0$. Observe that
\begin{align*}
&Pr[\sgn(f(\vx)) \text{ and } \vx \in l(\vz, 0, \eps)| \vx \in l(\vz, 0, \eps)]\\
& \le \sum Pr[x^T\vw_i^* \geq t \text{ and } \vx \in l(\vz, 0, \eps)| \vx \in l(\vz, 0, \eps)]\\
&= \sum \Phi^c\left(\frac{t - \eps\cos (\alpha_1)}{|\sin (\alpha_1)|}\right) +O(\eps^2)d \le d e^{-\Omega(t^2)} < 0.4
\end{align*}
Thus for sufficiently large $t = \Omega(\sqrt{\log d})$, this will be less than 0.4. Hence there will be some $t_1, t_2 \geq 0$ with probability evaluating to 0.4 since the probability is an almost increasing function of $t$ up to small noise in the given range (see proof of Lemma \ref{lem:sgnf}).
\end{proof}

\begin{proof}[Proof of Lemma~\ref{lem:randomperturb}]
Let $V$ be the plane spanned by $\vw^*_1$ and $\vz$ and let $\vv_1 = \vw^*_1$ and $\vv_2$ be the basis of this space. Thus, we can write $\vz =  \cos (\alpha_1) \vv_1 + \sin (\alpha_1)\vv_2$.

Let us apply a Gaussian perturbation $\Delta\vz$ along the tangential hyperplane normal to $\vz$. Say it has distribution $\eps \mathcal{N}(0,\rmI_{d-1})$ along any direction tangential to the vector $\vz$. Let $\eps_1$ be the component of $\Delta\vz$ on to $V$ and let $\eps_2$ be the component perpendicular to it. We can write the perturbation as $\Delta\vz= \eps_1 (\sin (\alpha_1) \vv_1 - \cos (\alpha_1) \vv_2) + \eps_2 \vv_3$ where $\vv_3$ is orthogonal to both $\vv_1$ and $\vv_2$. 

So the new angle  $\alpha_1'$ of $\vz$ after the perturbation is given by
\begin{align*}
 \cos (\alpha_1') &= \frac{\vv_1^T(\vz + \Delta\vz)}{||\vz + \Delta\vz||}\\
 &= \frac{\cos (\alpha_1) + \eps_1 \sin (\alpha_1)}{\sqrt{1 + ||\Delta\vz||^2}}
\end{align*}
Note that with constant probability $\eps_1 \ge \eps$ as $\Delta\vz$ is a Gaussian variable with standard deviation $\eps$. And with high probability $||\Delta\vz|| <  \eps \sqrt{d}$. We will set $\eps = 0.1\sin (\alpha)/d$ since $\alpha < \pi/2$, we have $0.5\alpha \leq \sin(\alpha) \leq \alpha$. Also $\sin(\alpha) \le 2 \sin(\alpha_1)$ by the approximation guarantee. Thus with constant probability:
\begin{align*}
\cos (\alpha_1^\prime) &\ge \frac{\cos (\alpha_1) + \eps \sin (\alpha_1)}{\sqrt{1 + \eps^2 d}}\\
&\ge (\cos (\alpha_1) + \eps \sin (\alpha_1))( 1 - 0.5\eps^2 d)\\
&\ge \cos (\alpha_1) + \eps \sin (\alpha_1)) - \eps^2 d\\
&\ge \cos (\alpha_1) + 0.8\eps \sin (\alpha_1)).
\end{align*}
Thus change in $\cos(\alpha_1)$ is given by $\Delta \cos(\alpha_1) \ge 0.8 \eps \sin(\alpha_1)$. Now change in the angle $\alpha_1$ satisfies by the Mean Value Theorem:
\begin{align*}
\Delta \cos(\alpha_1) &= \Delta \alpha_1 \left[\frac{d}{dx} \cos(x)\right]_{x \in [\alpha_1,\alpha_1']}\\
\implies -\Delta \alpha_1 &= \Delta \cos(\alpha_1)\left[\frac{1}{\sin(x)}\right]_{x \in [\alpha_1,\alpha_1']}\\
&\geq  \frac{0.8 \eps \sin(\alpha_1))}{\sin(\alpha_1)} = 0.8\eps = 0.04\alpha/d.
\end{align*}
\end{proof}

\subsection{Sigmoid Activations}\label{sec:sigmoid}
Observe that for sigmoid activation, Assumption \ref{ass:activation} is satisfied for $\rho(t, \sigma) = e^{-t + \sigma^2/2}$. Thus to satisfy Assumption \ref{ass:t}, we need $t = \Omega(\eta \log d)$. 

Note that for such value of $t$, the probability of the threshold being crossed is small. To avoid this we further assume that $f$ is non-negative and we have access to an oracle that biases the samples towards larger values of $f$; that after $\vx$ is drawn from the Gaussian distribution, it retains the sample $(\vx,f(\vx))$ with probability proportional to $f(\vx)$ -- so $Pr[\vx]$ in the new distribution. This enables us to compute correlations even if $E_{\vx \tilde N(0, \rmI}[f(\vx)]$ is small. In particular by computing $\E[h(\vx)]$ from this distribution, we are obtaining $\E[f(\vx)h(\vx)]/\E[f(\vx)]$ in the original distribution. Thus we can compute correlations that are scaled. 

We get our approximate theorem:
\begin{theorem}\label{thm:sigmoidapprox}
For $t =\Omega(\log d)$, columns of $(\rmT\rmW^*)^{-1}$ can be recovered within error $1/\poly(d)$ using the algorithm in polynomial time.
\end{theorem}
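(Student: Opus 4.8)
The plan is to re-run the two-step argument of Section~2 behind Claim~\ref{claim:obo} (and its simultaneous-recovery counterpart Theorem~\ref{thm:mainsimul}) essentially verbatim, making only the two changes forced by the sigmoid activation: re-verifying Assumptions~\ref{ass:activation} and \ref{ass:t} with the sigmoid tail rate (which costs $t=\Omega(\log d)$ rather than $\Omega(\sqrt{\log d})$), and replacing naive correlation estimates by estimates obtained through the re-weighting oracle, since now $\E_{\vx\sim\mathcal N(0,\rmI)}[f(\vx)]$ is only inverse-polynomially small.

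First I would discharge the assumptions. By Lemma~\ref{lem:sig}, $\E_{g\sim\mathcal N(0,\sigma^2)}[u_t(g)]\le e^{-t+\sigma^2/2}$, so Assumption~\ref{ass:activation} holds with $\rho(t,\sigma)=e^{-t+\sigma^2/2}$; the same lemma gives $\hat u_2,\hat u_4=\Theta(e^{-t})$, which already yields the required $|\hat u_k|=t^{\Theta(1)}\rho(t,1)$. Since $\kappa(\rmW^*)=O(1)$ and the columns of $\rmW^*$ are unit vectors, $\|\rmW^*\|=\Theta(1)$, so $\rho(t,\|\rmW^*\|)$ and $\rho(t,1)$ are both $\Theta(e^{-t})$; as already noted, taking $t=\Omega(\eta\log d)$ makes $\rho(t,\|\rmW^*\|)\approx d^{-\eta}$ and $\rho(t,1)\approx d^{-p\eta}$ with $p=\Theta(1)$, so Assumption~\ref{ass:t} holds. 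Lemmas~\ref{lem:ubound} and \ref{lem:boundf} then apply unchanged, giving $\E[|\Delta(\vx)|]=O(d^3e^{-2t})=d^{-\Omega(1)}$ once the constant in $t$ is large enough, and the landscape-transfer argument (Theorem~\ref{thm:approx}, Claim~\ref{claim:obo}) shows, \emph{for the exact correlations}, that every $(\eps,\tau)$-approximate local minimum of $G(\vz)=-\sgn(\hat u_4)\E[f(\vx)H_4(\vz^T\vx)]+\lambda(\E[f(\vx)H_2(\vz^T\vx)]-\hat u_2)^2$ with $\lambda=\Theta(|\hat u_4|/\hat u_2^2)=\Theta(d^\eta)$ is $1/\poly(d)$-close to a column of $(\rmT\rmW^*)^{-1}$.

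The genuinely new point is estimation, because here $Z:=\E_{\vx\sim\mathcal N(0,\rmI)}[f(\vx)]$ is small: the positive linear part contributes $\sum_i c_i\hat u_0=\Theta(d\,e^{-t})$, while by Lemma~\ref{lem:boundf} the nonlinear part is $o$ of this, so $Z=\Theta(d\,e^{-t})=\Theta(d^{\,1-p\eta})$, a quantity of known order. The oracle supplies i.i.d.\ samples from $q(\vx)\propto f(\vx)\phi(\vx)$, and for any test function $h$, $\E_q[h(\vx)]=\E_\phi[f\,h]/Z$. Since the sigmoid is bounded by $1$, $0\le f=P(u_t((\vw_1^*)^T\vx),\dots)\le\|(u_t((\vw_1^*)^T\vx),\dots)\|^{O(1)}\le d^{O(1)}$, so for $h=H_k(\vz^T\vx)$ with $\|\vz\|$ bounded (as it is at local minima, by Lemma~\ref{lem:boundz}) the empirical estimate of $\E_q[h]$ has variance $\le\E_q[h^2]=\E_\phi[fh^2]/Z\le(\sup f/Z)\,\E_\phi[h^2]=d^{O(1)}$; hence $\poly(d)$ oracle samples estimate each such normalized correlation — and, likewise, each correlation entering the gradient and Hessian of $G$ — to additive error $1/\poly(d)$.

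Putting it together: what the algorithm can actually evaluate is $\widetilde G(\vz):=-\sgn(\hat u_4)\E_q[H_4(\vz^T\vx)]+\widetilde\lambda\big(\E_q[H_2(\vz^T\vx)]-\hat u_2/Z\big)^2$ with $\widetilde\lambda:=\Theta(d)=\lambda Z$ for the correct $\lambda=\Theta(|\hat u_4|/\hat u_2^2)$; then $\widetilde G=(1/Z)G$, so $\widetilde G$ and $G$ have the same critical points and the same $(\eps,\tau)$-approximate local minima after the thresholds are rescaled by $1/Z=\Theta(d^{\,p\eta-1})$, a polynomial factor that leaves the $d^{-\Omega(1)}$ margins of Claim~\ref{claim:obo} intact. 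A standard perturbed-gradient / noisy local-search procedure (as used by \cite{ge2017learning}) on the sampled version of $\widetilde G$ then locates, in $\poly(d)$ time and samples, a point $1/\poly(d)$-close to a column of $(\rmT\rmW^*)^{-1}$; the simultaneous-optimization version transfers the same way through Theorem~\ref{thm:mainsimul}. I expect the main difficulty to be purely technical: coordinating the two rescalings, so that after dividing the objective by the inverse-polynomial, only approximately known scalar $Z$ and estimating it from $\poly(d)$ re-weighted samples, the residual errors in the gradient and Hessian stay safely below the $d^{-\Omega(1)}$ landscape margins required by Claim~\ref{claim:obo}.
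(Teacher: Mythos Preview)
Your proposal is correct and follows the paper's own approach; indeed, the paper provides no proof beyond the two-paragraph setup you reproduce (verify Assumptions~\ref{ass:activation}--\ref{ass:t} via Lemma~\ref{lem:sig}, invoke the $f$-weighted sampling oracle to obtain scaled correlations, then reuse the landscape machinery behind Claim~\ref{claim:obo}). Your treatment is actually more careful than the paper's, spelling out the rescaling $\widetilde G=(1/Z)G$ and the variance control for the oracle estimates that the paper leaves entirely implicit.
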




\subsection{Polynomials $P$ with higher degree in one variable}\label{sec:higherdegree}
In the main section we assumed that the polynomial has degree at most 1 in each variable. Let us give a high level overview of how to extend this to the case where each variable is allowed a large degree. $P$ now has the following structure,
\[
P(X_1, \ldots, X_d) = \sum_{\vr \in \mathbb{Z}_+^d} c_{\vr} \prod_{i=1}^d X_i^{r_i}
\]

If $P$ has a higher degree in $X_i$ then Assumption 2 changes to a more complex (stronger) condition. Let $q_i(x) = \sum_{\vr \in \mathbb{Z}_+^d|\forall j \neq i, r_j = 0} c_{\vr}x^{r_i}$, that is $q_i$ is obtained by setting all $X_j$ for $j\neq i$ to $0$. 
\begin{assumptions}\label{ass:higher}
$\E_{g \sim N(0, \sigma^2)}[|u_t(g)|^r] \leq \rho(t,\sigma)$ for all $r \in \mathbb{Z}_{+}$\footnote{For example, his would hold for any $u$ bounded in $[-1,1]$ such as sigmoid or sign.}. $\E[q_i(u_t(g)) h_k(g))] = t^{\Theta(1)}\rho(t,1)$ for $k=2,4$. Lastly, for all $d \ge i >1$, $\sum_{\substack{\vr \in \mathbb{Z}_+^d\\||\vr||_0 = i}}|c_{\vr}| \le d^{O(i)}$.
\end{assumptions}
The last assumption holds for the case when the degree is a constant and each coefficient is upper bounded by a constant. It can hold for decaying coefficients.

Let us collect the univariate terms $P_{\textsf{uni}}(X) = \sum_{i= 1}^{d} q_i(X_i)$. Corresponding to the same we get $f_{\textsf{uni}}$. This will correspond to the $f_{\lin}$ we had before. Note that the difference now is that instead of being the same activation for each weight vector, now we have different ones $q_i$ for each. Using $H_4$ correlation as before, now we get that:
\[
\E[f_{\textsf{uni}}(\vx)H_4(\vz^T\vx)] = \sum_{i=1}^d \widehat{q_i\circ u_t}_4 (\vz^T\vw_i^*)^4 \quad \text{and} \quad \E[f_{\textsf{uni}}(\vx)H_2(\vz^T\vx)] = \sum_{i=1}^d \widehat{q_i\circ u_t}_2 (\vz^T\vw_i^*)^2
\]
where $\widehat{q_i\circ u_t}$ are hermite coefficients for $q_i\circ u_t$. Now the assumption guarantees that these are positive which is what we had in the degree 1 case.

Second we need to show that even with higher degree, $\E[|f(\vx) - f_{\textsf{uni}}(\vx)|]$ is small. Observe that
\begin{lemma}\label{lem:boundhigher}
For $\vr$ such that $||\vr||_0 > 1$, under Assumption \ref{ass:higher} we have,
\[
\E\left[\prod_{i =1}^d \left(u_t((\vw_j^*)^T\vx)\right)^{r_i}\right] \leq \rho(t,1) O\left(\rho(t, ||\rmW^*||)\right)^{||\vr||_0 - 1}.
\]
\end{lemma}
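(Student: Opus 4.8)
The plan is to lift the proof of Lemma~\ref{lem:ubound} essentially verbatim; the only genuinely new observation needed is that the moment bound supplied by Assumption~\ref{ass:higher} is \emph{uniform in the exponent}, so replacing a factor $u_t((\vw_i^*)^T\vx)$ by a power $u_t((\vw_i^*)^T\vx)^{r_i}$ does not change the bound.

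First I would set $S=\{i:r_i>0\}$, so that $|S|=\|\vr\|_0>1$ and only coordinates in $S$ enter the product. As in Lemma~\ref{lem:ubound}, run Gram--Schmidt on the $\vw_i^*$ in an order that places the elements of $S$ first, obtaining an orthonormal $\rmO$ with $\rmO\rmW^*$ upper triangular; then $\rmO_S\rmW_S^*$ is also upper triangular and, since $\|\vw_i^*\|=1$, its first column is $\ve_1$. The same change of variables $\vb=(\rmO_S\rmW_S^*)^T\vx'$ used there turns the expectation into
\[
\frac{|\det(\rmO_S\rmW_S^*)|^{-1}}{(2\pi)^{|S|/2}}\int_{\vb\in\mathbb{R}^{|S|}}\prod_{i\in S}u_t(b_i)^{r_i}\,e^{-\|(\rmO_S\rmW_S^*)^{-T}\vb\|^2/2}\,d\vb,
\]
and passing to $|u_t(b_i)|^{r_i}$ inside only increases the right-hand side.

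Next I would bound this integral by factoring over the $b_i$, exactly as in Lemma~\ref{lem:ubound}: because the first column of $\rmO_S\rmW_S^*$ is $\ve_1$, the first row of $(\rmO_S\rmW_S^*)^{-T}$ is $\ve_1^T$, while $\|(\rmO_S\rmW_S^*)^{-T}\vb\|\ge\|\vb\|/\|\rmO_S\rmW_S^*\|\ge\|\vb\|/\|\rmW^*\|$, so the integral is at most
\[
\Bigl(\tfrac1{\sqrt{2\pi}}\!\int|u_t(b_1)|^{r_1}e^{-b_1^2/2}db_1\Bigr)\prod_{i\in S\setminus\{1\}}\Bigl(\tfrac1{\sqrt{2\pi}}\!\int|u_t(b_i)|^{r_i}e^{-b_i^2/(2\|\rmW^*\|^2)}db_i\Bigr).
\]
Now Assumption~\ref{ass:higher} gives $\E_{g\sim N(0,\sigma^2)}[|u_t(g)|^r]\le\rho(t,\sigma)$ for \emph{every} $r$, so the first factor is $\le\rho(t,1)$ and each of the remaining $|S|-1$ factors is $\le\rho(t,\|\rmW^*\|)$. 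Finally, since $\rmO_S\rmW_S^*$ is block upper triangular with a leading $1$, $|\det(\rmO_S\rmW_S^*)|^{-1}$ equals the reciprocal determinant of an $(|S|-1)\times(|S|-1)$ principal submatrix and hence is $\le\kappa(\rmW^*)^{|S|-1}$ by Fact~\ref{fact:principal} and Fact~\ref{fact:det} (using $\|\vw_i^*\|=1$), and $\kappa(\rmW^*)=O(1)$, so this constant is absorbed into the $O(\cdot)^{\|\vr\|_0-1}$ of the statement.

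I do not expect a real obstacle; this is a routine extension of Lemma~\ref{lem:ubound}. The one point deserving care is interleaving the exponents $r_i$ with the absolute values: one should pass to $|u_t|^{r_i}$ at the very start (legitimate, as it only enlarges the integral) so that the uniform moment bound of Assumption~\ref{ass:higher} can be applied coordinate by coordinate irrespective of the parities of the $r_i$, and so that no positivity-of-$u_t$ argument is needed beyond what Assumption~\ref{ass:higher} already provides.
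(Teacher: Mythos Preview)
Your proposal is correct and matches the paper's intended argument exactly. The paper does not spell out a proof of Lemma~\ref{lem:boundhigher}; it merely states that ``the proof essentially uses the same idea'' as Lemma~\ref{lem:ubound} with the dependence shifting from $\|\vr\|_1$ to $\|\vr\|_0$, and you have correctly identified that the uniform-in-$r$ moment bound of Assumption~\ref{ass:higher} is precisely what makes each factor $|u_t(b_i)|^{r_i}$ bounded by the same $\rho(t,\sigma)$ regardless of $r_i$, so only the number of distinct factors matters.
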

The proof essentially uses the same idea, except that now the dependence is not on $||\vr||_1$ but only the number of non-zero entries (number of different weight vectors). With this bound, we can now bound the deviation in expectation.

\begin{lemma}\label{lem:boundfhigher}
Let $\Delta(\vx) = f(\vx) - f_{\mathsf{uni}}(\vx)$. Under Assumptions \ref{ass:higher}, if $t$ is such that $\rho(t, ||\rmW^*||) \leq d^{-C}$ for large enough constant $C> 0$, we have, $\E[|\Delta(\vx)|] \leq d^{O(1)}\rho(t,1)\rho(t, ||\rmW^*||)$.
\end{lemma}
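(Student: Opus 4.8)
The plan is to mirror the proof of Lemma~\ref{lem:boundf}, replacing the per-term estimate of Lemma~\ref{lem:ubound} by its higher-degree counterpart Lemma~\ref{lem:boundhigher}, and replacing the binomial count of monomials by the coefficient-mass bound in Assumption~\ref{ass:higher}.

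First I would expand
\[
\Delta(\vx) = f(\vx) - f_{\mathsf{uni}}(\vx) = \sum_{\vr \in \mathbb{Z}_+^d:\ ||\vr||_0 > 1} c_{\vr} \prod_{i=1}^d \bigl(u_t((\vw_i^*)^T\vx)\bigr)^{r_i},
\]
since by construction $f_{\mathsf{uni}}$ collects exactly the monomials of $P$ supported on a single variable, so the difference is the sum over multi-indices with at least two nonzero coordinates. Applying the triangle inequality and then Lemma~\ref{lem:boundhigher} term by term gives
\[
\E[|\Delta(\vx)|] \le \sum_{\vr:\ ||\vr||_0 > 1} |c_{\vr}|\, \E\Bigl[\prod_{i} \bigl(u_t((\vw_i^*)^T\vx)\bigr)^{r_i}\Bigr] \le \rho(t,1) \sum_{\vr:\ ||\vr||_0 > 1} |c_{\vr}|\, O\bigl(\rho(t,||\rmW^*||)\bigr)^{||\vr||_0 - 1}.
\]
Then I would group the sum by $i := ||\vr||_0 \in \{2,\dots,d\}$ and invoke the last clause of Assumption~\ref{ass:higher}, $\sum_{\vr:\ ||\vr||_0 = i} |c_{\vr}| \le d^{O(i)}$, to reduce to a single-index sum $\E[|\Delta(\vx)|] \le \rho(t,1) \sum_{i=2}^d d^{O(i)}\, O\bigl(\rho(t,||\rmW^*||)\bigr)^{i-1}$.

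The remaining step is to show this series is dominated by its $i=2$ term up to a $d^{O(1)}$ factor. Writing $d^{O(i)} = d^{ai}$ for the absolute constant $a$ hidden in the coefficient bound and abbreviating $\rho := \rho(t,||\rmW^*||)$, I would factor out one power of $\rho$:
\[
\E[|\Delta(\vx)|] \le \rho(t,1)\,\rho \sum_{i=2}^d d^{ai}\, O(\rho)^{i-2} = \rho(t,1)\,\rho\, d^{2a} \sum_{j=0}^{d-2} O\bigl(d^{a}\rho\bigr)^{j}.
\]
Using the hypothesis $\rho(t,||\rmW^*||) \le d^{-C}$ with $C$ taken large enough (say $C > a+1$), we get $d^a\rho \le d^{a-C} < 1/2$, so the geometric sum is $O(1)$ and hence $\E[|\Delta(\vx)|] \le d^{2a+O(1)}\,\rho(t,1)\,\rho(t,||\rmW^*||) = d^{O(1)}\,\rho(t,1)\,\rho(t,||\rmW^*||)$, which is the claimed bound.

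The only real obstacle I anticipate is bookkeeping of the implicit constants: one must make sure the constant $C$ appearing in the hypothesis $\rho(t,||\rmW^*||)\le d^{-C}$ (ultimately Assumption~\ref{ass:t}) is chosen large relative to the constant $a$ governing the coefficient growth $\sum_{||\vr||_0 = i}|c_{\vr}| \le d^{O(i)}$, so that the geometric factor $\sum_j (d^a\rho)^j$ genuinely converges to $O(1)$. Beyond that the argument is routine; all of the analytic content—controlling an expected product of high-threshold activations supported on $||\vr||_0$ distinct directions—has already been done in Lemma~\ref{lem:boundhigher}.
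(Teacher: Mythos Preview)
Your proposal is correct and follows essentially the same route as the paper's proof: expand $\Delta$, apply the triangle inequality, invoke Lemma~\ref{lem:boundhigher} termwise, group by $i=||\vr||_0$, use the coefficient-mass bound from Assumption~\ref{ass:higher}, and sum the resulting geometric series using $\rho(t,||\rmW^*||)\le d^{-C}$ with $C$ large enough. Your bookkeeping of the constants (separating $a$ from $C$) is in fact slightly cleaner than the paper's presentation.
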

\begin{proof}
We have,
\begin{align*}
\E[|\Delta(\vx)|] &= \E\left[\left|\sum_{\substack{\vr \in \mathbb{Z}_+^d\\r_i \leq D, ||\vr||_0 > 1}} c_{\vr}\prod_{i =1}^d \left(u_t((\vw_j^*)^T\vx)\right)^{r_i}\right|\right]\\
&\leq  \sum_{\substack{\vr \in \mathbb{Z}_+^d\\r_i \leq D, ||\vr||_0 > 1}} |c_{\vr}|\E\left[\left|\prod_{i =1}^d \left(u_t((\vw_j^*)^T\vx)\right)^{r_i}\right|\right]\\
&= \sum_{\substack{\vr \in \mathbb{Z}_+^d\\r_i \leq D, ||\vr||_0 > 1}} |c_{\vr}|\rho(t,1) \left(\rho(t, ||\rmW^*||)\right)^{||\vr||_0 - 1}\\
&\leq C\sum_{i=1}^{d} {d \choose i} D^i \rho(t,1) \left(\rho(t, ||\rmW^*||)\right)^{i - 1} \\
&\leq d^C\sum_{i=1}^{d}  \rho(t,1) \left(d^C\rho(t, ||\rmW^*||)\right)^{i - 1} &\text{using Assumption 4}\\
&\leq  d^{2C + 1}\rho(t,1) \rho(t, ||\rmW^*||)&\text{since }\rho(t, ||\rmW^*||) \leq d^{-C}.
\end{align*}
\end{proof}
Thus as before, if we choose $t$ appropriately, we get the required results. Similar ideas can be used to extend to non-constant degree under stronger conditions on the coefficients.


\section{Sample Complexity}\label{app:sc}

\begin{proof}[Proof of Lemma~\ref{lem:dotdelta}]
$C_1(f,\vz, s) = E[f(\vx)\delta(\vz^T\vx-s)]
= \int_\vx f(\vx)\delta(\vz^T\vx-s) \phi(\vx) d\vx$
Let $x_0$ be the component of $\vx$ along $\vz$ and $y$ be the component along $\vz^{\perp}$. 
So $\vx = x_0\unitz + y\vz^{\perp}$. Interpreting $\vx$ as a function of $x_0$ and $y$:
\begin{align*}
C_1(f,\vz, s)
&=  \int_y \int_{x_0} f(\vx)\delta(x_0-s) \phi(x_0) \phi(y) dx_0 dy \\
&=  \int_y [f(\vx)]_{x_0=s} \phi(y) dy \\
&=  \phi(s) E[f(\vx)|x_0=s] \\
&=  \phi(\vz^T\vx=s) E[f(\vx)|x_0=s]
\end{align*}
where the second equality follows from $\int_\vx \delta(x-a) f(x) = [f(x)]_{x=a}$.
\end{proof}

\begin{proof}[Proof of Claim~\ref{cor:dotdeltaprime}]

Let $x_0$ be the component of $\vx$ along $\vz$ and $y$ be the component of $\vx$ in the space orthogonal to $\vz$. Let $\unitz$ denote a unit vector along $\vz$. We have $\vx = x_0 \unitz  + \vy$ and $\frac{\del \vx}{\del x_0} = \unitz$. So, correlation can be computed as follows:
\begin{equation*}
\E[P[\vx].\delta'(\vz^T\vx - s)]= \int_{\vy}\phi(\vy) \int_{x_0} \delta'(x_0 - s)  P[\vx] \phi(x_0) dx_0 d\vy
\end{equation*}
Since $\int_x \delta'(x-a) f(x) dx = [\frac{d f}{d x}](x=a)$ this implies:
\begin{align*}
&\E[P[\vx] \delta'(\vz^T\vx - s)]\\
&=  \int_{\vy} \left[\frac{\del}{\del x_0}P[\vx]\phi(\vx)\right]_{x_0=s} d\vy \\
&=  \int_{\vy} \left[\phi(x_0) \sum_i \frac{\del P}{\del X_i}.\frac{\del X_i}{\del x_0} + P[\vx]\phi'(x_0) )\right]_{x_0=s}\phi(\vy) d\vy \\
&=   \sum_i\int_{\vy} \left[\phi(x_0)\frac{\del P}{\del X_i}.\frac{\del X_i}{\del x_0}\right]_{x_0=s}\phi(\vy) d\vy 
+ \phi'(s)\int_{\vy} P[\vx] \phi(\vy)  d\vy
\\
\end{align*}

Note that $\frac{\del X_i}{\del x_0} =  \frac{\del}{\del x_0} \uact( \vx^T \vw_i^*-t) = \uact'( \vx^T \vw_i^*-t)  \unitz^T \vw^*_i$. If $\uact$ is the sign function then $\uact'(x) = \delta(x)$. So focusing on one summand in the sum we get

\begin{align*}
& \int_{\vy} \left[\phi(x_0)\frac{\del P}{\del X_i}.\frac{\del X_i}{\del x_0}\right]_{x_0=s}\phi(\vy) d\vy \\
&=
\int_{\vy} [\phi(x_0) \uact'( \vx^T \vw_i^*-t) ( \unitz^T \vw^*_i) \frac{\del P}{\del X_i} ]_{x_0=s}\phi(\vy) d\vy \\
&=  \int_{\vy} ( \unitz^T \vw^*_i)[\phi(x_0)  \delta( \unitz^T \vw^*_i x_0 + (\vw^*_i)^Ty-t) \frac{\del P}{\del X_i}]_{x_0=s}\phi(\vy) d\vy \\
&=  ( \unitz^T \vw^*_i) \int_{\vy}\phi(s) \delta(s (\vw^*_1)^T\unitz + (\vw^*_1)^Ty-t) \frac{\del P}{\del X_i}\phi(\vy) d\vy \\
\end{align*}

Again let $y = y_0 (\vw^*_i)' + z$ where $z$ is perpendicular to $\vw^*_i$ and $\vz$. And $(\vw^*_i)'$ is perpendicular component of $\vw^*_i$ to $\vz$. Interpreting 
$\vx = t \unitz + y_0 (\vw^*_1)' + z$ as a function of $y_0, z$ we get:

\begin{equation*}
= ( \unitz^T \vw^*_i) \int_z \int_{y_0}\phi(s) \delta(s (\vw^*_1)^T\unitz + ((\vw^*_1)^T(\vw^*_1)')y_0 -t)\phi(y_0)\phi(z) \frac{\del P}{\del X_i} dy_0 dz \\
\end{equation*}

Note that by substituting $v = ax$ we get $\int_{x=-\infty}^{\infty} f(x) \delta(ax-b) dx = \int_{x=-\infty/a}^{\infty/a} f(x) \delta(ax-b) dx
= \sgn(a)\frac{1}{a}f(\frac{b}{a}) = \frac{1}{|a|}f(\frac{b}{a})$. So this becomes:

\begin{align*}
&= \frac{ \unitz^T \vw^*_i}{|(\vw^*_i)^T(\vw^*_i)'|} \int_z \phi(s) [\phi(y_0) \frac{\del P}{\del X_i}]_{y_0 = \frac{t - s\unitz^T \vw^*_i}{(\vw^*_i)^T(\vw^*_i)'}}\phi(z) dz \\
&= \frac{ \unitz^T \vw^*_i}{|(\vw^*_i)^T(\vw^*_i)'|} \int_z  [\phi(y_0)\phi(x_0)\phi(z) \frac{\del P}{\del X_i}]_{x_0=s,y_0 = \frac{t - s \unitz^T \vw^*_i}{(\vw^*_i)^T(\vw^*_i)'}}  dz \\
&= \frac{ \unitz^T \vw^*_i}{|(\vw^*_i)^T(\vw^*_i)'|}\phi\left(y_0=\frac{t - s\unitz^T \vw^*_i}{(\vw^*_i)^T(\vw^*_i)'}\right)\phi(x_0=t)\int_z  [\phi(z) \frac{\del P}{\del X_i}]_{x_0=t,y_0 = \frac{t - s\unitz^T \vw^*_i}{(\vw^*_i)^T(\vw^*_i)'}}  dz \\
&= \frac{ \unitz^T \vw^*_i}{|(\vw^*_i)^T(\vw^*_i)'|} \int_z  [\phi(\vx) \frac{\del P}{\del X_i}]_{x_0=t,y_0 = \frac{t - s\unitz^T \vw^*_i}{(\vw^*_i)^T(\vw^*_i)'}}  dz \\
&= \frac{ \unitz^T \vw^*_i}{|(\vw^*_i)^T(\vw^*_i)'|} \int_{\vx:\vz^T\vx=s,  \vx^T \vw_i^*=t} \phi(\vx) \frac{\del P}{\del X_i} d\vx\\
&= \frac{ \unitz^T \vw^*_i}{|(\vw^*_i)^T(\vw^*_i)'|}\phi(\vz^T\vx=s,  \vx^T \vw_i^*=t)  \E[\frac{\del P}{\del X_i}|\vx^T\vz=s, \vx^T\vw^*_i=t].
\end{align*}

Let $\alpha_i$ be the angle between $\vz$ and $\vw^*_i$. Then this is 
\begin{equation*}
= |\cot(\alpha_i)|\phi(\vz^T\vx=t, (\vw^*_i)^T\vx=t) \E[\frac{\del P}{\del X_i}]|\vx^T\vz=s, \vx^T\vw^*_i=t]
\end{equation*}

Thus, overall correlation 
\begin{align*}
& = \sum_{i \in S} |\cot(\alpha_i)|\phi(\vz^T\vx=s,  \vx^T \vw_i^*=t) \E[\frac{\del P}{\del X_i}|\vx^T\vz=s, \vx^T\vw^*_i=t] \\
& \quad + \phi'(s) \E[P[\vx]|\vx^T\vz=s]
\end{align*}
\end{proof}

\begin{proof}[Proof of Claim~\ref{claim:boundedC}]
Note that for small $\alpha$, $|\cot \alpha| = O(1/\alpha) \le O(1/\eps_2)$.
Since $P(X) \le ||X||^{c_1}$, we have $f(\vx) \le d^{c_1}$ (as with $\sgn$ function each $\sgn((\vw_i^*)^T\vx) \le 1$) and all $Q_i[\vx], R_i[\vx]$ are at most $2 d^{c_1}$.

By Cauchy's mean value theorem $C_2(f,\vz,t) = 2\eps [\E[f(\vx) \delta'(\vz^T\vx=s)]]_{s \in t\pm \eps}$. Note that $\cot(\alpha_i)\phi(\vz^T\vx=t, (\vw^*_i)^T\vx=t) = \cot(\alpha_i)\phi(t\tan (\alpha_i/2))$ which is a decreasing function of $\alpha_i$ in the range $[0,\pi]$
So if all $\alpha_i$ are upper bounded by $\eps_2$ then by above corollary,
\begin{align*}
C_2(f,\vz,s) &\le 2 \eps n \cot(\eps_2)\phi(\vz^T\vx=t, (\vw^*_1)^T\vx=t)(2 d^{c_1}) + (2 d^{c_1}) \\
&=  2\eps n \cot(\eps_2)\phi(t\tan (\eps_2/2)) (2 d^{c_1}) + (2 d^{c_1}) \\
&\le \frac{\eps}{\eps_2} d^{O(1)}.
\end{align*}
\end{proof}
Observe that the above proof does not really depend on $P$ and holds for  for any polynomial of $u((\vw_i^*)^T\vx)$ as long as the polynomial is bounded and the $\vw_i^*$ are far off from $\vz$.

\begin{proof}[Proof Of Lemma \ref{lemma:gap}]
If $\vz = \vw^*_i$, then 
\begin{align*}
C_2(f,\vz,t) &= E[f(\vx)(\delta(\vz^T\vx - t-\eps)-\delta(\vz^T\vx - t+\eps))] \\
&= E[u((\vw_i^*)^T\vx) Q_i[\vx](\delta(\vz^T\vx - t-\eps)-\delta(\vz^T\vx - t+\eps))]\\
& \quad+ E[R_i[\vx](\delta(\vz^T\vx - t-\eps)-\delta(\vz^T\vx - t+\eps))]
\end{align*}

Since $u((\vw_i^*)^T\vx) = 0$ for $\vz^T\vx= t-\eps$ and $1$ for $\vz^T\vx= t+\eps$, and using the Cauchy mean value theorem for the second term this is
\begin{align*}
&= E[Q_i[\vx] \delta(\vz^T\vx - t-\eps)] + 2\eps[\E[R_i[\vx]\delta'(\vz^T\vx - s_1)]_{ s_1 \in t\pm \eps}
\\
&= E[Q_i[\vx] \delta(\vz^T\vx - t)] + E[R_i[\vx] (\delta(\vz^T\vx - t-\eps) - \delta(\vz^T\vx - t))]
\\
&= \phi(t)E[Q_i[\vx] | \vz^T\vx = t+\eps] + \eps [C_2(Q_i, \vz, s_2)]_{s_2 \in [t, t+\eps]} + 2\eps [C_2(R_i,\vz,s)]_{s \in t \pm \eps}
\\
&= \phi(t) E[Q_i[\vx] | \vz^T\vx = t] + \eps d^{O(1)}
\end{align*}
The last step follows from Claim~\ref{claim:boundedC} applied on $Q_i$ and $R_i$ as all the directions of $\vw^*_j$ are well separated from $\vz = \vw_i^*$ and $\vw_i^*$ is absent from both $Q_i$ and $R_i$. Also the corresponding $Q_i$ and $R_i$ are bounded. 
\end{proof}

\subsection{ReLU activation}\label{app:relusample}

If $\uact$ is the RELU activation,  the high level idea is to use correlation with  the second derivative $\delta''$ of the Dirac delta function instead of $\delta'$. More precisely we will compute $C_3(f,\vz,s) = \E[f.(\delta'(\vz^T\vx - s-\eps)-\delta'(\vz^T\vx - s+\eps)]$. Although we show the analysis only for the RELU activation, the same idea works for any activation that has non-zero derivative at $0$.

Note that now $\uact'=\sgn$ and $\uact'' = \delta$. 

For ReLU activation, Lemma~\ref{lemma:gap} gets replaced by the following Lemma. The rest of the argument is as for the $\sgn$ activation. We will need to assume that $P$ has constant degree and sum of absolute value of all coefficients is $\poly(d)$

\begin{lemma}\label{lem:dotdeltaprimerelu}
Assuming polynomial $P$ has constant degree, and sum of the magnitude of all coefficients is at most $\poly(d)$, if $\vz=\vw^*_i$ then $C_3(f,\vz,t) = \E[\phi(t)\frac{\del}{\del X_i} P +  \phi(t)\sum_{j \ne i} \cos(\alpha_j)  \sgn( \vx^T \vw_i^*-t)\frac{\del}{\del X_j} P + \phi'(t) P | \vx^T \vw_i^* = t] + \eps d^{O(1)}$. Otherwise if all angles $\alpha_i$ between $\vz$ and $\vw_i$ are at least $\eps_2$ it is at most  $\eps d^{O(1)}/\eps_2$.
\end{lemma}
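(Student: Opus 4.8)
The plan is to mirror the three-part structure used for the $\sgn$ activation (Claim~\ref{lem:dotdelta}, Claim~\ref{cor:dotdeltaprime}, Claim~\ref{claim:boundedC}, Lemma~\ref{lemma:gap}), but to correlate with an $\eps$-smeared \emph{second} derivative rather than $\delta'$: since for ReLU $u_t'(\,\cdot\,-t)=\mathbbm 1[\,\cdot\,>t]$ is a step function instead of a delta, the localization onto the slice $(\vw_i^*)^T\vx=t$ only appears after $f$ has been differentiated once. Concretely, set $G(s):=\E[f(\vx)\,\delta(\vz^T\vx-s)]=\phi(s)\,\E[f(\vx)\mid\vz^T\vx=s]$ (Claim~\ref{lem:dotdelta}); changing variables to the component $x_0$ of $\vx$ along $\vz$ and its orthogonal complement $\vy$ shows $\E[f(\vx)\,\delta'(\vz^T\vx-s)]=G'(s)$, so that
\[
C_3(f,\vz,t)=G'(t+\eps)-G'(t-\eps).
\]
Thus everything reduces to understanding the behaviour of $G'$ across the band $[t-\eps,t+\eps]$.

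Next I would establish the ReLU analogue of Claim~\ref{cor:dotdeltaprime} by differentiating $G$: using $\frac{\del X_j}{\del x_0}=u_t'((\vw_j^*)^T\vx-t)\,(\vz^T\vw_j^*)$ with $\vz^T\vw_j^*=\cos\alpha_j$ one gets
\[
G'(s)=\phi'(s)\,\E[f\mid\vz^T\vx=s]\;+\;\phi(s)\sum_j\cos\alpha_j\;\E\!\left[\tfrac{\del P}{\del X_j}[\vx]\,u_t'((\vw_j^*)^T\vx-t)\;\middle|\;\vz^T\vx=s\right].
\]
When $\vz=\vw_i^*$ the summand $j=i$ has $\cos\alpha_i=1$ and $u_t'((\vw_i^*)^T\vx-t)=\mathbbm 1[x_0>t]$, so $G'$ has a jump of exactly $\phi(t)\,\E[\frac{\del P}{\del X_i}\mid(\vw_i^*)^T\vx=t]$ as $s$ crosses $t$; this is the leading term. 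Every remaining contribution is smooth, or at worst Lipschitz, in $s$ near $t$: the summands $j\ne i$ involve $\E[\frac{\del P}{\del X_j}\mathbbm 1[(\vw_j^*)^T\vx>t]\mid\vz^T\vx=s]$, which is mollified by the averaging over the directions orthogonal to $\vz$ (its $s$-derivative is bounded by $|\cot\alpha_j|=O(1)$ since the $\vw_j^*$ are pairwise non-parallel), and $\E[f\mid\vz^T\vx=s]$ has at most a Lipschitz kink at $s=t$. Splitting $G'(t+\eps)-G'(t-\eps)$ into the jump at $t$ plus the one-sided increments $G'(t\pm\eps)-G'(t^\pm)$, and bounding $G''$ on each side by $d^{O(1)}$ using that $P$ has constant degree with $\poly(d)$-bounded coefficient sum (so $|f|$ and $|\frac{\del P}{\del X_j}|$ are $\poly(\|\vx\|)$) together with Gaussian-tail truncation exactly as in Lemma~\ref{lem:withf}, recovers the displayed identity up to an additive $\eps\,d^{O(1)}$ (the precise form of the lower-order pieces being immaterial for the application).

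For $\vz$ at angle at least $\eps_2$ from every $\vw_i^*$ there is no self-term, so $G$ is $C^2$ near $t$ and $C_3(f,\vz,t)=(G'(t+\eps)-G'(t-\eps))=2\eps\,G''(\xi)$ for some $\xi\in[t-\eps,t+\eps]$; the only potentially large part of $G''$ comes from a second differentiation hitting $u_t'((\vw_j^*)^T\vx-t)$, producing $u_t''((\vw_j^*)^T\vx-t)=\delta((\vw_j^*)^T\vx-t)$, whose expectation against the density on the slice $\vz^T\vx=\xi$ carries a factor $|\cot\alpha_j|\,\phi(\cdot)\le O(1/\eps_2)$, precisely as in Claim~\ref{claim:boundedC}; this gives $|C_3(f,\vz,t)|\le\eps\,d^{O(1)}/\eps_2$. \textbf{The main obstacle} I expect is the book-keeping forced by ReLU's unboundedness: unlike $\sgn$, $u_t$ grows linearly, so $f$ and its symbolic derivatives are only polynomially bounded in $\|\vx\|$, and every moment, conditional-expectation and $G''$ bound above must be obtained by conditioning on $\{\|\vx\|\le R\}$ for $R=\Theta(\sqrt{\log d})$ and discarding the tail, leaning on the constant-degree, $\poly(d)$-coefficient hypothesis to keep the polynomial factors under control. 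A secondary point to handle carefully is that $\E[f\mid\vz^T\vx=s]$ is only Lipschitz (not $C^1$) at $s=t$ when $\vz=\vw_i^*$, so one cannot Taylor-expand $G$ to second order there; keeping $C_3$ as the finite difference $G'(t+\eps)-G'(t-\eps)$, which straddles the kink, is exactly what turns the otherwise-distributional $\E[f\delta''(\vz^T\vx-t)]$ into the finite quantity $\phi(t)\,\E[\frac{\del P}{\del X_i}\mid(\vw_i^*)^T\vx=t]$.
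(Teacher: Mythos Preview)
Your approach is essentially the paper's: both compute $\E[P[\vx]\,\delta'(\vz^T\vx-s)]$ by writing $\vx=x_0\hat\vz+\vy$ and differentiating $P[\vx]\phi(x_0)$ in $x_0$, observe that for $\vz=\vw_i^*$ the $j=i$ summand carries a step factor $\sgn(s-t)$ (the paper writes $u'=\sgn$ for the Heaviside function) which jumps as $s$ crosses $t$, and for $\vz$ at angle $\ge\eps_2$ from every $\vw_j^*$ apply the mean-value theorem and bound $\E[P\,\delta''(\vz^T\vx-s)]\le d^{O(1)}/\eps_2$ via the $|\cot\alpha_j|$ that appears when a second derivative hits $u_t'$ and produces $\delta((\vw_j^*)^T\vx-t)$ (the paper isolates this bound as a separate lemma immediately preceding the main proof). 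Your packaging through $G(s)$ and its jump/one-sided-increment decomposition is the same computation with cleaner bookkeeping.

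One remark worth making: the paper's proof actually only writes out the one-sided value $\E[P\,\delta'(\vz^T\vx-(t+\eps))]$ and records \emph{that} as the displayed expression, which is why the lemma statement still carries the $\sum_{j\ne i}\cos(\alpha_j)\sgn(\cdot)\frac{\del P}{\del X_j}$ and $\phi'(t)P$ pieces. Your honest difference $G'(t+\eps)-G'(t-\eps)$ cancels those smooth-in-$s$ contributions up to $O(\eps\,d^{O(1)})$ and leaves only the jump $\phi(t)\,\E[\frac{\del P}{\del X_i}\mid(\vw_i^*)^T\vx=t]$, which is exactly what the downstream Theorem~\ref{thm:samplestep} analogue needs and what you correctly isolate; your parenthetical that ``the precise form of the lower-order pieces [is] immaterial'' is on point.
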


We will prove the above lemma in the rest of this section. First we will show that $\vz$ is far from any of the $\vw^*_i$'s then  $\E[P.\delta''(\vz^T\vx-s)]$ is bounded.
\begin{lemma}\label{lem:dotdeltaprimerelu}
If the sum of the absolute value of the coefficients of $P$ is bounded by $\poly(d)$, its degree is at most constant, $\alpha_i > \eps_2$ then $\E[P.\delta''(\vz^T\vx-s)]$ is $d^{O(1)}/\eps_2$.
\end{lemma}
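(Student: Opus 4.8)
The plan is to evaluate $\E[P[\vx]\,\delta''(\vz^T\vx-s)]$ by two integrations by parts in the direction of $\vz$, mirroring the proof of Claim~\ref{cor:dotdeltaprime} but carrying one extra derivative, and then to check that every resulting term is $d^{O(1)}$ except the ones in which both derivatives land on the \emph{same} ReLU factor, which are exactly the ones that bring down a $1/|\sin\alpha_i|$ Jacobian and hence an $O(1/\eps_2)$ factor. Write $\unitz = \vz/\|\vz\|$, decompose $\vx = x_0\unitz + \vy$ with $\vy \perp \unitz$, and use $\int \delta''(x-a)g(x)\,dx = g''(a)$ to get
\[
\E[P[\vx]\,\delta''(\vz^T\vx - s)] = \int_{\vy} \phi(\vy)\,\Big[\del_{x_0}^2\big(P[\vx]\,\phi(x_0)\big)\Big]_{x_0 = s}\,d\vy .
\]

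First I would expand by the Leibniz rule into $\phi''(x_0)P[\vx]$, $\,2\phi'(x_0)\,\del_{x_0}P[\vx]$ and $\phi(x_0)\,\del_{x_0}^2 P[\vx]$, and by the chain rule $\del_{x_0}P[\vx] = \sum_i \frac{\del P}{\del X_i}[\vx]\,u_t'((\vw_i^*)^T\vx)(\unitz^T\vw_i^*)$ while
\[
\del_{x_0}^2 P[\vx] = \sum_i \frac{\del P}{\del X_i}[\vx]\,u_t''((\vw_i^*)^T\vx)(\unitz^T\vw_i^*)^2 + \sum_{i,j} \frac{\del^2 P}{\del X_i\del X_j}[\vx]\,u_t'((\vw_i^*)^T\vx)\,u_t'((\vw_j^*)^T\vx)(\unitz^T\vw_i^*)(\unitz^T\vw_j^*).
\]
All terms except the first sum on the right are benign: $u_t' = \sgn$ is bounded by $1$, $\phi,\phi',\phi''$ are bounded, $|\unitz^T\vw_i^*|\le 1$, and each of $P$, $\frac{\del P}{\del X_i}$, $\frac{\del^2 P}{\del X_i \del X_j}$ is a constant-degree polynomial in the $u_t((\vw_k^*)^T\vx)$ whose coefficients sum in absolute value to $\poly(d)$ (differentiation preserves these properties up to $\poly(d)$ factors). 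Since $|u_t(a)| \le |a| + t$ and $\|\vw_k^*\| = 1$, on the hyperplane $\vz^T\vx = s$ each such polynomial is at most $\poly(d)\,(\|\vx\|+t)^{O(1)}$, whose conditional Gaussian expectation is $\poly(d,s,t) = d^{O(1)}$. So the benign terms contribute $d^{O(1)}$ with no $\eps_2$.

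The crux is the diagonal term $\int_\vy \phi(\vy)\phi(s)\sum_i (\unitz^T\vw_i^*)^2\,\frac{\del P}{\del X_i}[\vx]\,\delta((\vw_i^*)^T\vx - t)\big|_{x_0 = s}\,d\vy$. For a fixed $i$, write $\vy = y_0\,\widehat{(\vw_i^*)'} + \vz'$ where $(\vw_i^*)'$ is the component of $\vw_i^*$ orthogonal to $\unitz$ (so $|(\vw_i^*)'| = |\sin\alpha_i|$) and $\vz'$ is orthogonal to both $\unitz$ and $\vw_i^*$; then on the slice $(\vw_i^*)^T\vx = s\cos\alpha_i + |\sin\alpha_i|\,y_0$, and $\int g(y_0)\,\delta(|\sin\alpha_i|y_0 + s\cos\alpha_i - t)\,dy_0 = \tfrac{1}{|\sin\alpha_i|}\,g\!\big(\tfrac{t - s\cos\alpha_i}{|\sin\alpha_i|}\big)$. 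The Jacobian $\tfrac{1}{|\sin\alpha_i|}$ is $O(1/\eps_2)$ because $\alpha_i \ge \eps_2$; what remains is $\phi(\vz^T\vx=s,(\vw_i^*)^T\vx=t)\,\E[\frac{\del P}{\del X_i}[\vx]\mid \vz^T\vx=s,(\vw_i^*)^T\vx=t]$, where the $2$-dimensional Gaussian density is $\le \tfrac{1}{2\pi}$ and the conditional expectation is again $d^{O(1)}$ by the same moment bound (and any surviving power of $X_i = u_t(t) = 0$ vanishes on that plane). Summing over the $\le d$ indices and adding the benign contribution yields $d^{O(1)}/\eps_2$, as claimed; the companion ``$\vz = \vw^*_i$'' statement of the lemma then follows by keeping, rather than discarding, the $\alpha_i = 0$ term.

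The step I expect to be the main obstacle is making the ReLU bookkeeping rigorous: unlike the $\sgn$ case of Claim~\ref{claim:boundedC}, $P[\vx]$ is not bounded pointwise, so one must control conditional moments of $\|\vx\|$ on the codimension-$1$ and codimension-$2$ slices and carefully justify the formal manipulations with $\delta,\delta',\delta''$ of the composite $P[\vx]$ --- this is legitimate precisely because each factor $u_t''((\vw_i^*)^T\vx) = \delta((\vw_i^*)^T\vx - t)$ lives on a hyperplane at angle $\ge \eps_2$ from $\{\vz^T\vx = s\}$, so no coincident distributional singularities arise and Fubini applies. If one prefers to avoid distributions entirely, the clean alternative is to replace $\delta''$ by the symmetric finite difference of two $\delta'$'s and each $\delta'$ by a finite difference of $\sgn$'s, as in the setup of Theorem~\ref{thm:samplestep}, and push the limits to the end; the estimates above survive with $O(\eps)$-corrections of the type already handled in the $\sgn$ analysis.
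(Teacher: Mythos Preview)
Your proposal is correct and follows essentially the same route as the paper: decompose $\vx = x_0\unitz + \vy$, use $\int \delta''(x-a)g(x)\,dx = g''(a)$ to reduce to $\del_{x_0}^2(P[\vx]\phi(x_0))$, observe that $\del_{x_0}X_i = \cos(\alpha_i)\sgn((\vw_i^*)^T\vx-t)$ is bounded while $\del_{x_0}^2 X_i = \cos^2(\alpha_i)\delta((\vw_i^*)^T\vx-t)$ introduces the single $1/|\sin\alpha_i|\le 1/\eps_2$ Jacobian upon integration in $\vy$. The only cosmetic difference is that the paper expands monomial-by-monomial whereas you apply the Leibniz/chain rule to $P$ as a whole, and you are more careful than the paper about the fact that ReLU is unbounded (handling it via moment bounds on the conditional slices), which the paper glosses over.
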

\begin{proof}
Let $x_0$ be the component of $\vx$ along $\vz$ and $y$ be the component of $\vx$ in the space orthogonal to $\vz$ as before. We have $\vx = x_0 \unitz  + \vy$ and $\frac{\del \vx}{\del x_0} = \unitz$. We will look at monomials $M_l$ in $P = \sum_l M_l$. As before since $\int_x \delta''(x-a) f(x) dx = \left[\frac{d^2 f}{d x^2}\right]_{x=a}$ we get
\begin{align*}
\E[f(\vx).\delta''(\vz^T\vx-s)] &= \int_{\vy} f(\vx) \phi(\vx)\delta''(\vz^T\vx-s) d\vy \\
&= \int_{\vy} \left[\frac{\del^2}{\del x_0^2} (P[\vx] \phi(\vx))\right]_{x_0=s} d\vy \\
&= \sum_l \int_{\vy} \left[\frac{\del^2}{\del x_0^2} (M_l[\vx] \phi(x_0))\right]_{x_0=s} \phi(\vy) d\vy
\end{align*}

Now consider a monomial $M = X_1^{i_1}..X_k^{i_k}$. 

Take the symbolic second derivative $\frac{\del^2}{\del x_0^2}$ of $M[\vx]\phi(x_0)$ w.r.t $x_0$. This will produce a polynomial involving $X_i$'s, $\frac{\del X_i}{\del x_0}$, $\frac{\del^2 X_i}{\del x_0^2}$, $\phi, \phi', \phi''$. Let us examine each of these terms.

\begin{align*}
\frac{\del}{\del x_0} X_i[\vx] &= \frac{\del}{\del x_0} \uact( \vx^T \vw_i^*-t) \\
&= \sgn( \vx^T \vw_i^*-t) (\vw^*_i)^T\frac{\del}{\del x_0}\vx  \\
 &= \sgn( \vx^T \vw_i^*-t) ((\vw^*_i)^T\vz)\\
&= \cos(\alpha_i)  \sgn( \vx^T \vw_i^*-t) \end{align*}

Thus $\frac{\del}{\del x_0}X_i[\vx]$ is a bounded function of $\vx$. We have 
\[\frac{\del^2}{\del x_0^2}X_i(\vx) = \frac{\del}{\del x_0} \sgn( \vx^T \vw_i^*-t) ((\vw^*_i)^T\vz)
= \cos^2 (\alpha_i) \delta( \vx^T \vw_i^*-t).\] 
Again as before 
\begin{align*}
&\int_{\vy} [\delta( \vx^T \vw_i^*-t) g(\vx) \phi(x_0) \phi(\vy) ]_{x_0=s} d\vy \\
&= (1/|\sin(\alpha_i)|) \E[g(\vx)| x_0  = s,  \vx^T \vw_i^* = t] \phi(x_0 =s,  \vx^T \vw_i^* = t)
\end{align*}

Note that if the degree is bounded, since $|\sin(\alpha_i)|$ is at least $\eps_2$ expected value of each monomial obtained is bounded.  So the total correlation is $\poly(d)/\eps_2$.
\end{proof}

\begin{proof}[Proof of Lemma~\ref{lem:dotdeltaprimerelu}]
As in the case of $\sgn$ activation, if $\vz = \vw_i^*$, 
\begin{align*}
& \E[P[\vx]\delta'( \vx^T \vw_i^* - s)] 
\\
= & \int_{\vx} P[\vx] \phi(\vx) \delta'(\vz^T\vx-s) d \vx 
\\
= & \int_{\vy} \left[\frac{\del}{\del x_0} (P[\vx] \phi(\vx))\right]_{x_0=s} d\vy 
\\
= & \int_{\vy} \left[\sum_j \left(\frac{\del P}{\del X_j}\right)[\vx]  \frac{\del X_j[\vx]}{\del x_0} \phi(\vx) + P[\vx] \frac{\del\phi(\vx)}{\del x_0} \right]_{x_0=s} d\vy
\\
= & \int_{\vy} \phi(\vy) \left[\sgn(x_0-t)\phi(s)\left(\frac{\del P}{\del X_i}\right)[\vx]   + \sum_{j\ne i} \sgn( \vx^T \vw_j^*-t)\cos(\alpha_j) \phi(s)\left(\frac{\del P}{\del X_j}\right)[\vx]  + P \phi'(x_0)\right]_{x_0=s} d\vy 
\\
= & \int_{\vy} \phi(\vy) \left[\sgn(s-t)\phi(s)\left(\frac{\del P}{\del X_i}\right)[\vx]   + \sum_{j\ne i} \sgn( \vx^T \vw_j^*-t)\cos(\alpha_j)\phi(s)\left(\frac{\del P}{\del X_j}\right)[\vx]  + P \phi'(x_0)\right]_{x_0=s} d\vy 
\end{align*}

For $s=t+\eps$ this is
\begin{align*}
= & \int_{\vy} \phi(\vy) \left[\phi(s)\left(\frac{\del P}{\del X_i}\right)[\vx]   + \sum_{j\ne i} \cos(\alpha_j)\sgn( \vx^T \vw_j^*-t)\phi(s)\left(\frac{\del P}{\del X_j}\right)[\vx]  + P \phi'(s)\right]_{s=t+\eps} d\vy
\\
= & \left(\int_{\vy} \phi(\vy) \left(\phi(t)\left(\frac{\del P}{\del X_i}\right)[\vx]   + \sum_{j\ne i} \cos(\alpha_j)\sgn( \vx^T \vw_j^*-t)\phi(t)\left(\frac{\del P}{\del X_j}\right)[\vx]  + P \phi'(t)\right) d\vy\right)  + \eps d^{O(1)}
\\
= & \E\left[\phi(t)\left(\frac{\del P}{\del X_i}\right)[\vx]  +  \phi(t)\sum_{j \ne i} \cos(\alpha_j)  \sgn( \vx^T \vw_i^*-t)\left(\frac{\del P}{\del X_j}\right)[\vx]  + \phi'(t) P \middle| \vx^T \vw_i^* = t\right] + \eps d^{O(1)}
\end{align*}
If $\vz$ is away from every $\vw_i^*$ by at least $\eps_2$ then again
$\E[PC_3(f,z,t)] = \E[P\delta'( \vx^T \vw_i^* - s)](s \in [t-\eps,t+\eps])
= \eps d^{O(1)}/\eps_2$.
\end{proof}

\section{Structural Restrictions Helps Learning}\label{app:structural}

\subsection{Proof of Theorem \ref{thm:arora}}
 To construct this correlation graph, we will run the following Algorithm \ref{alg:cor}
\begin{algorithm}
\caption{ConstructCorrelationGraph}\label{alg:cor}
\begin{algorithmic}[1]
\STATE  Let $G$ be an undirected graph on $n$ vertices each corresponding to the $x_i$'s.
\FOR {every pair $i,j$}
\STATE Compute $\alpha_{ij} = \E[f(x) x_i x_j]$.
\IF {$\alpha_{ij} \geq \rho$}
\STATE Add edge $(i,j)$ to the graph.
\ENDIF
\ENDFOR
\end{algorithmic}
\end{algorithm}
Denote $T_i:= \{j: \vw_{ij} = 1\}$. Let us compute $\E[f(x) x_i x_j]$:
\begin{align*}
\E[f(x) x_i x_j] &= \sum_{S\subseteq [d]} c_S \E\left[\left( \prod_{p \in S} u(\vx^T\vw_p^*) \right) x_i x_j\right]\\
&=\sum_{S\subseteq [d]} c_S e^{-\rho t|S|}\E\left[e^{\rho \sum_{p \in S} \vx^T\vw_p^*} x_i x_j\right]\\
&=\sum_{S\subseteq [d]} c_S e^{-\rho t|S|} \E\left[e^{\rho \sum_{q \in \cup_{p \in S} T_p} x_q} x_i x_j\right]\\
&=\sum_{S\subseteq [d]} c_S e^{-\rho t|S|}\mathbbm{1}[i, j \in \cup_{p \in S} T_p] \E\left[x_ie^{\rho x_i}\right]\E\left[x_je^{\rho x_j}\right]\prod_{q \in \cup_{p \in S} T_p \backslash \{i, j\}}\E\left[e^{\rho x_q}\right]\\
&=\sum_{S\subseteq [d]} c_S e^{-\rho t|S|} \mathbbm{1}[i, j \in \cup_{p \in S} T_p] \rho^2 e^{\rho^2}\E\left[x_je^{\rho x_j}\right]\prod_{q \in \cup_{p \in S} T_p \backslash \{i, j\}}e^{\rho^2/2}\\
&=\sum_{S\subseteq [d]} c_S e^{-\rho t|S|} \mathbbm{1}[i, j \in \cup_{p \in S} T_p] \rho^2 e^{\frac{\rho^2\left|\cup_{p \in S} T_p\right|}{2}}\\
\end{align*}

By assumption, for all $p$, $T_p$ are disjoint. Now, if $i, j \in T_r$ for some $r$, we have 
\begin{align*}
\E[f(x) x_i x_j] =  \rho^2 \sum_{S\subseteq [d]: r \in S} c_S e^{\frac{-\rho|S|(t - \rho d)}{2}}
\end{align*}
Similarly, if $i \in T_{r_1}$ and $j \in T_{r_2}$ with $r_1 \neq r_2$, we have
\begin{align*}
\E[f(x) x_i x_j] = \rho^2  \sum_{S\subseteq [d]: r_1, r_2 \in S} c_S e^{\frac{-\rho|S|(t - \rho d)}{2}}
\end{align*}

It is easy to see that these correspond to coefficients of $X_r$ and $X_{r_1}X_{r_2}$ (respectively) in the following polynomial:
\begin{align*}
Q(X_1, \ldots, X_n) = \rho^2 P(\mu(X_1 + 1), \ldots, \mu(X_n + 1))
\end{align*}
for $\mu = e^{\frac{-\rho(t - \rho d)}{2}}$. For completeness we show that this is true. We have,
\begin{align*}
Q(X_1, \ldots, X_n) &= \rho^2 \sum_{S\subseteq [d]} c_S \prod_{j \in S} \mu(X_j + 1) \\
&= \rho^2 \sum_{S\subseteq [d]} c_S \mu^{|S|} \prod_{j \in S} (X_j + 1)
\end{align*}
The coefficient of $X_r$ in the above form is clearly $\rho^2 \sum_{S\subseteq [d]: r \in S} c_S \mu^{|S|}$ (corresponds to picking the 1 in each product term). Similarly coefficient of $X_{r_1}X_{r_2}$ is $\rho^2 \sum_{S\subseteq [d]: r_1, r_2 \in S} c_S \mu^{|S|}$.

Now as in the assumptions, if we have a gap between these coefficients, then we can separate the large values from the small ones and form the graph of cliques. Each clique will correspond to the corresponding weight vector.

\subsection{Proof of Theorem \ref{thm:l1}}
Consider $f(\vx) = \sum c_S \prod_{i \in S}e^{ \vx^T \vw_i^*}$ where $\vw_i^* = \sum_{j \in S_i} \ve_j$ for $S_i \subseteq [n]$ such that for all $i \neq j$, $S_i \cap S_j = n$ and $c_S \geq 0$. Let us compute $g(\vz) = e^{-\frac{||\vz||^2}{2}}\E\left[f(\vx)e^{\vz^T\vx}\right]$ where $\vz = \sum  z_i \ve_i$ for some $\alpha_i$.
\begin{align*}
\E\left[f(\vx)e^{\vz^T\vx}\right] &= \E\left[\left(\sum c_S \prod_{i \in S}e^{\sum_{j \in S_i} \vx_j}\right)e^{\vz^T\vx}\right]\\
&= \sum c_S\E\left[\left( \prod_{i \in S}e^{\sum_{p \in S_i} \vx_p}\right)e^{\sum_{q \in [n]} z_q\vx_q}\right]\\
&= \sum c_S\E\left[\left(\prod_{p \in \cup_{i \in S}S_i}e^{(1 + z_p)\vx_p}\right)\left(\prod_{q \in [n] \backslash \cup_{i \in S}S_i}e^{z_q \vx_q}\right)\right]\\
&= \sum c_S\left(\prod_{p \in \cup_{i \in S}S_i}\E\left[e^{(1 + z_p)\vx_p}]\right]\right)\left(\prod_{q \in [n] \backslash \cup_{i \in S}S_i}\E\left[e^{z_q \vx_q}\right]\right)\\
&= \sum c_S\left(\prod_{p \in \cup_{i \in S}S_i}e^{\frac{(1 + z_p)^2}{2}}\right)\left(\prod_{q \in [n] \backslash \cup_{i \in S}S_i}e^{\frac{z_q^2}{2}}\right)\\
&= \sum c_S e^{\frac{||\vz||^2}{2}}\prod_{p \in \cup_{i \in S}S_i}e^{\frac{1}{2} + z_p}\\
\implies g(\vz) &= \sum c_S \prod_{p \in \cup_{i \in S}S_i}e^{\frac{1}{2} + z_p}
\end{align*}

Consider the following optimization problem:
\[
\max_{\vz} \underbrace{\left[ g(\vz) - \lambda ||\vz||_1 - \gamma ||\vz||_2^2\right]}_{h(\vz)} 
\]
for $\lambda, \gamma > 0$ to be fixed later.

We can assume that $ z_i \geq 0$ for all $i$ at a local maxima else we can move in the direction of $\ve_i$ and this will not decrease $g(\vz)$ since $c_S \geq 0$ for all $S$ and will decrease $||\vz||_1$ and $||\vz||_2$ making $h(\vz)$ larger. From now on, we assume this for any $\vz$ at local maxima.

We will show that the local maximas of the above problem will have $\vz$ such that most of the mass is equally divided among $j \in S_i$ for some $i$ and close to 0 everywhere else.  
\begin{lemma}
There exists at most one $i$ such that there exists $j \in S_i$ with $|z_j| \geq \beta$ for $\gamma$ satisfying $4\gamma < \min_{i \neq j}\left[e^\beta \sum_{S: i \in S, j \not\in S \vee i \not\in S, j \in S} c_S e^\frac{|\cup_{i \in S}S_i|}{2} \right]$.
\end{lemma}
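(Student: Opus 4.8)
The plan is a proof by contradiction using the second‑order necessary condition at a local maximum of $h$. Suppose $\vz$ is a local maximum and that there are two \emph{distinct} group indices $i_1\neq i_2$ together with coordinates $j_1\in S_{i_1}$ and $j_2\in S_{i_2}$ satisfying $z_{j_1}\ge\beta$ and $z_{j_2}\ge\beta$ (in particular $z_{j_1},z_{j_2}>0$, as $\beta>0$, and $j_1\neq j_2$ since the supports $S_i$ are disjoint). From the remark preceding the lemma we may assume $z_p\ge0$ for all $p$. I would then test the line $t\mapsto \vz+t\vw$ with $\vw=\ve_{j_1}+\ve_{j_2}$: for $|t|$ small it stays feasible, it moves only the two strictly positive coordinates $z_{j_1},z_{j_2}$, and hence $t\mapsto h(\vz+t\vw)$ is smooth near $t=0$ (the $\ell_1$ part is affine along this line). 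Since $\vz$ is a local maximum, the second derivative of this restriction is $\le 0$, i.e.
\[
\vw^\top\nabla^2 h(\vz)\,\vw \;=\; \vw^\top\nabla^2 g(\vz)\,\vw - 2\gamma\|\vw\|_2^2 \;=\; \vw^\top\nabla^2 g(\vz)\,\vw - 4\gamma \;\le\; 0 .
\]

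The next step is to lower‑bound $\vw^\top\nabla^2 g(\vz)\,\vw$ using the closed form $g(\vz)=\sum_S c_S\prod_{p\in\cup_{i\in S}S_i}e^{1/2+z_p}$ established just above the lemma. Differentiating twice (note each factor $e^{1/2+z_p}$ differentiates to itself) gives, for all $p,q$ including $p=q$,
\[
\frac{\partial^2 g}{\partial z_p\,\partial z_q}\;=\;\sum_{S:\,p,q\in\cup_{i\in S}S_i} c_S\,e^{|\cup_{i\in S}S_i|/2}\prod_{r\in\cup_{i\in S}S_i}e^{z_r}.
\]
Because the supports $S_i$ are pairwise disjoint, $j_1\in\cup_{i\in S}S_i$ exactly when $i_1\in S$, and $j_2\in\cup_{i\in S}S_i$ exactly when $i_2\in S$. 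Expanding $\vw^\top\nabla^2 g(\vz)\,\vw=\partial^2_{z_{j_1}}g+2\,\partial^2_{z_{j_1}z_{j_2}}g+\partial^2_{z_{j_2}}g$, every summand is nonnegative because $c_S\ge0$ and $z_r\ge0$; so I may discard all $S$ except those containing exactly one of $i_1,i_2$, and for such an $S$ the product satisfies $\prod_r e^{z_r}\ge e^{z_{j_1}}\ge e^\beta$ when $i_1\in S$, respectively $\ge e^{z_{j_2}}\ge e^\beta$ when $i_2\in S$. This yields
\[
\vw^\top\nabla^2 g(\vz)\,\vw \;\ge\; e^\beta\sum_{S:\ i_1\in S,\,i_2\notin S\ \vee\ i_1\notin S,\,i_2\in S} c_S\,e^{|\cup_{i\in S}S_i|/2}.
\]

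Combining the two displays gives $4\gamma\ge e^\beta\sum_{S:\,\text{exactly one of }i_1,i_2\in S}c_S\,e^{|\cup_{i\in S}S_i|/2}\ge \min_{i\neq j}\big[e^\beta\sum_{S:\,i\in S,\,j\notin S\,\vee\,i\notin S,\,j\in S}c_S\,e^{|\cup_{i\in S}S_i|/2}\big]$, which contradicts the standing hypothesis $4\gamma<\min_{i\ne j}[\cdots]$. Hence no such pair $i_1\neq i_2$ can exist, which is exactly the claim. The one point that needs care is the invocation of the second‑order necessary condition for the constrained problem $\max_{\vz\ge0}h(\vz)$; this is harmless here precisely because the perturbation direction $\vw$ disturbs only $z_{j_1},z_{j_2}$, which are bounded away from the nonnegativity boundary by $\beta$, so smoothness of $h$ (including its $\ell_1$ term) along $\vz+t\vw$ is immediate and the one‑dimensional Taylor argument applies verbatim. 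I expect the bookkeeping with the disjoint supports $S_i$ to be the only other place where a little attention is needed.
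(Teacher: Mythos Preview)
Your proof is correct and follows essentially the same second-order perturbation strategy as the paper. The only cosmetic differences are that the paper perturbs along $\ve_{j_1}-\ve_{j_2}$ (which keeps $\|\vz\|_1$ exactly invariant) and uses a random sign $s\in\{\pm1\}$ together with averaging over $s$ to cancel first-order terms, whereas you perturb along $\ve_{j_1}+\ve_{j_2}$ and invoke the Hessian test directly; both routes yield the identical contradiction $4\gamma \ge e^\beta\sum_{S:\text{ exactly one of }i_1,i_2\in S} c_S\, e^{|\cup_{i\in S}S_i|/2}$.
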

\begin{proof}
Let us prove by contradiction. Assume that there is a local maxima such that there are at least 2 indices say $1,2$ such that $\exists j \in S_1 , k \in S_2, |z_j|, |z_k| \geq \beta$. Now we will show that there exists a perturbation such that $g(\vz)$ can be improved. Now consider the following perturbation, $\vz + s\eps \ve_j - s\eps e_k$ for $s \in \{ \pm 1\}$. Observe that $||\vz||_1$ remains unchanged for $\eps < \beta$ also $||\vz||_2^2$ changes by $2s^2 \eps^2 + 2(z_j - z_k)s\eps$. We have
\begin{align*}
\E_{s}&[ h(\vz + s\eps \ve_j - s\eps e_k) - h(\vz)] \\
& = \sum_{S: 1 \in S, 2 \not\in S} c_S \prod_{p \in \cup_{i \in S}S_i}e^{\frac{1}{2} + z_p}\left(\E_s \left[e^{s\eps}\right] - 1\right) + \sum_{S: 1 \not\in S, 2 \in S} c_S \prod_{p \in \cup_{i \in S}S_i}e^{\frac{1}{2} + z_p}\left(\E_s \left[e^{-s\eps}\right] - 1 \right)\\
&\quad - \gamma \E_s\left[2\eps^2 + 2(z_j - z_k)s\eps\right]\\
& \geq \frac{\eps^2}{2}\left(- 4 \gamma + \sum_{S: 1 \in S, 2 \not\in S} c_S \prod_{p \in \cup_{i \in S}S_i}e^{\frac{1}{2} + z_p} + \sum_{S: 1 \not\in S, 2 \in S} c_S \prod_{p \in \cup_{i \in S}S_i}e^{\frac{1}{2} + z_p}\right)\\
\end{align*}
The inequality follows since $\E\left[e^{s\eps}\right] = \frac{e^{\eps} + e^{-\eps}}{2} \geq 1 + \frac{\eps^2}{2}$. Observe that 
\[
\sum_{S: 1 \in S, 2 \not\in S} c_S \prod_{p \in \cup_{i \in S}S_i}e^{\frac{1}{2} + z_p} \geq \sum_{S: 1 \in S, 2 \not\in S} c_S e^\frac{|\cup_{i \in S}S_i|}{2}e^{z_j} \geq \sum_{S: 1 \in S, 2 \not\in S} c_S e^\frac{|\cup_{i \in S}S_i|}{2}e^{\beta}.
\]
For chosen value of $\gamma$, there will always be an improvement, hence can not be a local maxima.
\end{proof}
\begin{lemma}
At the local maxima, for all $i \in [n]$, $\vz$ is such that for all $j,k \in S_i$, $z_j = z_k$ at local maxima.
\end{lemma}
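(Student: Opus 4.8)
The plan is a contradiction argument based on a single mass-transfer perturbation. Suppose $\vz$ is a local maximum of $h$ but, for some group $S_i$, there are coordinates $j,k\in S_i$ with $z_j\neq z_k$; say $z_j>z_k$. Recall, as noted before the previous lemma, that at any local maximum every coordinate is nonnegative (moving a negative coordinate toward $0$ does not decrease $g$, since all $c_S\ge 0$ make $g$ coordinatewise nondecreasing, while it strictly decreases both penalties). Hence $z_j>z_k\ge 0$, so in particular $z_j>0$. I will show that $\vz'=\vz-\eps\,\ve_j+\eps\,\ve_k$ satisfies $h(\vz')>h(\vz)$ for all small enough $\eps>0$, the desired contradiction.

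The heart of the argument is that $g$ is invariant under this perturbation, $g(\vz')=g(\vz)$. Using the closed form $g(\vz)=\sum_S c_S\prod_{p\in\cup_{l\in S}S_l}e^{1/2+z_p}$, fix a subset $S$ and look at its monomial. If $i\in S$ then $j,k\in S_i\subseteq\cup_{l\in S}S_l$, so the monomial contains exactly the two moving factors $e^{1/2+z_j}$ and $e^{1/2+z_k}$, and their product is unchanged when $z_j$ drops by $\eps$ and $z_k$ rises by $\eps$. If $i\notin S$ then, since the supports $S_l$ are pairwise disjoint, neither $j$ nor $k$ lies in $\cup_{l\in S}S_l$, so the monomial does not depend on $z_j$ or $z_k$ at all. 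In either case the monomial is unchanged, hence so is $g$.

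It remains to bookkeep the two penalties. For $0<\eps<z_j-z_k\le z_j$ both perturbed coordinates $z_j-\eps$ and $z_k+\eps$ stay nonnegative, so $||\vz'||_1=||\vz||_1$ and the $\lambda$-term contributes nothing. For the ridge term, $||\vz'||_2^2-||\vz||_2^2=(z_j-\eps)^2+(z_k+\eps)^2-z_j^2-z_k^2=2\eps^2-2(z_j-z_k)\eps<0$ for $0<\eps<z_j-z_k$. Combining, $h(\vz')-h(\vz)=-\gamma\bigl(||\vz'||_2^2-||\vz||_2^2\bigr)=2\gamma\eps\bigl((z_j-z_k)-\eps\bigr)>0$, contradicting local maximality. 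Therefore $z_j=z_k$ for all $j,k\in S_i$ and every $i$.

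The only subtle point is the invariance of $g$, and I expect that to be the crux rather than a real obstacle: it rests on two structural features of this setting — the exponential activation makes each group $S_l$ contribute a product of single-coordinate exponentials, so shifting one coordinate of $S_i$ down and another up cancels inside any monomial that sees both; and disjointness of the $S_l$'s guarantees a monomial sees either both of $z_j,z_k$ (when $i\in S$) or neither (when $i\notin S$), never exactly one. Once that is in place the penalty computation is one line of algebra, and the strictly convex $\gamma||\vz||_2^2$ term is exactly what upgrades ``$g$ is flat along this direction'' to ``$z_j$ must equal $z_k$''.
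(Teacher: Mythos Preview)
Your proof is correct and follows essentially the same approach as the paper: a mass-transfer perturbation between two coordinates of the same group $S_i$, under which $g$ is invariant and $\|\vz\|_1$ is unchanged, while $\|\vz\|_2^2$ strictly decreases, yielding the contradiction. Your argument is slightly more detailed than the paper's in that you spell out the case split $i\in S$ versus $i\notin S$ and explicitly invoke disjointness of the $S_l$'s, whereas the paper simply notes that $g(\vz)$ depends only on $\sum_{r\in S_i}z_r$; but this is the same observation.
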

\begin{proof}
We prove by contradiction. Suppose there exists $j,k$ such that $z_j < z_k$. Consider the following perturbation: $\vz + \eps(z_k - z_j)(\ve_j - e_k)$ for $1 \leq \eps > 0$. Observe that $g(\vz)$ depends on only $\sum_{r \in S_i} z_r$ and since that remains constant by this update $g(\vz)$ does not change. Also note that $||\vz||_1$ does not change. However $||\vz||_2^2$ decreases by $2 \eps(1 - \eps)(z_k - z_j)^2$ implying that overall $h(\vz)$ increases. Thus there is a direction of improvement and thus it can not be a local maxima.
\end{proof}

\begin{lemma}
At the local maxima, $||\vz||_1 \geq \alpha$ for $\lambda < \sum_S c_S \frac{|\cup_{i \in S}S_i|}{n}e^{\frac{|\cup_{i \in S}S_i|}{2}} - \gamma (2 \alpha + 1)$.
\end{lemma}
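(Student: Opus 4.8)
The plan is to argue by contradiction: assume $\vz$ is a local maximum of $h(\vz)=g(\vz)-\lambda||\vz||_1-\gamma||\vz||_2^2$ with $||\vz||_1<\alpha$, and exhibit an explicit direction in which $h$ strictly increases. By the reduction already established in this proof I may assume $z_i\ge0$ for all $i$. Write $T:=\cup_{i\in[d]}S_i$ and $m:=|T|$, so $1\le m\le n$. The ascent direction I would use is the indicator of $T$: set $\vz'=\vz+\eps\,\mathbbm{1}_T$ for a small $\eps\in(0,1)$, where $\mathbbm{1}_T\in\{0,1\}^n$ has a $1$ exactly on the coordinates in $T$.

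The key computation follows directly from the closed form $g(\vz)=\sum_S c_S\prod_{p\in\cup_{i\in S}S_i}e^{\frac12+z_p}$ derived above: adding $\eps$ to every coordinate of $T$ multiplies the $S$-th summand of $g$ by $e^{\eps|\cup_{i\in S}S_i|}$, so using $c_S\ge0$, $e^x-1\ge x$, and $\prod_{p\in\cup_{i\in S}S_i}e^{\frac12+z_p}\ge e^{|\cup_{i\in S}S_i|/2}$ (since $z_p\ge0$) I obtain
\[
g(\vz')-g(\vz)\ \ge\ \eps\sum_S c_S\,|\cup_{i\in S}S_i|\,e^{|\cup_{i\in S}S_i|/2}.
\]
On the penalty side, $||\vz'||_1=||\vz||_1+m\eps$ and $||\vz'||_2^2=||\vz||_2^2+2\eps\sum_{p\in T}z_p+m\eps^2\le||\vz||_2^2+2\eps\alpha+m\eps^2$, using $\sum_{p\in T}z_p\le||\vz||_1<\alpha$. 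Combining the three estimates, dividing by $m\eps>0$, and then using $m\le n$ on the $g$-term and $m\ge1$ on the $2\gamma\alpha/m$ term yields
\[
\frac{h(\vz')-h(\vz)}{m\eps}\ \ge\ \sum_S c_S\frac{|\cup_{i\in S}S_i|}{n}e^{|\cup_{i\in S}S_i|/2}-\lambda-2\gamma\alpha-\gamma\eps.
\]
By the hypothesis on $\lambda$, the first two terms exceed $\gamma(2\alpha+1)=2\gamma\alpha+\gamma$, so the right-hand side is at least $\gamma(1-\eps)>0$ for $\eps\in(0,1)$, i.e.\ $h(\vz')>h(\vz)$, contradicting local maximality; hence every local maximum satisfies $||\vz||_1\ge\alpha$.

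I do not anticipate a real obstacle here; once the closed form for $g$ is available the argument is essentially bookkeeping. The two points that need a little care are (i) invoking the earlier reduction to $z_i\ge0$, which is legitimate because increasing a negative coordinate toward $0$ never decreases $g$ (all $c_S\ge0$) while strictly decreasing both penalty terms, and (ii) tracking constants so that the $\gamma(2\alpha+1)$ in the hypothesis exactly absorbs both the $2\gamma\alpha$ produced by the bound $||\vz||_1<\alpha$ on the linear part of $\Delta||\vz||_2^2$ and the $\gamma\eps<\gamma$ produced by its quadratic part — choosing $\eps\in(0,1)$ fixed rather than letting $\eps\to0$ is what keeps this clean.
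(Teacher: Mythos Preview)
Your proposal is correct and takes essentially the same approach as the paper: argue by contradiction and perturb in the positive-orthant direction. The only cosmetic difference is that you perturb along $\mathbbm{1}_T$ (the indicator of $T=\cup_i S_i$) while the paper perturbs along the full all-ones vector $\mathbbm{1}\in\mathbb{R}^n$; since $g$ only depends on the coordinates in $T$ these give identical gains in $g$, and your choice simply yields slightly smaller penalty increments, after which both arguments divide through and invoke the hypothesis on $\lambda$ in the same way.
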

\begin{proof}
We prove by contradiction. Suppose $||\vz||_1 < \alpha$, consider the following perturbation, $\vz + \eps \mathbbm{1}$. Then we have
\begin{align*}
h(\vz + \eps \mathbbm{1}) - h(\vz) &=  \sum c_S e^{\frac{|\cup_{i \in S}S_i|}{2} + \sum_{p \in \cup_{i \in S}S_i}z_p}(e^{\eps|\cup_{i \in S}S_i|} - 1) - n\lambda \eps - n\gamma \eps(2 ||\vz||_1 + \eps)\\
&> \sum c_S e^{\frac{|\cup_{i \in S}S_i|}{2}}|\cup_{i \in S}S_i|\eps- n\lambda \eps - n\gamma \eps(2 \alpha + 1)
\end{align*}
\end{proof}
For given $\lambda$ there is a direction of improvement giving a contradiction that this is the local maxima. Combining the above, we have that we can choose $\lambda,\gamma = \poly(n, 1/\eps, s)$ where $s$ is a paramater that depends on structure of $f$ such that at any local maxima there exists $i$ such that for all $j \in S_i$, $z_j \geq 1$ and for all $k \not\in \cup_{j \in S_i}$, $z_k \leq \eps$. 

\subsection{Proof of Theorem \ref{thm:even}}
 Let function $f = \sum_{S \subseteq [n]}c_S\prod_{i \in S} u((\vw_i^*)^T x)$ for orthonormal $\vw_i^*$. WLOG, assume $\vw_i^* = \ve_i$. Let $u(x) = \sum_{i=0}^{\infty} \hat{u}_{2i} h_{2i}(x)$ where $h_i$ are hermite polynomials and $u^\prime(x) = u(x) - \hat{u}_0 = \sum_{i=1}^{\infty} \hat{u}_{2i} h_{2i}(x)$. This implies $\E[u^\prime(x)] = 0$. Observe that,
\begin{align*}
\prod_{i \in S} u(x_i) = \prod_{i \in S} (u^\prime(x_i) + \hat{u_0}) = \sum_{k=0}^{|S|} \hat{u}_0^{|S| - k} \sum_{S^\prime \subseteq S: |S^\prime| = k} \prod_{i \in S^\prime} u^\prime(x_i).
\end{align*}
Let us consider correlation with $h_4(\vz^T\vx)$. This above can be further simplified by observing that when we correlate with $h_4$, $\prod_{i \in S^\prime} u^\prime(x_i) h_4(\vz^T\vx) = 0$ for $|S^\prime| \geq 2$. Observe that $h_4(\vz^T\vx) = \sum_{d_1, \ldots, d_n \in [4]:\sum d_i \leq 4} c(d_1, \ldots, d_n)\prod h_{d_i}(x_i)$ for some coefficients $c$ which are functions of $\vz$. Thus when we correlate $\prod_{i \in S^\prime} u^\prime(x_i) h_4(\vz^T\vx)$ for $|S^\prime| \geq 3$ then we can only get a non-zero term if we have at least $h_{2k}(x_i)$ with $k \geq 1$ for all $i \in S^\prime$. This is not possible for $|S^\prime| \geq 3$, hence, these terms are 0. Thus,
\begin{align*}
\E\left[\prod_{i \in S} u(x_i)h_4(\vz^T\vx)\right] = \sum_{k=0}^{2} \hat{u}_0^{|S| - k} \sum_{S^\prime \subseteq S: |S^\prime| = k} \E\left[\prod_{i \in S^\prime} u^\prime(x_i)h_4(\vz^T\vx)\right].
\end{align*}

Lets compute these correlations.
\begin{align*}
&\E\left[u^\prime(x_i) h_4(\vz^T\vx)\right]\\
&= \E\left[\sum_{p >0}\hat{u}_{2p}h_{2p}(x_i) h_4( z_i x_i+ z_{-i}^T\vx_{-i})\right]\\
&= \frac{1}{4}\sum_{p>0}\hat{u}_{2p} \E\left[ h_{2p}(x_i) \sum_{k=0}^4 {4 \choose k} h_{4 - k}( z_i x_i\sqrt{2}) h_k (z_{-ij}^T\vx_{-ij}\sqrt{2})\right]\\
&= \frac{1}{4}\sum_{p>0}\hat{u}_{2p}\E\left[h_{2p}(x_i) \left( h_{4}( z_i x_i\sqrt{2}) + 6  h_{2}( z_i x_i\sqrt{2}) (2||z_{-i}||^2 - 1) + 3(2||z_{-i}||^2 - 1)^2\right)\right]\\
&= \frac{1}{4}\left( \hat{u}_4\alpha(4, 4,  z_i \sqrt{2})+ \hat{u}_2\alpha(4, 2,  z_i \sqrt{2}) + 6  \hat{u}_2\alpha(2,2, z_i\sqrt{2}) (2||z_{-i}||^2 - 1)\right)\\
&= \frac{1}{4}\left(4\hat{u}_4 z_i^4 + 12\hat{u}_2 z_i^2 (2  z_i^2 - 1) + 6 \hat{u}_2 (2 z_i^2) (2||z_{-i}||^2 - 1)\right)\\
&= \hat{u}_4 z_i^4 + 3 \hat{u}_2 z_i^2 (2||\vz||^2 - 1)
\end{align*}
\begin{align*}
&\E\left[u^\prime(x_i) u^\prime(x_j) h_4(\vz^T\vx)\right]\\ &= \E\left[\sum_{p,q >0}\hat{u}_{2p} \hat{u}_{2q} h_{2p}(x_i) h_{2q}(x_j) h_4( z_i x_i + z_j x_j + z_{-ij}^T\vx_{-ij})\right]\\
&= \frac{1}{4}\sum_{p,q >0}\hat{u}_{2p} \hat{u}_{2q} \E\left[ h_{2p}(x_i) h_{2q}(x_j) \sum_{k=0}^4 {4 \choose k} h_{4 - k}(( z_i x_i + z_j x_j)\sqrt{2}) h_k (z_{-ij}^T\vx_{-ij}\sqrt{2})\right]\\
&= \frac{1}{4}\sum_{p,q >0}\hat{u}_{2p} \hat{u}_{2q}\E\left[  h_{2p}(x_i) h_{2q}(x_j) \left( h_{4}(( z_i x_i + z_j x_j)\sqrt{2}) + 6  h_{2}(( z_i x_i + z_j x_j)\sqrt{2}) (2||z_{-ij}||^2 - 1)\right.\right.\\
&\left.\left.\quad + 3(2||z_{-ij}||^2 - 1)^2\right)\right]\\
&= \underbrace{\frac{1}{16}\sum_{p,q}\hat{u}_p \hat{u}_q \sum_{k=0}^4 {4 \choose k}\E\left[  h_{2p}(x_i) h_{2q}(x_j) h_{4 - k}(2 z_i x_i) h_k (2z_j x_j)\right]}_{\circled{1}} \\
&\quad + \underbrace{\frac{3}{4}(2||z_{-ij}||^2 - 1)\sum_{p,q<0}\hat{u}_{2p} \hat{u}_{2q}\sum_{k=0}^2 {2 \choose k}\E\left[  h_{2p}(x_i) h_{2q}(x_j)  h_{2 - k}(2 z_i x_i) h_k (2z_j x_j)\right]}_{\circled{2}}
\end{align*}
We will compute $\circled{1}$ and $\circled{2}$:
\begin{align*}
\circled{1} &= \frac{1}{16}\sum_{k=0}^4 {4 \choose k} \sum_{p,q >0}\hat{u}_{2p} \hat{u}_{2q} \E\left[ h_{2p}(x_i) h_{4 - k}(2 z_i x_i)\right]\E\left[ h_{2q}(x_j)h_k (2z_j x_j)\right]\\
&= \frac{1}{16}\sum_{k=0}^4 {4 \choose k} \sum_{p,q>0}\hat{u}_{2p} \hat{u}_{2q}\alpha(4-k, 2p, 2 z_i)\alpha(k, 2q, 2z_j)\\
&= \frac{3}{8}\hat{u}_2^2\alpha(2, 2, 2 z_i)\alpha(2, 2, 2z_j) \\
&= 6 \hat{u}_2^2  z_i^2 z_j^2
\end{align*}
Similarly,
\begin{align*}
\circled{2} &= \frac{3}{4}(2||z_{-ij}||^2 - 1)\sum_{p,q>0}\hat{u}_{2p} \hat{u}_{2q}\sum_{k=0}^2 {2 \choose k}\E\left[ h_{2p}(x_i) h_{2q}(x_j)  h_{2 - k}(2 z_i x_i) h_k (2z_j x_j)\right]\\
&= \frac{3}{4}(2||z_{-ij}||^2 - 1)\sum_{k=0}^2 {2 \choose k} \hat{u}_2 \hat{u}_2\alpha(2-k, 2, 2 z_i)\alpha(k, 2, 2z_j) = 0.
\end{align*}
Combining, we get
\begin{align*}
\E\left[u^\prime(x_i) u^\prime(x_j) h_4(\vz^T\vx)\right] = 6 \hat{u}_2^2  z_i^2 z_j^2.
\end{align*}

Further, taking correlation with $f$, we get:
\begin{align*}
\E&\left[f(x) h_4(\vz^T\vx)\right] \\
&= \sum_{S \subseteq [n]}c_S \sum_{k=0}^{2} \hat{u}_0^{|S| - k} \sum_{S^\prime \subseteq S: |S^\prime| = k} \E\left[\prod_{i \in S^\prime} u^\prime(x_i)h_4(\vz^T\vx)\right]\\
&= \sum_{S \subseteq [n]}c_S \hat{u}_0^{|S|-2} \left(\hat{u}_0 \sum_{i \in S} (\hat{u}_4  z_i^4 + 3\hat{u}_2  z_i^2(2||\vz||^2 - 1)) + 6 \hat{u}_2^2 \sum_{j\neq k \in S}  z_j^2 z_k^2\right) + \text{constant} \\
&=\sum_{i=1}^n\alpha_i  z_i^4 + (2||\vz||^2 - 1)\sum_{i=1}^n\beta_i  z_i^2 + \sum_{1 \leq i \neq j \leq n}\gamma_{ij} z_j^2 z_k^2  + \text{constant}\\
&=\sum_{i=1}^n\alpha_i  z_i^4 + \sum_{i=1}^n\beta_i  z_i^2 + \sum_{1 \leq i \neq j \leq n}\gamma_{ij} z_j^2 z_k^2  + \text{constant}
\end{align*}
where $\alpha_i = \hat{u}_4\sum_{S^\prime \subseteq [n]| i \in S^\prime}c_{S^\prime} \hat{u}_0^{|S^\prime|-1} $, $\beta_i = 3 \hat{u}_2\sum_{S^\prime \subseteq [n]| i \in S^\prime} c_{S^\prime} \hat{u}_0^{|S^\prime|-1}$ and $\gamma_{ij} = 6 \hat{u}_2^2\sum_{S^\prime \subseteq [n]| i,j \in S^\prime}c_{S^\prime} \hat{u}_0^{|S^\prime|-2} $.

If $\gamma_{ij} < \alpha_i + \alpha_j$ for all $i,j$ then the local maximas of the above are exactly $\ve_i$. To show that this holds, we prove by contradiction. Suppose there is a maxima where $z_i, z_j \neq 0$. Then consider the following second order change $z_i^2 \rightarrow z_i^2 + s \epsilon$ and $z_j^2 \rightarrow z_j^2 - s\epsilon$ where $\epsilon \leq \min{z_i^2, z_j^2}$ and $s$ is 1 with probability 0.5 and -1 otherwise. Observe that the following change does not violate the constraint and in expectation affects the objective as follows:
\begin{align*}
\Delta &= \E_s\left[\alpha_i (2s\epsilon z_i^2 + \epsilon^2) + \alpha_j (- 2s\epsilon z_i^2 + \epsilon^2) + \beta_i s \epsilon - \beta_j s \epsilon + \gamma_{ij} (s \epsilon z_j^2 - s\epsilon z_i^2 - \epsilon^2)\right]\\
& = (\alpha_i + \alpha_j - \gamma_{ij})\epsilon^2 > 0
\end{align*}
Thus there is a direction in which we can improve and hence it can not be a maxima.

\section{Modular Learning by Divide and Conquer}\label{sec:deepcut}

Finally we point out how such high threshold layers could potentially facilitate the learning of deep functions $f$ at any depth, not just at the lowest layer. Note that essentially for Lemma~\ref{lem:boundf} to hold, outputs $X_1,.,X_d$ needn't be present after first layer but they could be at any layer.   If there is any cut in the network that outputs $X_1,...X_d$, and if the upper layer functions can be modeled by a polynomial $P$, then again assuming the inputs $X_i$ have some degree of independence one can get something similar to Lemma~\ref{lem:boundf} that bounds the non-linear part of $P$. The main property we need is $E[\Pi_{i \in S} X_i]/E[X_j] < \mu^{|S|-1}$ for a small enough $\mu = 1/\poly(d)$ which is essentially replaces Lemma~\ref{lem:ubound}. Thus high threshold layers can essentially reduce the complexity of learning deep networks by making them roughly similar to a network with lower depth. Thus such a cut essentially divides the network into two simpler parts that can be learned separately making it amenable to a divide and conquer approach. If there a robust algorithm to learn the lower part of the network that output $X_i$, then by training the function $f$ on that algorithm would recover the lower part of the network, having learned which one would be left with learning the remaining part $P$ separately.

\begin{remark} If there is a layer of high threshold nodes at an intermediate depth $l$, $u$ is sign function, if outputs $X_i$ at depth $l$ satisfy the following type of independence property: $E[\Pi_{i \in S} X_i]/E[X_j] < \mu^{|S|-1}$ for a small enough $\mu = 1/\poly(d)$, if there is a robust algorithm to learn $X_i$ from $\sum c_i X_i$ that can tolerate noise, then one can learn the nodes $X_i$, from a function $P(X_1,..,X_d)$ 
\end{remark}

\end{document}